\newif\ifcomment
\newif\ifrevise
\theoremstyle{plain}
\newtheorem{assumption}{Assumption}
\newtheorem{theorem}{Theorem}
\newtheorem{proposition}{Proposition}
\newtheorem{lemma}{Lemma}
\theoremstyle{remark}
\newtheorem{remark}{\textit{Remark}}
\newtheorem*{remark*}{Remark}
\newcommand{\norm}[1]{\left\lVert#1\right\rVert}
\newcommand{\normbig}[1]{\big\lVert#1\big\rVert}
\newcommand{\E}[1]{\mathbb{E}\left[#1\right]}
\newcommand{\givenplain}{\;|\;}
\newcommand{\givenmiddle}{\;\middle|\;}
\newcommand{\R}{\mathbb{R}}
\newcommand{\abs}[1]{\left\lvert#1\right\rvert}
\newcommand{\absbigg}[1]{\bigg\lvert#1\bigg\rvert}
\newcommand{\mx}{\bm{x}}
\newcommand{\sspa}{\mathbb{S}}
\newcommand{\aspa}{\mathbb{A}}
\newcommand{\sumN}{\sum_{i\in[N]}}
\newcommand{\sumell}{\sum_{\ell=0}^\infty}
\newcommand{\supell}{\sup_{\ell\in\mathbb{N}}}
\newcommand{\sumk}{\sum_{k\in[K]}}
\newcommand{\maxk}{\max_{k\in[K]}}
\newcommand{\maxmp}{\max_{m'\in[0,1]_N\colon m'\leq m}}
\newcommand{\rel}{\textup{rel}} %
\newcommand{\rmax}{r_{\max}}
\newcommand{\cmax}{c_{\max}}
\newcommand{\ravg}{R}
\newcommand{\rrel}{\ravg^\rel}
\newcommand{\pibar}{{\sysbar{\pi}}}
\newcommand{\pibs}{{\pibar^*}}
\newcommand{\sysbar}[1]{\bar{#1}}
\newcommand{\md}{m_d}
\newcommand{\Md}[1]{N\md(x)}
\newcommand{\rhov}{\rho_V}
\newcommand{\Kv}{K_V}
\newcommand{\Ctau}{C_\tau}
\newcommand{\tmix}{\tau}
\newcommand{\hid}{h_{\textnormal{ID}}}
\newcommand{\ID}{\textnormal{ID}}
\newcommand{\finite}{\textnormal{finite}}
\newcommand{\newID}{\text{newID}}
\newcommand{\cardA}{\lvert\aspa\rvert}
\newcommand{\cardS}{\lvert\sspa\rvert}
\newcommand{\conf}{\text{conf}}
\newcommand{\mono}{\text{mono}}
\newcommand{\cov}{\text{cov}}
\newcommand{\rembud}{\overline{C}^*} %
\newcommand{\costIDreassign}{\delta} %
\newcommand{\gsizeIDreassign}{d} %
\def\mX{{\bm{X}}}
\DeclareMathAlphabet{\mathsfit}{\encodingdefault}{\sfdefault}{m}{sl}
\SetMathAlphabet{\mathsfit}{bold}{\encodingdefault}{\sfdefault}{bx}{n}
\newcommand{\EE}{\mathbb{E}}
\newcommand{\RR}{\mathbb{R}}
\newcommand{\cA}{\mathcal{A}}
\newcommand{\cF}{\mathcal{F}}
\newcommand{\cI}{\mathcal{I}}
\newcommand{\cD}{\mathcal{D}}
\newcommand{\innerproduct}[2]{\left\langle #1, #2 \right\rangle}
\newcommand{\cM}{\mathcal{M}}
\newcommand{\bracket}[1]{\left(#1\right)}
\newcommand{\mbracket}[1]{\left[#1\right]}
\newcommand{\sets}[1]{\left\{ #1 \right\}}
\newcommand{\PP}{\mathbb{P} }
\newif\ifsup\supfalse
\DeclareMathOperator*{\argmax}{argmax}
\title{Projection-based Lyapunov method for fully heterogeneous weakly-coupled MDPs}
\renewcommand{\thefootnote}{\fnsymbol{footnote}}
\author{%
Xiangcheng Zhang$^{1}\footnotemark[1] ~~\footnotemark[2]$ \quad Yige Hong$^{2}\footnotemark[1]$ \quad  Weina Wang$^{2}$\medskip\\
$^1$Weiyang College, Tsinghua University  \\ $^2$ Computer Science Department, Carnegie Mellon University \\
\texttt{xc-zhang21@mails.tsinghua.edu.cn}\\
\texttt{\{yigeh,weinaw\}@cs.cmu.edu}
}
\begin{document}

\footnotetext[1]{These authors contributed equally to this work.}
\footnotetext[2]{Work done during a visit at Carnegie Mellon University.}
\renewcommand{\thefootnote}{\arabic{footnote}}
\setcounter{footnote}{0}

\renewcommand{\thefootnote}{\arabic{footnote}}

\date{}
\maketitle

\begin{abstract}%
Heterogeneity poses a fundamental challenge for many real-world large-scale decision-making problems but remains largely understudied.
In this paper, we study the \emph{fully heterogeneous} setting of a prominent class of such problems, known as weakly-coupled Markov decision processes (WCMDPs).
Each WCMDP consists of $N$ arms (or subproblems), which have distinct model parameters in the fully heterogeneous setting, leading to the curse of dimensionality when $N$ is large.
We show that, under mild assumptions, an efficiently computable policy achieves an $O(1/\sqrt{N})$ optimality gap in the long-run average reward per arm for fully heterogeneous WCMDPs as $N$ becomes large.
This is the \emph{first asymptotic optimality result} for fully heterogeneous average-reward WCMDPs.
Our main technical innovation is the construction of projection-based Lyapunov functions that certify the convergence of rewards and costs to an optimal region, even under full heterogeneity.
\end{abstract}

\tableofcontents

\section{Introduction}
\label{sec:introduction}

Heterogeneity poses a fundamental challenge for many real-world decision-making problems, where each problem consists of a large number of interacting components.
However, despite its practical significance, heterogeneity remains largely understudied in the literature.
In this paper, we study \emph{heterogeneous} settings of a prominent class of such problems, known as weakly-coupled Markov decision processes (WCMDPs) \citep{Haw_03}.
A WCMDP consists of $N$ \emph{arms} (or \emph{subproblems}), where each arm itself is a Markov decision process (MDP).
In a heterogeneous setting, the MDPs could be distinct. 
At each time step, the decision-maker selects an action for each arm, which affects the arm's transition probabilities and reward, and then the arms make state transitions independently.
However, these actions are subject to a set of global \emph{budget constraints}, where each constraint limits one type of total cost across all arms at each time step.
The objective is to find a policy that maximizes the long-run \emph{average reward} over an infinite time horizon.
We focus on the \emph{planning} setting, where all the model parameters (reward function, cost functions, budget, and transition kernel) are known. \nocite{ZhaHonWan_25_het_wcmdp_nips_ver_on_arxiv}

WCMDPs have been used to model a wide range of applications, including online advertising \citep{boutilier2016budget,zhou2023rl}, job scheduling \citep{yu2018deadline}, healthcare \citep{biswas2021learning}, surveillance \citep{villar2016indexability}, and machine maintenance \citep{GlaMitAns_05_rb_repair}.
A faithful modeling of these applications calls for \emph{heterogeneity}.
For instance, in \citep{biswas2021learning}, arms are beneficiaries of a health program and they could react to interventions differently; in \citep{villar2016indexability}, arms are targets of surveillance who have different locations and probabilities to be exposed; in \citep{GlaMitAns_05_rb_repair}, arms are machines that could require distinct repair schedules.

Although heterogeneity is crucial in the modeling of these applications, most existing work on average-reward WCMDPs establishes asymptotic optimality only for the homogeneous setting where all arms share the same set of model parameters
\citep{WebWei_90,Ver_16_verloop,GasGauYan_23_exponential,GasGauYan_23_whittles,HonXieChe_23,HonXieChe_24,hong2024exponential,Yan_24,HodGla_15,GolAvr_24_wcmdp_multichain}.
Only a few exceptions \citep{Ver_16_verloop,XioWanLi_22,HonXieChe_23} address heterogeneity, but in highly specialized settings.
Among these, a common approach to handle heterogeneity is to consider the \emph{typed heterogeneous} setting, where the $N$ arms are divided into a constant number of types as $N$ scales up, with each type having distinct model parameters.
While heterogeneous WCMDPs have been studied under the finite-horizon total-reward and discounted-reward criteria, these results do not extend to the average-reward setting we consider. 
We review related work in more detail at the end of this section and also in Appendix~\ref{app:additional-related-work}.

The key distinction between the homogeneous (or typed heterogeneous) setting and the fully heterogeneous setting is whether the arms can be divided into a \emph{constant} number of homogeneous groups.
In the former, the system dynamics depends only on the fraction of arms in each state in each homogeneous group.
Thus, the effective dimension of the state space is \emph{polynomial} in $N$.
In contrast, in the fully heterogeneous setting, the state space grows \emph{exponentially} in $N$, making the problem truly high-dimensional.

\paragraph{Our contribution.}
In this paper, we study \emph{fully heterogeneous} WCMDPs.
We propose a policy we call the \emph{ID policy with reassignment}, which generalizes the ID policy in the literature \citep{HonXieChe_24}.
In our policy, we first perform an ID reassignment algorithm to reorder the arms, which ensures proper arm prioritization during policy execution.
We then run a variant of the ID policy adapted to handle heterogeneity, which consists of two phases.
The first phase is a pre-processing phase, where we compute an \emph{optimal single-armed policy} for each arm (denoted as $\pibs_i$ for the $i$-th arm) that prescribes the \emph{ideal action} the arm would take at each state.
The second phase is the real-time phase.
At each time step, the policy iterates over the arms according to their reassigned IDs, and it lets as many arms as possible follow their respective ideal actions while satisfying the budget constraints. 
Unlike the original ID policy, which has only one optimal single-armed policy due to homogeneous arms, our policy computes $N$ optimal single-armed policies, one for each arm.  
Our proposed policy is efficiently computable, with computational complexity polynomial in~$N$.

We prove that the proposed ID policy with reassignment achieves an $O(1/\sqrt{N})$ optimality gap under mild assumptions as the number of arms $N$ becomes large.
Here, the optimality gap refers to the difference between the long-run average reward per arm under our policy and that under the optimal policy.
\emph{This is the first result establishing asymptotic optimality for fully heterogeneous average-reward WCMDPs.}

We remark that the original ID policy was designed for a special case of WCMDPs known as restless bandits, and it is for the homogeneous setting.
While the generalization in our proposed policy is natural, identifying the appropriate generalization and establishing its optimality gap in the fully heterogeneous setting are technically challenging and require new theoretical approaches.

\paragraph{Technical novelty.}
The main technical innovation of the paper is the introduction of a novel Lyapunov function for fully heterogeneous WCMDPs. 
Specifically, to prove the asymptotic optimality of a policy, a key step is to show that the system state is globally attracted to an \emph{optimal region} where most arms can follow the ideal actions generated by their respective optimal single-armed policies $\pibs_i$'s. 
In the homogeneous setting, we can prove such convergence using state aggregation techniques that rely on the symmetry of arms.
However, in the heterogeneous setting, states of different arms cannot be aggregated since arms are no longer symmetric.
Our technique is to \emph{project} arm states onto a set of carefully selected feature vectors, and define the Lyapunov function based on these projections. 
These feature vectors encode the minimal amount of information needed to evaluate the relevant functions of the system state (e.g., instantaneous reward or cost) and predict their future expectations.  
This projection-based Lyapunov function provides a principled way to measure deviations of the system state from the optimal region in a fully heterogeneous setting. 
A more detailed discussion of this approach can be found in \Cref{sec:result-tech-overview}.

Beyond WCMDPs, our techniques have the potential to be applied to more general heterogeneous large stochastic systems.
Heterogeneity has been a topic of strong interest in these systems, but it is known to be a challenging problem with limited theoretical results.
Only recently have there been notable breakthroughs.
\citep{AllGas22_het,AllGas22_graphon} extended the popular mean-field analysis to a class of heterogeneous large stochastic systems for the first time, but the results are only for transient distributions.
Another line of work \citep{ZhaMukWu_24_data_loc_het,ZhaMuk_24_rate_mat_prun} studied heterogeneous load-balancing systems.
They first analyzed the transient distributions and then used interchange-of-limits arguments to extend the results to steady state.
Our method provides a more direct framework for steady-state analysis and has the potential to generalize to a broader range of heterogeneous stochastic systems.

\paragraph{Related work.}
WCMDPs have been extensively studied with a rich body of literature.
Here we briefly overview the most relevant work and refer the reader to Appendix~\ref{app:additional-related-work} for a detailed survey.

We first focus on the \emph{average-reward} criterion.
As mentioned earlier, most existing work considers the \emph{homogeneous setting}.
Early work on WCMDPs primarily focuses on a special case known as the \emph{restless bandit (RB)} problem, where each arm's MDP has a binary action space (active and passive actions) and there is only one budget constraint that limits the total number of active actions across all arms at each time step.
The seminal work by \citet{Whi_88_rb} introduced the RB problem and the celebrated Whittle index policy, which was later shown to achieve an $o(1)$ optimality gap as $N\to\infty$ under a set of conditions \citep{WebWei_90}.
Subsequent work on RBs has focused on designing policies that achieve asymptotic optimality under more relaxed conditions \citep{Ver_16_verloop,HonXieChe_23,HonXieChe_24,Yan_24}, as well as improving the optimality gap to $O(1/\sqrt{N})$ \citep{HonXieChe_23,HonXieChe_24} or $O(\exp(-cN))$ \citep{GasGauYan_23_exponential,GasGauYan_23_whittles,hong2024exponential}.
Among these papers, \citep{Ver_16_verloop,HonXieChe_23} address heterogeneous RBs. However, \citep{Ver_16_verloop} focuses on the \emph{typed heterogeneous} setting, where the $N$ arms are divided into a constant number of types as $N\to\infty$.  
The paper \citep{HonXieChe_23} includes an extension to the fully heterogeneous setting.
However, for their result to yield asymptotic optimality, there need to be further assumptions on the orders of the so-called synchronization times in the paper.
The policies in both papers cannot be straightforwardly extended to general WCMDPs, which have multiple actions, multiple budget constraints, and state-dependent cost functions.

Beyond RBs, work on general average-reward WCMDPs is scarce.
The closest to ours are \citep{HodGla_15, Ver_16_verloop,XioWanLi_22}, which studied WCMDPs with a single budget constraint and established $o(1)$ optimality gaps, but again in the homogeneous setting \citep{HodGla_15} or the \emph{typed heterogeneous} setting \citep{Ver_16_verloop,XioWanLi_22}. 
More recently, \citep{GolAvr_24_wcmdp_multichain} proved the first $o(1)$ optimality gap result for WCMDPs with general budget constraints, but in the \emph{homogeneous} setting.

Under the \emph{finite-horizon total-reward} or \emph{discounted-reward} criteria, there has been more work on heterogeneous settings, including both the typed heterogeneous setting \citep{DaeChoGri_23,GhoNagJaiTam_23_finite_discount} and, more recently, the fully heterogeneous setting \citep{BroSmi_19_rb,BroZha_22,BroZha_23,Zhang24_het}.
However, the optimality gap in these papers generally grows \emph{super-linearly} with the (effective) time horizon, except under restrictive conditions.
Consequently, it is difficult to extend these results to the average-reward setting and still achieve asymptotic optimality.

Although our work focuses on the planning setting where all model parameters are known, there has been growing interest in developing reinforcement learning algorithms for the learning setting with unknown parameters \citep{biswas21index_healthcare,NakGanHsi_21,killian2021q_multi,NakHou_22,XioWanLi_22,killian2022restless,el2023weakly,robledo2022qwi,XiongLi23,AvrBor_22}.
Many of these approaches rely on well-designed planning policies as a foundation to achieve learning efficiency.
In this context, our results can serve as an important building block for developing model-based learning algorithms for fully heterogeneous WCMDPs.

\paragraph{General notation.}
Let $\mathbb{R}$, $\mathbb{N}$, and $\mathbb{N}_+$ denote the sets of real numbers, nonnegative integers, and positive integers, respectively.
Let $[N]\triangleq\{1,2,\dots,N\}$ for any $N\in \mathbb{N}_+$ and $[n_1:n_2]\triangleq\{n_1,n_1+1,\dots,n_2\}$ for $n_1,n_2\in\mathbb{N}_+$ with $n_1\le n_2$.
Let $[0,1]_N=\{i/N\colon i\in\mathbb{N}, 0\le i/N\le 1\}$, the set of integer multiples of $1/N$ in $[0,1]$.
For a matrix ${A}\in\RR^{d\times d}$, we denote its operator norm as $\norm{{A}}_p=\sup_{x\neq \mathbf{0}} \norm{{A}x}_{p} / \norm{x}_p$. 
We use boldface letters to denote matrices, and regular letters to denote vectors and scalars. We write $\R^{\sspa}$ for the set of real-valued vectors indexed by elements of $\sspa$, or equivalently, the set of real-valued functions on $\sspa$; for each $v\in \R^{\sspa}$, let $v(s)$ to denote its element corresponding to $s\in\sspa$.

\section{Problem setup}\label{sec:background}
We consider a weakly-coupled Markov decision process (WCMDP) that consists of $N$ arms.
Each arm has an ID $i\in[N]$ and is associated with a smaller MDP denoted as $\cM_i=\left(\sspa, \aspa,\PP_i, r_i, (c_{k,i})_{k\in[K]}\right)$.
Here $\sspa$ and $\aspa$ are the state space and the action space, respectively, both assumed to be finite; $\PP_i$ describes the transition probabilities with $\PP_i(s'\mid s,a)$ being the transition probability from state $s$ to state $s'$ when action $a$ is taken.
The state transitions of different arms are independent given the actions.
When arm~$i$ is in state $s$ and we take action $a$, a reward $r_i(s,a)$ is generated, as well as $K$ types of costs $c_{k,i}(s,a), k\in[K]$. 
We assume that the costs are nonnegative, i.e., $c_{k,i}(s,a)\ge 0$ for all $i\in\mathbb{N}_+,k\in[K],s\in\sspa,$ and $a\in\aspa$.
Note that we allow the arms to be \emph{fully heterogeneous}, i.e., the $\cM_i$'s can be \emph{all distinct}.

When taking an action for each arm in this $N$-armed system, we are subject to budget constraints.
Specifically, suppose each arm $i$ is in state $s_i$.
Then the actions, $a_i$'s, should satisfy the constraints:
\begin{align}\label{eq:cost}
    \sum_{i\in[N]} c_{k,i}(s_i, a_i)\le \alpha_k N,\quad \forall k\in[K],
\end{align}
where each $\alpha_k>0$ is a constant independent of $N$, and $\alpha_k N$ is referred to as the \emph{budget} for type-$k$ cost.
We assume that there exists an action $0\in\aspa$ that does not incur any type of cost for any arm at any state, i.e., $c_{k,i}(s,0)=0$ for all $k\in[K],i\in[N],s\in\sspa$.
This assumption guarantees that there always exist valid actions (e.g., taking action $0$ for every arm) regardless of the states of the arms.

\paragraph{Policy and system state.}
A policy $\pi$ for the $N$-armed problem specifies the action for each of the $N$ arms, in a possibly history-dependent way.
Under policy $\pi$, let $S_{i,t}^\pi$ denote the state of the $i$th arm at time $t$, and we refer to $\bm{S}^\pi_t \triangleq (S_{i,t}^\pi)_{i\in[N]}$ as the \emph{system state}. Similarly,
let $A_{i,t}^\pi$ denote the action applied to arm $i$ at time $t$, and we refer to $\bm{A}^\pi_t \triangleq (A_{i,t}^\pi)_{i\in[N]}$ as the \emph{system action}. 
In this paper, we also use an alternative representation of the system state, denoted as $\mX_t^\pi$ and defined as follows.
Let $X_{i,t}^\pi=(X_{i,t}^\pi(s))_{s\in\sspa}\in\mathbb{R}^{|\sspa|}$ be a row vector where the entry corresponding to state $s$ is given by $X_{i,t}^{\pi}(s)=\mathbbm{1}\{S_{i,t}^\pi=s\}$; i.e., $X_{i,t}^\pi$ is a one-hot row vector whose $s$'s entry is $1$ if $S_{i,t}^\pi=s$ and is $0$ otherwise.
Then let $\mX_t^\pi$ be an $N\times|\sspa|$ matrix whose $i$th row is $X_{i,t}^\pi$.
It is easy to see that $\mX_t^\pi$ contains the same information as $\bm{S}^\pi_t$, and we refer to both of them as the system state.
In this paper, we often encounter vectors like $X_{i,t}^\pi=(X_{i,t}^\pi(s))_{s\in\sspa}$, whose entries correspond to different states in $\sspa$.
For such vectors, say $u$ and $v$, we use the inner product to write a sum for convenience $\innerproduct{u}{v}\triangleq \sum_{s\in\sspa}u(s)v(s)$.
We sometimes omit the superscript $\pi$ when it is clear from context.

\paragraph{Maximizing average reward.}
Our objective is to maximize the long-run time-average reward subject to the budget constraints.
To be more precise, we follow the treatment for maximizing average reward in \citep{Put_05}. 
For any policy $\pi$ and an initial state $\bm{S}_0$ of the $N$-armed system, 
consider the \emph{limsup} average reward $R^+(\pi,\bm{S}_0)$ and the \emph{liminf} average  $R^-(\pi,\bm{S}_0)$, defined as $R^+(\pi,\bm{S}_0)=\lim\sup_{T\to\infty}\frac{1}{T}\sum_{t=0}^{T-1}\frac{1}{N}\sum_{i\in[N]}\E{r_i(S_{i,t}^\pi, A_{i,t}^\pi)}$ and $R^-(\pi,\bm{S}_0)=\lim\inf_{T\to\infty}\frac{1}{T}\sum_{t=0}^{T-1}\frac{1}{N}\sum_{i\in[N]}\E{r_i(S_{i,t}^\pi, A_{i,t}^\pi)}$. 
If $R^+(\pi,\bm{S}_0)=R^-(\pi,\bm{S}_0)$, then the average reward of policy $\pi$ under the initial condition $\bm{S}_0$ exists and is defined as
\begin{equation}
    R(\pi,\bm{S}_0) = R^+(\pi,\bm{S}_0) = R^-(\pi,\bm{S}_0)=\lim_{T\to\infty}\frac{1}{T}\sum_{t=0}^{T-1}\frac{1}{N}\sum_{i\in[N]}\E{r_i(S_{i,t}^\pi, A_{i,t}^\pi)}.
\end{equation}
Note that these reward notions divide the total reward from all arms by the number of arms, $N$, measuring the reward \emph{per arm}.
The WCMDP problem is to solve the following optimization problem:
\begin{subequations}\label{eq:WCMDP}
\begin{align}
    \label{eq:N-arm-formulation} 
    \underset{\text{policy } \pi}{\text{maximize}} &\quad R^-(\pi,\bm{S}_0)\\
    \text{subject to}  
    &\mspace{12mu}  \sum_{i\in[N]} c_{k,i}(S_{i,t}^\pi, A_{i,t}^\pi)\le \alpha_k N,\quad \forall k\in[K],\forall t\ge 0. \label{eq:hard-budget-constraint}
\end{align}
\end{subequations}
Let the optimal value of this problem be denoted as $R^*(N,\bm{S}_0)$.
Note that since the WCMDP is an MDP with finite state and action space, if we replace the $R^-(\pi,\bm{S}_0)$ in the objective \eqref{eq:N-arm-formulation} with $R^+(\pi,\bm{S}_0)$, the optimal value stays the same \citep[Proposition 9.1.6]{Put_05}. 

\paragraph{Asymptotic optimality.}
Recall that exactly solving the WCMDP problem is PSPACE-hard \citep{PapTsi_99_pspace}. 
In this paper, our goal is to design a policy $\pi$ that is \emph{efficiently computable} and \emph{asymptotically optimal} as $N\to\infty$, with the following notion for asymptotic optimality.
For any policy $\pi$, we define its \emph{optimality gap} as $R^*(N,\bm{S}_0)-R^-(\pi,\bm{S}_0)$.
We say the policy $\pi$ is \emph{asymptotically optimal} if 
\begin{equation}
    R^*(N,\bm{S}_0)-R^-(\pi,\bm{S}_0)=o(1) \quad \text{as} \quad N\to\infty.
\end{equation}
When we take this asymptotic regime as $N\to\infty$, we keep the number of constraints, $K$, as well as the budget coefficients, $\alpha_1,\alpha_2,\dots,\alpha_K$, fixed.
We assume that the reward functions and cost functions are uniformly bounded, i.e., $\sup_{i\in\mathbb{N}_+}\max_{s\in\sspa,a\in\aspa}|r_i(s,a)|\triangleq r_{\max}<\infty$
and
$\sup_{i\in\mathbb{N}_+}\max_{k\in[K],s\in\sspa,a\in\aspa}c_{k,i}(s,a)\triangleq c_{\max}<\infty$.
This notion for asymptotic optimality is consistent with that in the existing literature (e.g., \citep[Definition 4.11]{Ver_16_verloop}). We are interested in not only achieving asymptotic optimality but also characterizing the \emph{order} of the optimality gap.

In the remainder of this paper, we focus on stationary Markov policies, which are sufficient for achieving the optimal value because the WCMDP problem is an MDP with finite state and action spaces \citep[][Theorem 9.1.8]{Put_05}. 
Under any stationary Markovian policy, the long-run reward $R(\pi,\bm{S}_0)= R^+(\pi,\bm{S}_0) = R^-(\pi,\bm{S}_0)$ is always well-defined \citep[][Theorem 8.1.1]{Put_05}.

\paragraph{LP relaxation and an upper bound on optimality gap.}
We consider the linear program (LP) below, which will play a critical role in performance analysis and policy design:
\begin{subequations}\label{eq:lp}
\begin{align}
    R^{\rel}_N\triangleq&\underset{(y_i(s,a))_{i\in [N], s\in\sspa,a\in\aspa}}{\text{maximize}}\mspace{9mu}\frac{1}{N}\sum_{i\in[N]} \sum_{s\in\sspa,a\in\aspa} y_i(s,a)r_i(s,a)\\
    \label{eq:lp:budget-constraint}
    &\mspace{32mu}\text{subject to}\mspace{42mu}\frac{1}{N}\sum_{i\in[N]}\sum_{s\in\sspa,a\in\aspa} y_i(s,a)c_{k,i}(s,a)\leq \alpha_k,\;\forall k\in[K],\\
    \label{eq:lp:stationarity-constraint}
    &\mspace{130mu} \sum_{s'\in\sspa,a'\in\aspa} \PP_i(s\mid s',a')y_i(s',a')=\sum_{a\in\aspa}y_i(s,a),\;\forall s\in\sspa,\forall i\in[N],\\
    \label{eq:lp:probability-constraint}
   &\mspace{135mu}\sum_{s'\in\sspa,a'\in\aspa}y_i(s',a')=1, \; y_i(s,a)\geq 0, \; \forall s\in\sspa,  \forall a\in \aspa,\forall i\in[N].
\end{align}
\end{subequations}
Lemma~\ref{lem:lp upper bound} below establishes a connection between this LP and the WCMDP.
\begin{restatable}{lemma}{lprelaxation}\label{lem:lp upper bound}
The optimal value of any $N$-armed WCMDP problem is upper bounded by the optimal value of the corresponding linear program in \eqref{eq:lp}, i.e.,
    \begin{align*}
        R^*(N,\bm{S}_0)\le R_N^{\rel},\quad\forall N,\forall \bm{S}_0.
    \end{align*}
\end{restatable}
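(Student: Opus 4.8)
The plan is to use the standard occupancy-measure relaxation: I will show that every policy feasible for the WCMDP \eqref{eq:WCMDP} induces, through its long-run state-action frequencies, a feasible point of the LP \eqref{eq:lp} whose objective value equals that policy's average reward, so the LP optimum $R_N^{\rel}$ dominates. Fix any feasible $\pi$ and initial state $\bm{S}_0$, and for each arm $i\in[N]$ define the time-averaged occupancy measure
\[
\bar y_{i,T}(s,a) \triangleq \frac{1}{T}\sum_{t=0}^{T-1}\Probbig{S_{i,t}^\pi = s,\, A_{i,t}^\pi = a},\qquad s\in\sspa,\ a\in\aspa.
\]
Each $\bar y_{i,T}$ is a probability distribution on the finite set $\sspa\times\aspa$, so $(\bar y_{i,T})_{i\in[N]}$ ranges over a compact set. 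I would first choose a subsequence $T_n\to\infty$ along which the time-average in the definition of $R^-(\pi,\bm{S}_0)$ converges to $R^-(\pi,\bm{S}_0)$, and then, by compactness, pass to a further subsequence along which $\bar y_{i,T_n}\to y_i$ simultaneously for all $i\in[N]$. This limit $y=(y_i(s,a))$ is the candidate LP point.

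Next I would verify that $y$ is feasible for \eqref{eq:lp}. The nonnegativity and normalization constraints \eqref{eq:lp:probability-constraint} pass to the limit at once, since each $\bar y_{i,T}$ is a distribution. The key point for the stationarity constraints \eqref{eq:lp:stationarity-constraint} is that, although $\pi$ couples the arms through the budget, arm $i$ transitions using only its own kernel $\PP_i$ and its own state-action pair; hence for every $t$,
\[
\Probbig{S_{i,t+1}^\pi = s} = \sum_{s'\in\sspa,\,a'\in\aspa}\PP_i(s\mid s',a')\,\Probbig{S_{i,t}^\pi = s',\, A_{i,t}^\pi = a'}.
\]
Averaging this identity over $t=0,\dots,T_n-1$, the left-hand side telescopes to $\sum_{a\in\aspa}\bar y_{i,T_n}(s,a)$ up to boundary terms of order $O(1/T_n)$, while the right-hand side equals $\sum_{s',a'}\PP_i(s\mid s',a')\,\bar y_{i,T_n}(s',a')$; sending $T_n\to\infty$ gives exactly \eqref{eq:lp:stationarity-constraint}. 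For the budget constraints \eqref{eq:lp:budget-constraint}, the hard constraint \eqref{eq:hard-budget-constraint} holds surely at each $t$, so taking expectations and averaging over $t<T_n$ yields $\frac{1}{N}\sum_{i\in[N]}\sum_{s,a}c_{k,i}(s,a)\bar y_{i,T_n}(s,a)\le\alpha_k$, which survives the limit.

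Finally, since the rewards are bounded and $\bar y_{i,T_n}\to y_i$, the per-arm average reward along the subsequence converges to the LP objective evaluated at $y$; by the choice of subsequence it also converges to $R^-(\pi,\bm{S}_0)$, so $R^-(\pi,\bm{S}_0)=\frac{1}{N}\sum_{i\in[N]}\sum_{s,a}y_i(s,a)r_i(s,a)\le R_N^{\rel}$, where the inequality uses that $y$ is LP-feasible. Taking the supremum over all feasible $\pi$ gives $R^*(N,\bm{S}_0)\le R_N^{\rel}$. I expect the delicate part to be the limiting arguments rather than any single inequality: making the boundary terms in the telescoping identity vanish, selecting one subsequence that simultaneously realizes the reward liminf and the convergence of all $N$ occupancy measures, and justifying the per-arm marginal balance identity for a history-dependent, arm-coupling policy. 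The conceptual heart---and what makes full heterogeneity harmless for this bound---is exactly that the dynamics factorize across arms given the actions, so the single-arm flow-balance constraints \eqref{eq:lp:stationarity-constraint} hold marginally for each distinct $\PP_i$.
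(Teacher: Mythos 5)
Your proposal is correct, and its core is the same as the paper's: show that the long-run state-action frequencies of any feasible policy form a feasible point of the LP \eqref{eq:lp} whose objective equals the policy's average reward. The one genuine difference is how the two arguments make the occupancy measures well-defined. The paper first invokes the standard result that a stationary Markov policy attains $R^*(N,\bm{S}_0)$ (citing Puterman), after which the Ces\`aro limits $y_i^\pi(s,a)$ exist outright and the feasibility checks are immediate. You instead work with an arbitrary, possibly history-dependent feasible policy and extract a subsequence $T_n$ realizing the liminf reward, then a further subsequence along which all $N$ empirical occupancy measures converge; feasibility of the limit follows from the telescoping flow-balance identity with $O(1/T_n)$ boundary terms. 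Your route is more self-contained (it does not lean on the existence of stationary optimal policies) and directly bounds $R^-(\pi,\bm{S}_0)$ for every feasible $\pi$, at the cost of the subsequence bookkeeping; the paper's route is shorter but outsources the reduction to an external theorem. Your key observation --- that the per-arm marginal balance $\Prob[S_{i,t+1}^\pi=s]=\sum_{s',a'}\PP_i(s\mid s',a')\Prob[S_{i,t}^\pi=s',A_{i,t}^\pi=a']$ holds even under an arm-coupling, history-dependent policy because transitions factorize across arms given the actions --- is exactly the fact the paper also relies on, and it is stated and justified correctly.
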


An immediate implication of Lemma~\ref{lem:lp upper bound} is that for any policy $\pi$, its optimality gap is upper bounded as
\begin{equation}
    R^*(N,\bm{S}_0)-R^-(\pi,\bm{S}_0) \le R_N^{\rel}-R^-(\pi,\bm{S}_0).
\end{equation}
Therefore, to derive an upper bound for the optimality gap, it suffices to control $R_N^{\rel}-R^-(\pi,\bm{S}_0)$, which we will show is $O(1/\sqrt{N})$ in \Cref{thm:opt-gap-bound}. 

To see the intuition of Lemma~\ref{lem:lp upper bound}, we interpret the optimization variable $y_i(s,a)$ as the long-run fraction of time arm~$i$ spends in state $s$ and takes action $a$.
We refer to $y_i(s,a)$ as arm~$i$'s \emph{state-action frequency} for the state-action pair $(s,a)$.
Then the constraints in \eqref{eq:lp:budget-constraint} of the LP can be viewed as relaxations of the budget constraints in \eqref{eq:hard-budget-constraint} for the WCMDP.
The constraints in \eqref{eq:lp:stationarity-constraint}--\eqref{eq:lp:probability-constraint} guarantee that $y_i(s,a)$'s are proper stationary time fractions.
Therefore, the LP is a relaxation of the WCMDP and thus achieves a higher optimal value.
The formal proof of Lemma~\ref{lem:lp upper bound} is provided in Appendix~\ref{app:proof-lem-lp-upper-bound}.

Our LP \eqref{eq:lp} serves a similar role to the LP used in previous work on restless bandits and WCMDPs (see, e.g., \citep{WebWei_90,Ver_16_verloop,GasGauYan_23_exponential,HodGla_15,XioWanLi_22,GolAvr_24_wcmdp_multichain,HonXieChe_23}), but with different forms and dependencies on $N$. 
Both our LP and the LP in previous work relax the hard budget constraints to time-average constraints. 
However, in the homogeneous arm setting \citep{WebWei_90,GasGauYan_23_exponential,HodGla_15,GolAvr_24_wcmdp_multichain}, the LP has only one set of state-action frequencies $y(s,a)$, and the LP is independent of $N$.
As a result, both the optimal value of the LP and the complexity of solving it are independent of $N$. 
Prior work for the typed-heterogeneous setting \citep{XioWanLi_22,Ver_16_verloop,HonXieChe_23} divides the arms into a constant number $K$ of types and defines a set of state-action frequency $y_k(s,a)$ for each type $k$. The optimal value of the resulting LP and the complexity of solving it depend on the number of types $K$.
\citet{HonXieChe_23} includes a generalization to the fully heterogeneous setting, resulting in an LP similar to ours, although restricted to restless bandits.

In our LP, we define a separate set of state-action frequencies $y_i(s,a)$ for each arm $i\in[N]$, making the LP explicitly depend on $N$. 
Therefore, the optimal value $R_N^{\rel}$ depends on $N$, and the complexity of solving the LP grows with $N$. 
Nevertheless, because the number of variables and constraints scales linearly with $N$, our LP can still be solved in polynomial time.

\section{ID policy with reassignment}
In this section, we introduce the ID policy with reassignment, generalized from the ID policy designed for homogeneous restless bandits in the literature \citep{HonXieChe_24}.
Our policy first performs an ID reassignment procedure, and then proceeds to run a variant of the ID policy adapted to handle heterogeneity.
We begin by introducing a building block of our policy, referred to as optimal single-armed policies, followed by the ID reassignment algorithm and the execution of the adapted ID policy.

\paragraph{Optimal single-armed policies.}
Once we obtain a solution to the LP in \eqref{eq:lp}, we can construct a policy for each arm~$i$, which we refer to as an \emph{optimal single-armed policy} for arm~$i$.
In particular, let $(y^*_i(s,a))_{i\in [N], s\in\sspa,a\in\aspa}$ be an arbitrary optimal solution to the LP in \eqref{eq:lp}. 
Then for arm~$i$, the optimal single-armed policy, $\pibar^*_i$, is defined as
\begin{equation}\label{eq:optimal-single-armed-policy}
    \bar{\pi}_i^*(a\mid s) = \begin{cases}
        \frac{y^*_i(s,a)}{\sum_{a\in\aspa}y^*_i(s,a)}, & \text{if}~ \sum_{a\in\aspa}y^*_i(s,a) >0,\\
        \frac{1}{\cardA}, &  \text{if} ~\sum_{a\in\aspa}y^*_i(s,a)=0,
    \end{cases}
\end{equation}
where $\bar{\pi}_i^*(a\mid s)$ is the probability of taking action $a$ given that the arm's current state is $s$.
Note that due to heterogeneity, this optimal single-armed policy $\pibar^*_i$ can be different for different arms.

The rationale behind these policies is as follows.
If each arm $i$ individually follows its optimal single-armed policy $\pibar_i^*$, then the average reward per arm (total reward divided by $N$) achieves the upper bound $R_N^{\rel}$ given by the LP.
However, this strategy only guarantees that the budget constraints are satisfied in a \emph{time-average} sense, rather than conforming to the \emph{hard} constraints in the original $N$-armed WCMDP.
Thus, having each arm follow its optimal single-armed policy is not a valid policy for the original $N$-armed problem.
Nevertheless, these optimal single-armed policies $\pibar_i^*$'s serve as a guide for how the arms should ideally behave to maximize rewards.
The ID policy uses the $\pibar_i^*$'s as a reference.
It is then designed to ensure that even under the hard budget constraints, most arms follow their optimal single-armed policies most of the time, yielding a diminishing gap to $R_N^{\rel}$ in reward.

\begin{algorithm}[t]
     \caption{ID reassignment}\label{alg:id-assign}
     \begin{algorithmic}[1]
        \STATE \textbf{Input:} optimal state-action frequencies $(y^*_i(s,a))_{i\in [N], s\in\sspa,a\in\aspa}$, budgets $(\alpha_k)_{k\in[K]}$
        \STATE \textbf{Output:} new arm ID, recorded in $\newID(i)$, for each arm with old ID $i\in[N]$
        \STATE Compute $(C_{k,i}^*)_{i\in[N],k\in[K]}$ and the set of active constraints $\cA$ using \eqref{eq:active}
        \IF{$\cA=\emptyset$}
        \STATE $\newID(i)\gets i$ for all $i\in[N]$  \hfill \emph{$\triangleright$ No need for ID reassignment}
        \ELSE 
        \STATE Initialize $\cF \gets \emptyset$ \hfill\emph{$\triangleright$ Set of arms that have been assigned new IDs}
        \STATE Initialize $\cD_k\gets\{i\in[N]\colon C_{k,i}^*\ge \costIDreassign\}$ for all $k\in\cA$\\
        \STATE $\costIDreassign\gets \alpha_{\min}/4\triangleq\min_{k\in[K]}\alpha_k/4$;\;
        $\gsizeIDreassign \gets \left\lceil \frac{(c_{\max}-\costIDreassign)K}{\alpha_{\min}/2-\costIDreassign}\right\rceil$
        \FOR{$\ell =0,1,\dots,\lfloor N/\gsizeIDreassign\rfloor-1$}
        \STATE $\cI(\ell) \gets [\ell\gsizeIDreassign+1:(\ell+1)\gsizeIDreassign]$;\; $j\gets \ell\gsizeIDreassign+1$
        \FOR{$k\in\cA$}\label{alg:line:reassign-start}
            \IF{$\sum_{i\in \cF}C_{k,i}^*\mathbbm{1}\{\newID(i)\in \cI(\ell)\}<\costIDreassign$}
            \STATE Pick any $i$ from $\cD_k$ and set $\newID(i)\gets j$; remove $i$ from $\cD_{k'}$ for all $k'$; add $i$ to $\cF$
            \STATE $j\gets j+1$
            \ENDIF
        \ENDFOR\label{alg:line:reassign-end}
        \ENDFOR
        \STATE For all $i\in [N]\setminus \cF$, assign values to their $\newID(i)$'s randomly from $[N]\setminus \{\newID(i')\colon i'\in \cF\}$
        \ENDIF
     \end{algorithmic}
\end{algorithm}
\paragraph{ID reassignment.}
We first define a few quantities that will be used in the ID reassignment algorithm.
For each arm $i\in[N]$ and each cost type $k\in[K]$, the expected cost under the optimal single-armed policy is defined as $C_{k,i}^* =\sum_{s\in\sspa,a\in\aspa}y_i^*(s,a)c_{k,i}(s,a)$.
Based on $C_{k,i}^*$'s, we divide the budget constraints into \emph{active} constraints and \emph{inactive} constraints as follows.
For each cost type $k\in[K]$, we say the type-$k$ budget constraint is \emph{active} if
\begin{equation}\label{eq:active}
    \sum_{i\in[N]}C_{k,i}^*\ge \frac{\alpha_k}{2}N,
\end{equation}
and \emph{inactive} otherwise.
Let $\cA\subseteq [K]$ denote the set of cost types corresponding to active constraints.
Note that replacing $\alpha_k N/2$ with any constant and strict fraction of $\alpha_k N$ will not change the results.

Based on the costs $C_{k,i}^*$'s and the active constraints, the ID reassignment algorithm rearranges arms so that the cost incurred by each contiguous segment of arms is approximately proportional to the length of the segment.
We give a brief explanation of how the reassigned IDs affect the execution of the policy at the end of this section.
The algorithm is formally described in Algorithm~\ref{alg:id-assign}, with more details and properties provided in Appendix~\ref{app:more-ID-reassignment}.
In the remainder of this paper, we use the \emph{reassigned IDs} to refer to arms, i.e., arm $i$ refers to the arm whose new ID assigned by the ID reassignment algorithm is~$i$.

\begin{algorithm}[t]
    \caption{ID policy with reassignment}
    \label{alg:id policy}
    \begin{algorithmic}[1]
        \STATE
        \textbf{Input:} $N$-armed WCMDP instance $(\cM_i)_{i\in[N]}$
        \STATE \textbf{Preprocessing:}        
        \STATE \hspace{0.7em} Solve the LP in \eqref{eq:lp} and obtain the optimal state-action frequencies $(y^*_i(s,a))_{i\in [N], s\in\sspa,a\in\aspa}$
        \STATE \hspace{0.7em} Calculate the optimal single-armed policies $(\pibar_i^*)_{i\in[N]}$ using \eqref{eq:optimal-single-armed-policy} 
        \STATE \hspace{0.7em} Perform ID reassignment using Algorithm~\ref{alg:id-assign}
        \STATE \textbf{Real-time:}
        \FOR{$t=0,1,2,\cdots$} 
            \STATE Sample ideal actions $\widehat{A}_{i,t}\sim \pibar_i^*(\cdot\mid S_{i,t})$ for all $i\in[N]$
            \STATE $I\gets 1$ 
            \WHILE{$\sum_{i\in[I]}c_{k,i}(S_{i,t},\widehat{A}_{i,t})\leq \alpha_k N, \forall k\in[K]$}
                \STATE For arm $I$, take action $A_{I,t}=\widehat{A}_{I,t}$; $\quad I\gets I+1$
            \ENDWHILE
        \STATE For each arm $i\in\{I,I+1,\dots,N\}$, take action $A_{i,t}=0$
        \ENDFOR
    \end{algorithmic}
\end{algorithm}
\paragraph{Constructing ID policy.}
We are now ready to describe our generalized ID policy, formally described in Algorithm~\ref{alg:id policy}.
The policy begins with a one-time preprocessing phase: we solve the associated LP, construct the optimal single-armed policies, and reassign arm IDs using the ID reassignment algorithm (Algorithm~\ref{alg:id-assign}).
After the preprocessing, the policy proceeds at each time step $t$ as follows.
For each arm~$i$ (where $i$ is the reassigned ID), we first sample an action $\widehat{A}_{i,t}$, referred to as an \emph{ideal action}, from the optimal single-armed policy $\pibar_i^*(\cdot \mid S_{i,t})$.
We then attempt to execute these ideal actions, i.e., set the real actions equal to the ideal actions, in ascending order of arm IDs, starting from $i=1$, then $i=2$, and so on.
We continue the attempt until we have used up at least one type of cost budget, at which point we let the remaining arms take action $0$ (the no-cost action). 

This ID policy is a natural generalization of the ID policy designed for homogeneous restless bandits \cite{HonXieChe_24}.
At a high level, using arm IDs to decide the priority order for executing ideal actions guarantees that a subset of arms (those with smaller IDs) can persistently follow their ideal actions.
This persistency gives these arms time to converge to their optimal state-action frequencies, which in turn allows their instantaneous costs to converge to steady-state values. This convergence creates slack in the budget constraints, thereby allowing more arms to follow their ideal actions. 
In contrast, if we do not use IDs but instead randomly select a subset of arms to follow their ideal actions, the convergence may be disrupted. 
Indeed, \cite{HonXieChe_23} provides an example where this randomized strategy fails to achieve asymptotic optimality in restless bandits.

When there is no ambiguity, we refer to the ID policy with reassignment simply as the ID policy.

\paragraph{How the reassigned IDs affect the policy execution.}
In the heterogeneous setting, arms differ in their cost consumptions under their optimal single-armed policies.
The ID reassignment algorithm is designed to prevent ``plateaus'' in cumulative cost as we progress from smaller to larger IDs.
If such a plateau exists, the subset of arms allowed to follow their ideal actions (determined by the budget constraints) can become sensitive to the randomness in action sampling, potentially leading to performance instability.
The ID reassignment algorithm ensures a regularity property of the cost consumptions, stated in Lemma~\ref{lem:positiveC} in Appendix~\ref{app:more-ID-reassignment}, which eliminates such plateaus.

\section{Main results and technical overview}\label{sec:result-tech-overview}
Before we present the main results, we first state our main assumption.
This assumption is for the optimal single-armed policies $\pibar^*_i$'s.
Note that each $\pibar^*_i$ is a stationary Markov policy.
Therefore, under this policy, the state of arm~$i$ forms a Markov chain.
Let the transition probability matrix of this Markov chain be denoted as $P_i=(P_i(s,s'))_{s\in\sspa,s'\in\sspa}$, where the row index is the current state $s$ and the column index is the next state $s'$.
Then $P_i(s,s')$ can be written as
\begin{equation}
    P_i(s,s') = \sum_{a\in \aspa} \PP_i(s'\mid s,a)\bar{\pi}_i^*(a\mid s).
\end{equation}
One can verify that the stationary distribution of this Markov chain is $\mu_i^*=(\mu_i^*(s))_{s\in\sspa}$ with $\mu_i^*(s)=\sum_{a\in\aspa}y^*_i(s,a)$, which we refer to as the \emph{optimal state distribution} for arm~$i$. 
Let $\tmix_i$ be the \emph{mixing time} of this Markov chain, defined as
\begin{equation}
    \tmix_i=\max_{s\in\sspa} \min\left\{t\in\mathbb{N}\colon \norm{P_i^t(s,\cdot) - \mu^*_i(\cdot)}_1 \leq 1/e \right\},
\end{equation}
where $P_i^t$ is the $t$-step transition probability matrix. The mixing time $\tmix_i$ is finite if the Markov chain $P_i$ is unichain (one recurrent class, possibly with transient states) and aperiodic.

\begin{assumption}\label{ass:unichain}
    For each arm $i\in\mathbb{N}_+$, the induced Markov chain under the optimal single-armed policy $\bar{\pi}_i^*$ is an aperiodic unichain. 
    Furthermore, the mixing times of these Markov chains have a uniform upper bound; i.e., there exists a positive $\tau$ such that for all $i\in\mathbb{N}_+$, 
    \begin{equation}
        \tmix_i \leq \tau.
    \end{equation}
\end{assumption}

We remark that in the homogeneous or typed heterogeneous settings, once we make the aperiodic unichain assumption in Assumption~\ref{ass:unichain}, the uniform upper bound on mixing times automatically exists.

Next, we state our main theorem, \Cref{thm:opt-gap-bound}, whose proof is provided in \Cref{sec:proof-main}.
\begin{theorem}
\label{thm:opt-gap-bound}
    Consider an $N$-armed WCMDP problem satisfying Assumption~\ref{ass:unichain}, with initial system state $\bm{S}_0$. Let policy $\pi$ be the ID policy with reassignment (Algorithm~\ref{alg:id policy}). Then the optimality gap of $\pi$ is bounded as
    \begin{align*}
        R^*(N,\bm{S}_0)-R(\pi,\bm{S}_0) \leq\frac{C_{\ID}}{\sqrt{N}},
    \end{align*}
    where $C_{\ID}$ is a positive constant independent of $N$. 
\end{theorem}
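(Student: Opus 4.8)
The plan is to bound the optimality gap by controlling $R_N^{\rel}-R(\pi,\bm{S}_0)$, which suffices by Lemma~\ref{lem:lp upper bound}. Since having every arm follow its optimal single-armed policy $\pibar_i^*$ achieves the LP value $R_N^{\rel}$ exactly, the gap arises \emph{only} from arms that are forced to deviate, i.e., those with IDs $i\geq I$ in Algorithm~\ref{alg:id policy} that are pushed to take action $0$ because the budget is exhausted. Because rewards are uniformly bounded by $\rmax$, I expect a bound of the form
\begin{equation*}
    R_N^{\rel}-R(\pi,\bm{S}_0) \leq \frac{2\rmax}{N}\,\limsup_{T\to\infty}\frac{1}{T}\sum_{t=0}^{T-1}\E{N - I_t + 1},
\end{equation*}
so the entire problem reduces to showing that the expected number of deviating arms in steady state is $O(\sqrt{N})$. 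The first step, then, is to formalize this reduction carefully, separating the reward lost by deviating arms from the reward that conforming arms generate relative to their single-armed optimum.

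The core of the proof is the steady-state analysis of the number of deviating arms, and this is where the projection-based Lyapunov function enters. The intuition is that the arm never gets cut off unless the cumulative cost $\sum_{i\in[I]}c_{k,i}(S_{i,t},\widehat{A}_{i,t})$ reaches the budget $\alpha_k N$ for some active $k$. The expected cumulative cost, if arms followed their stationary behavior, is $\sum_{i}C_{k,i}^*$, which the ID reassignment (via Lemma~\ref{lem:positiveC}) arranges to grow proportionally along the ID ordering with a definite slack below $\alpha_k N$ for active constraints. Thus deviations occur only when the \emph{realized} instantaneous cost deviates from its expected stationary value. I would define, for each arm, a projection of its state vector $X_{i,t}$ onto a small set of feature vectors capturing the information needed to evaluate costs $c_{k,i}$ and reward $r_i$ and to predict their one-step-ahead conditional expectations under $P_i$. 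The Lyapunov function $V$ would then measure the aggregate squared deviation of these projected coordinates from their optimal values $\mu_i^*$. Assumption~\ref{ass:unichain}, specifically the uniform mixing-time bound $\tmix_i\leq\tau$, should yield a uniform geometric (or at least exponential) drift contraction for each arm's projected state toward $\mu_i^*$, so that $V$ exhibits a negative drift proportional to $V$ itself plus an $O(N)$ noise term from the martingale increments. Balancing drift against noise gives a steady-state bound $\E{V}=O(N)$, which by a concentration/Cauchy--Schwarz argument translates into the number of deviating arms being $O(\sqrt{N})$ in expectation.

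The key steps, in order, are: (i) reduce the gap to counting deviating arms; (ii) use Lemma~\ref{lem:positiveC} to establish that, in the \emph{idealized} (fully-conforming) dynamics, active-constraint costs stay bounded away from the budget with constant relative slack, so that cutoffs require a fluctuation of order $N$ in cumulative cost; (iii) construct the feature vectors and the projection map, verify they are closed under the one-step transition (so the projected process is itself a well-behaved Markov-like process with computable conditional means); (iv) establish a uniform Lyapunov drift inequality for $V$ using the mixing-time bound, accounting for the coupling introduced when arms deviate; (v) conclude $\E{V}=O(N)$ in steady state and convert this to the $O(\sqrt{N})$ count of deviating arms; and (vi) assemble the pieces to obtain the $O(1/\sqrt{N})$ reward gap.

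The main obstacle I anticipate is step (iv): the coupling between the Lyapunov drift and the deviation events. The difficulty is circular---arms deviate from $\pibar_i^*$ precisely when the budget is tight, yet the drift toward $\mu_i^*$ that I want to invoke relies on arms \emph{following} $\pibar_i^*$. When arm $i$ takes the forced action $0$ instead of $\widehat{A}_{i,t}$, its contribution to the drift is perturbed, and these perturbations are correlated with exactly the event (budget exhaustion) whose frequency I am trying to bound. Handling this requires showing that the deviation perturbation to the drift is controlled by the very quantity ($V$, or the count of deviating arms) being analyzed, closing the loop without a vicious cycle. A secondary subtlety is ensuring all constants---especially the contraction rate and the projection dimension---are uniform in $i$ and independent of $N$, which is where the full-heterogeneity setting differs fundamentally from the homogeneous case and where Assumption~\ref{ass:unichain}'s uniform bound and the ID-reassignment regularity must both be used in an essential way.
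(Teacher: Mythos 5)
Your high-level skeleton matches the paper's: bound the gap via the LP relaxation (Lemma~\ref{lem:lp upper bound}), build a projection-based Lyapunov function from feature vectors tied to the reward and cost functions, and use the uniform mixing bound of Assumption~\ref{ass:unichain} to obtain a drift inequality whose noise-versus-contraction balance yields an $O(\sqrt{N})$ steady-state bound. But the step you flag as the ``main obstacle'' --- the circularity between budget-induced deviations and a drift argument that requires arms to follow $\pibar^*_i$ --- is precisely the step the entire proof is built to resolve, and your proposal offers no mechanism for it. The paper's resolution is the \emph{focus set}: the subset Lyapunov functions $h(\mx,D)$ are indexed by prefixes $D=[Nm]$ of the reassigned ID ordering, and the final Lyapunov function is $V(\mx)=h_{\ID}(\mx,m(\mx))+L_h N(1-m(\mx))$, where $m(\mx)$ is the largest $m$ with $h_{\ID}(\mx,m)\le\min_{k\in[K]}\rembud_k([Nm])$. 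Because the ID policy makes the set of conforming arms a prefix $[N_t^*]$, and because the ID reassignment forces the remaining budget to decrease essentially linearly along prefixes (Lemma~\ref{lem:positiveC}), one shows that all but $O(\sqrt{N})$ arms of the focus set do conform (Lemma~\ref{lem:majority_conformal}), so the drift condition legitimately applies on the focus set; the penalty term $L_h N(1-m(\mx))$, together with the almost-non-shrinking and sufficient-coverage lemmas, accounts for the arms outside it. Without this device (or an equivalent one), your step (iv) remains an unresolved circular dependency rather than a proof.

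Two secondary gaps. First, your displayed reduction bounds the gap only by the count of deviating arms; this drops the term $\frac{1}{N}\sum_{i\in[N]}\innerproduct{\mu_i^*-\E{X_{i,t}}}{r_i^*}$ arising from \emph{conforming} arms whose state distributions have not yet converged to $\mu_i^*$. That term is of the same order as everything else and is exactly what the $g=r^*$ feature in the Lyapunov function is there to control, so the reduction to ``counting deviating arms'' is not valid as stated. Second, one-step-ahead features do not suffice and are generally not ``closed under the one-step transition'': the paper's $h(\mx,D)$ takes a supremum over \emph{all} $\ell$-step-ahead expectations $\innerproduct{(x_i-\mu_i^*)P_i^{\ell}/\gamma^{\ell}}{g_i}$ for $\ell\in\mathbb{N}$, and the division by $\gamma^{\ell}$ with $\gamma=\exp(-1/(2\tau))$ is what converts the mixing-time bound into the contraction $h(\mX_t P)\le\gamma h(\mX_t)$; a finite feature set with computable one-step conditional means, or a quadratic aggregate of pointwise deviations, would not deliver this contraction uniformly over heterogeneous arms.
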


We re-emphasize that our proposed ID policy with reassignment is the first efficiently computable policy that achieves an $O(1/\sqrt{N})$ optimality gap for fully heterogeneous average-reward WCMDPs.
In contrast, the best-known optimality gap for efficiently computable policies for average-reward WCMDPs is $o(1)$, achieved only under restrictive budget constraints and typed-heterogeneity.

We comment that the primary goal of this paper is to characterize the optimality gap in terms of its order in $N$, which is in line with the main focus of the large body of prior work on restless bandits and WCMDPs.
While our analysis also gives an explicit expression for the constant $C_{\ID}$, which shows that $C_{\ID}=O(K^5 \max\{r_{\max}, c_{\max}\}^7 \tau^4/\alpha_{\min}^6)$, we have not attempted to optimize its dependence on other problem parameters, either through refined analysis or alternative policy design.

\begin{remark}[Generalization of result]
    Our result can be generalized to the setting where a $g(N)$ fraction of arms have unbounded mixing times, and the mixing times of the remaining arms scale with $N$ with an upper bound $\tau$. In this case, we can modify the ID policy by reassigning this $g(N)$ fraction of arms the largest IDs, effectively ignoring these arms. Applying \Cref{thm:opt-gap-bound} to the remaining arms then implies $R^*(N,\bm{S}_0)-R(\pi,\bm{S}_0) \leq O\big(K^5 \max\{r_{\max}, c_{\max}\}^7 \tau^4/ (\alpha_{\min}^6 \sqrt{N})  + r_{\max} g(N)\big)$. Consequently, this modified policy is asymptotically optimal when $g(N) = o(1)$ and $\tau = o(N^{1/8})$. 
\end{remark}

Adapting the proof of \Cref{thm:opt-gap-bound} also gives the following finite-time bound (see \Cref{app:finite-time-bound-pf}). 

\begin{proposition}[Finite-time bound]
    \label{prop:finite-time-bound}
    Under the same conditions as \Cref{thm:opt-gap-bound}, we have for any $T\ge 1$,
    \begin{equation}
        R^*(N, \bm{S}_0) - \frac{1}{TN}\sum_{t=0}^{T-1}\sum_{i\in[N]}\E{r_i(S_{i,t}^\pi, A_{i,t}^\pi)} \leq \frac{C_\ID}{\sqrt{N}} + \frac{C_\finite}{T},
    \end{equation}
    where $C_\finite$ is another positive constant independent of $N$. 
\end{proposition}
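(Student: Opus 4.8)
The plan is to reuse the Lyapunov machinery behind \Cref{thm:opt-gap-bound} but to halt the argument one step before passing to the limit $T\to\infty$. By Lemma~\ref{lem:lp upper bound} we have $R^*(N,\bm{S}_0)\le R_N^{\rel}$, so it suffices to upper bound
\[
R_N^{\rel} - \frac{1}{TN}\sum_{t=0}^{T-1}\sum_{i\in[N]}\E{r_i(S_{i,t}^\pi, A_{i,t}^\pi)} = \frac{1}{T}\sum_{t=0}^{T-1}\Delta_t,
\]
where $\Delta_t \triangleq R_N^{\rel}-\frac{1}{N}\sum_{i\in[N]}\E{r_i(S_{i,t}^\pi, A_{i,t}^\pi)}$ is the expected per-arm reward gap at time $t$. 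This reduces the proposition to controlling the finite-horizon average of the instantaneous gaps $\Delta_t$, whereas the main theorem controls their Cesàro limit.

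First I would invoke the per-step reward-gap bound that already underlies the proof of \Cref{thm:opt-gap-bound}. The gap $\Delta_t$ is incurred only by the arms that are prevented from following their ideal actions (those forced to take action $0$ once a budget is exhausted), and the main proof bounds the expected number of such arms, and hence $\Delta_t$, in terms of the expected deviation of the system state $\mX_t$ from the optimal region as measured by the projection-based Lyapunov function $V$. Concretely, this gives an inequality of the form $\Delta_t \le a\,\E{V(\mX_t)} + b/\sqrt{N}$ with constants $a,b$ independent of $N$ and $t$. The key observation is that this bound is a statement about a single transition, so it holds at every time step and does not rely on stationarity.

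Next I would telescope the Lyapunov drift. The main proof establishes a one-step drift inequality for $V$ along the time-homogeneous Markov chain induced by the ID policy, of the form $\E{V(\mX_{t+1})\mid \mX_t} \le (1-\kappa)V(\mX_t) + e$ (or the analogous contraction outside a ball of radius $O(1/\sqrt{N})$), with $\kappa$ and $e$ independent of $N$ and $t$. Summing over $t=0,\dots,T-1$ and rearranging, the consecutive Lyapunov terms telescope and we obtain
\[
\frac{1}{T}\sum_{t=0}^{T-1}\E{V(\mX_t)} \le \frac{c_1}{\sqrt{N}} + \frac{\E{V(\mX_0)}}{c_2\,T},
\]
for constants $c_1,c_2$ independent of $N$ and $T$. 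The first term is precisely what yields the $C_\ID/\sqrt{N}$ rate in the theorem (recovered there by letting $T\to\infty$, which kills the second term), while the second term is the transient cost of starting from the arbitrary initial state $\bm{S}_0$. Combining this with the per-step bound of the previous paragraph produces the claimed $C_\ID/\sqrt{N} + C_\finite/T$.

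The main obstacle is ensuring that $C_\finite$ is genuinely independent of $N$, which amounts to establishing a uniform bound $\E{V(\mX_0)}\le C$ on the initial Lyapunov value that does not grow with $N$. Because $V$ is built from projections of the one-hot state encodings $X_{i,t}$ onto a fixed finite collection of feature vectors and is suitably normalized as a per-arm quantity, its value on any system state should be bounded by a constant depending only on $\cardS$, $K$, $\tmix$, and $\alpha_{\min}$, but not on $N$; I would verify this directly from the definition of $V$. The remaining care is purely bookkeeping: checking that the drift constants $\kappa, e, c_1, c_2$ inherited from the proof of \Cref{thm:opt-gap-bound} are already $N$-independent (they are, since they are exactly the constants producing the $1/\sqrt{N}$ rate there), so that no hidden dependence on $N$ sneaks into $C_\finite$.
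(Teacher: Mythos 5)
Your proposal follows essentially the same route as the paper: bound the per-step reward gap by $\E{V(\mX_t)}$ plus an $O(1/\sqrt N)$ term (the finite-time analogue of Lemma~\ref{lem:optimality gap}), unroll the drift inequality from Lemma~\ref{lem:drift-V} over a finite horizon to pick up the transient term $\E{V(\mX_0)}/(\rhov T)$, and bound the initial Lyapunov value. The only bookkeeping difference is that the paper's $V$ is an aggregate rather than per-arm quantity, with $V(\mX_0)\le N L_h$ and the factor $1/N$ appearing in the reward-gap bound instead of inside $V$; this is exactly the verification you flag at the end, and it goes through.
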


\subsection*{Technical overview}

Our technical approach uses the Lyapunov drift method, which has found widespread applications in queueing systems, Markov decision processes, reinforcement learning, and so on.
While the basic framework of the drift method is standard, the \emph{key challenge} lies in constructing the right Lyapunov function with the desired properties, where the difficulty is exacerbated by the full heterogeneity of the problem under study.
Our construction of such a Lyapunov function is highly novel, yet still natural.
We reiterate that fully heterogeneous, high-dimensional stochastic systems are poorly understood in the existing literature.
Our approach opens up the possibility of analyzing the steady-state behavior of such systems through the Lyapunov drift method.

In the remainder of this section, we consider the ID policy, also referred to as the policy $\pi$.
Let $\mX_t$ denote the system state under it, with the superscript $\pi$ omitted for brevity.
To make this overview more intuitive, here let us assume that $\mX_t$ converges to its steady state $\mX_{\infty}$ in a proper sense such that taking expectations in steady state is the same as taking time averages.
However, note that our formal results do not need this assumption and directly work with time averages.
We call a function $V$ a Lyapunov/potential function if it maps each possible system state to a nonnegative real number.

\paragraph{General framework of the drift method.}
Here we briefly describe the general framework of the drift method when applied to our problem.
The goal is to construct a Lyapunov function $V$ such that
\begin{enumerate}[label=(\textbf{C\arabic*}),leftmargin=4em]
    \item\label{cond:upper} $R_N^{\rel}-R(\pi,\bm{S}_0) \le C_1\E{V(\mX_{\infty})}/N+O(1/\sqrt{N})$ for some constant $C_1$;
    \item\label{cond:drift} (Drift condition) $\EE\mbracket{V(\mX_{t+1})\mid \mX_t}-V(\mX_t)\leq -C_2V(\mX_t)+ O(\sqrt{N})$ for a constant $C_2$.
\end{enumerate}
The drift condition requires that on average, the value of $V$ approximately decreases (ignoring the additive $O(\sqrt{N})$) after a time step.
The drift condition implies a bound on $\E{V(\mX_{\infty})}$.
To see this, let $\mX_t$ follow the steady-state distribution, which means $\mX_{t+1}$ also follows the steady-state distribution, and take expectations on both sides of the inequality.
Then we get $0=\EE\mbracket{V(\mX_{t+1})}-\E{V(\mX_t)}\leq -C_2\E{V(\mX_t)}+ O(\sqrt{N})$, which implies $\E{V(\mX_{\infty})}=\E{V(\mX_t)}=O(\sqrt{N}).$
Combining this with \ref{cond:upper} proves the desired $O(1/\sqrt{N})$ upper bound on the optimality gap.

\paragraph{Key challenge: constructing Lyapunov function.}
We highlight this challenge by contrasting the homogeneous setting and the heterogeneous setting.
In the \emph{homogeneous} setting, there is only one optimal state distribution, $\mu^*$.
The Lyapunov function in \citep{HonXieChe_24} is defined based on the distance between the \emph{empirical state distribution} across arms and $\mu^*$.  Specifically, it is based on a set of functions $(h(\mX_t,D))_{D\subseteq [N]}$ defined as:
\begin{equation}
    h(\mX_t,D)=\norm{\mX_t(D)-m(D)\mu^*},
\end{equation}
where $\mX_t(D)=(X_t(D,s))_{s\in\sspa}$ denotes within $D$, the number of arms in each state $s$, divided by $N$; $m(D)=|D|/N$; and the norm $\norm{\cdot}$ is a properly defined norm.
The idea is that if all arms in $D$ follow the optimal single-armed policy, the state distribution of each arm in $D$ gets closer to $\mu^*$, and thus $\mX_t(D)$ gets closer to $m(D)\mu^*$ over time.

In the \emph{heterogeneous} setting, we also want to construct a Lyapunov function $h(\mX_t, D)$ to witness the convergence of any set of arms $D$ if they  follow the optimal single-armed policies. 
However, unlike the homogeneous setting, now it no longer makes sense to aggregate arm states into an empirical state distribution, since each arm's dynamics is distinct.
Instead, our Lyapunov function considers $X_{i,t}-\mu_i^*$, where recall $X_{i,t}(s)$ is the indicator that arm~$i$'s state is $s$ at time $t$. 
A naive first attempt is to construct the Lyapunov function from the pointwise distances, $\norm{X_{i,t}-\mu_i^*}$ for each arm~$i$, with a properly defined norm $\norm{\cdot}$. 
However, the pointwise distances are very noisy: $\norm{X_{i,t}-\mu_i^*}$  could be large even when the state of arm~$i$ independently follows the distribution $\mu_i^*$ for each $i$, a situation when we should view the set of arms as already converged. 

Intuitively, to make the Lyapunov function properly reflect the convergence of the set of arms (referred to as ``the system'' in the rest of the section) following the optimal single-armed policies, we would like it to depend less strongly on the state of each individual arm and focus more on the collective properties of the whole system. 
Our idea is to \emph{project} the system state onto a properly selected set of \emph{feature vectors}, and construct the Lyapunov function based on how far these projections are from the projections of the optimal state distributions $(\mu^*_i)_{i\in[N]}$. 
Then what features of the system state do we need to determine whether it has converged in a proper sense? 
The first feature we consider is the instantaneous reward of the system, $\sum_{i\in D} \innerproduct{X_{i,t}}{r_i^*}$, where $r_i^*\in \R^{\sspa}$ is the reward function of arm $i$ under $\pibs_i$, and recall that the inner product is defined between two vectors whose entries correspond to states in $\sspa$. 
We also want to keep track of the \emph{$\ell$-step ahead expected reward}, $\sum_{i\in D} \innerproduct{X_{i,t}P_i^{\ell}}{r_i^*}$, for each $\ell\in \mathbb{N}_+$. 
Intuitively, if $\sum_{i\in D} \innerproduct{(X_{i,t} -\mu_i^*)P_i^{\ell}}{r_i^*}$ is small for each $\ell\in \mathbb{N}$, the reward of the system should remain close to that under the optimal state distributions $(\mu^*_i)_{i\in[N]}$ for a long time; conversely, if the state of each arm~$i$ independently follows $\mu^*_i$, each of these features should be small as well. 
We also consider the \emph{$\ell$-step ahead expected type-$k$ cost} for each $\ell\in \mathbb{N}$ and $k\in [K]$ as features, defined analogously.  

Combining the above ideas, for any set of arms $D$, we let the Lyapunov function $h(\mX_t, D)$ be the supremum of the differences between $\mX_t$ and $\mu^*$ in all the features directions defined above, under proper weightings: 
\begin{equation}\label{eq:h-def-overview}
    h(\mX_t,D) = \max_{g\in \mathcal{G}} \sup_{\ell\in\mathbb{N}} \abs{\sum_{i\in D} \innerproduct{(X_{i,t}-\mu_i^*) P_i^{\ell} / \gamma^{\ell}}{g_i}},
\end{equation}
where $\gamma = \exp(-1/(2\tau))$ for $\tau$ defined in \Cref{ass:unichain}; each element $g\in \mathcal{G}$ is either $g = (r_i^*)_{i\in[N]}$, or corresponds to the type-$k$ cost for some $k\in[K]$ (See \Cref{sec:proof-main} for the definition of $\mathcal{G}$). 
Note that \emph{dividing} each term by powers of $\gamma$ is another interesting trick, which induces a negative drift in $h(\mX_t, D)$ under the optimal single-armed policies, as demonstrated in the proof of \Cref{lem:drift}. 

Now with the set of functions $(h(\mX_t,D))_{D\subseteq [N]}$ defined, we generalize the idea of focus sets in \citep{HonXieChe_24} to convert $(h(\mX_t,D))_{D\subseteq [N]}$ into a Lyapunov function $V(\mX_t)$.
We prove that $V$ satisfies \ref{cond:upper} and \ref{cond:drift} using the structure of $(h(\mX_t,D))_{D\subseteq [N]}$.

\begin{remark}
The idea for constructing $h(\mX_t, D)$ is potentially useful for analyzing other heterogeneous stochastic systems. 
At a high level, projecting the system state onto a set of feature vectors (and their future expectations) can be roughly viewed as aggregating system states whose relevant performance metrics \emph{remain close for a sufficiently long time}. 
This idea provides a new way to measure the distance between two system states in a heterogeneous system, and this distance notation enjoys similar properties as that in a homogeneous system, without resorting to symmetry.  
\end{remark}

\section{Experiments}
\label{sec:experiments}

\begin{wrapfigure}{r}{0.38\textwidth}
\vspace{-12pt}
    \centering
    \includegraphics[width=0.95\linewidth]{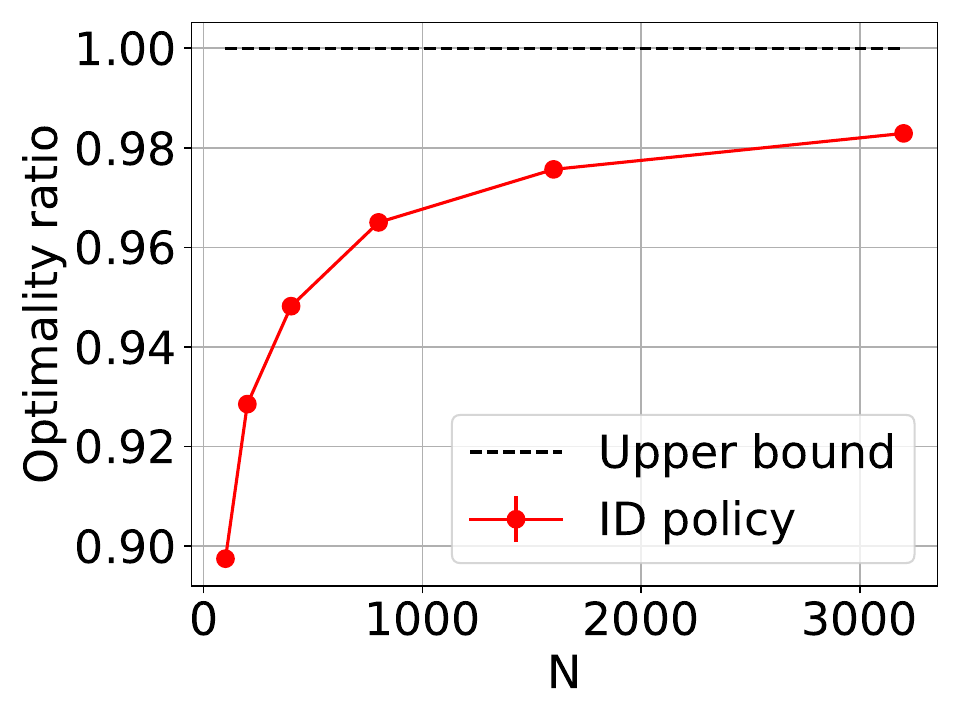}
    \vspace{-6pt}
    \caption{Asymptotic optimality of ID policy.}
    \label{fig:multi-constr}
   \vspace{8pt}
    
    \includegraphics[width=0.95\linewidth]{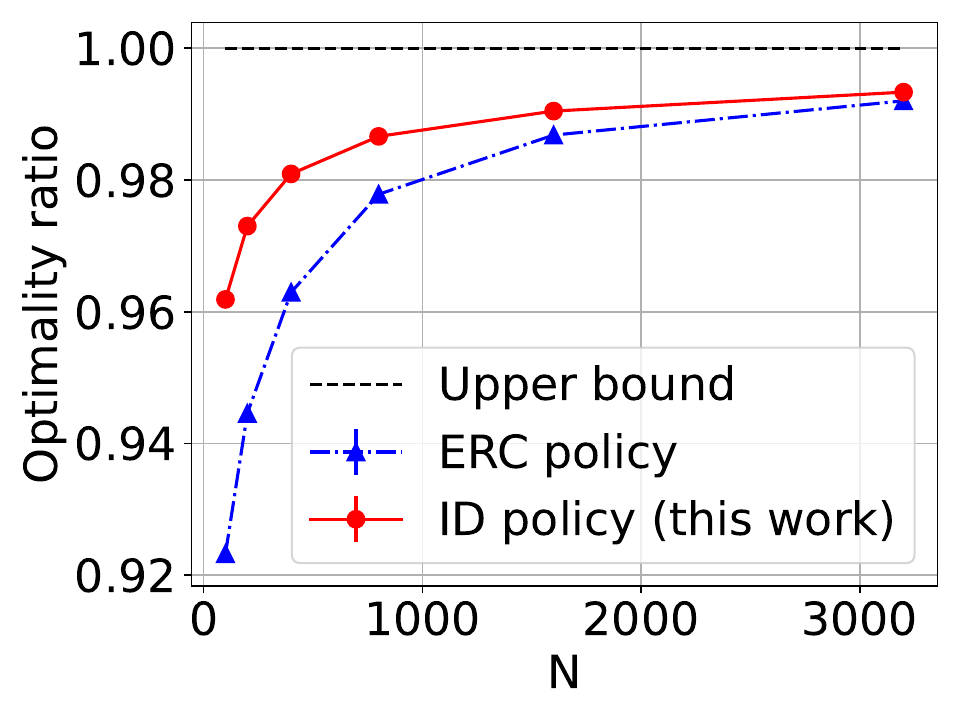}
    \vspace{-6pt}
    \caption{ID policy vs ERC policy.}
    \label{fig:single-constr}
    \vspace{-6pt}
\end{wrapfigure}

In this section, we perform two sets of experiments to illustrate the numerical performance of the proposed ID policy for fully heterogeneous WCMDPs.

In the first set of experiments, we demonstrate the asymptotic optimality of the ID policy.
We increase the number of arms as $N \in \{100, 200, 400, 800, 1600, 3200\}$.
Each arm's MDP has $10$ states and $4$ actions, with parameters generated uniformly at random in a proper sense.
The $N$-armed problem has $4$ budget constraints, with cost functions also generated randomly.
More details are provided in \Cref{app:exp_details}.
We simulate the policy for $2 \times 10^4$ time steps over $4$ replications for each $N$.
To illustrate the performance more clearly, we measure the \emph{optimality ratio}, defined as the ratio between the long-run average reward achieved by a policy and the LP relaxation upper bound $R^{\text{rel}}_N$.
Confidence intervals are calculated using the batch means method with a batch size of $4000$, but they are typically too small to be visible on figures.
Figure~\ref{fig:multi-constr} shows that the optimality ratio of the ID policy becomes increasingly close to $1$ as $N$ increases.

In the second set of experiments, we compare the ID policy with the ERC policy proposed in \citep{XioWanLi_22}.
As discussed in \Cref{sec:introduction}, only a few prior papers address heterogeneous WCMDPs.
Among them, \citep{XioWanLi_22} considers the most general setting, but is still limited to a single-budget constraint, state-independent costs, and typed heterogeneity.
To make a fair comparison, we evaluate both policies under this special case, while keeping all other settings the same as in the first set of experiments.
ID policy turns out to have a slight improvement over ERC policy in some instances, one of which is shown in Figure~\ref{fig:single-constr}, and has comparable performance in others. 
Importantly, unlike ERC, the ID policy applies to far more general classes of WCMDPs.

\bibliography{refs-weina}

\appendix

\newpage
\section{Detailed review on related work}
\label{app:additional-related-work}
In this section, we provide a more detailed, though still non-exhaustive, review of the literature. 
We mainly focus on theoretical work with formal performance guarantees, leaving out the extensive body of work with empirical results.
We begin by surveying papers with the same reward criterion as ours, i.e., infinite-horizon average-reward criterion. 
In this setting, we first review existing work on restless bandits (RBs), which is an extensively studied special case of WCMDPs. 
We then give a more detailed review of existing results on WCMDPs.
Next, we turn to work that considers other reward criteria: the finite-horizon total-reward criterion and the infinite-horizon discounted-reward criterion. 
Finally, we briefly mention other problems that are related to WCMDPs.

\paragraph{Infinite-horizon average-reward RBs.}
For the homogeneous arm setting, the first asymptotic optimality result for average-reward homogeneous RBs is established by \citet{WebWei_90}: they show that the \emph{Whittle index policy} \citep{Whi_88_rb} achieves an $o(1)$ optimality gap as the number of arms $N$ goes to infinity. There are three key assumptions in \citep{WebWei_90}: indexability, the global attractor property, and the aperiodic-unichain condition.
These assumptions have been gradually relaxed in the subsequent papers. In particular, \cite{Ver_16_verloop} proposes a class of priority policies based on an LP relaxation. This class of policies, later referred to as the \emph{LP-Priority policies}, generalizes the Whittle index policy. Each LP-Priority policy achieves an $o(1)$ optimality gap without requiring indexability.
The work \citep{HonXieChe_23} is the first one that breaks the global attractor property assumption.
The authors propose a policy named Follow-the-Virtual-Advice (FTVA), which achieves an $O(1/\sqrt{N})$ optimality gap under an assumption named the Synchronization Assumption;
there exist problem instances that satisfy the Synchronization Assumption but do not satisfy the global attractor property.
Later work \citep{HonXieChe_24} further relaxes the conditions and only requires the aperiodic-unichain condition to achieve an $O(1/\sqrt{N})$ optimality gap.
More recently, \citet{Yan_24} proposes the \emph{align-and-steer policy}, which further weakens the aperiodic-unichain condition and achieves an $o(1)$ optimality gap.

Parallel to relaxing the assumptions for asymptotic optimality, another line of work has focused on improving the optimality gap beyond $O(1/\sqrt{N})$ under slightly stronger assumptions \citep{GasGauYan_23_exponential,GasGauYan_23_whittles,hong2024exponential}.
Specifically, \citet{GasGauYan_23_whittles} show that the Whittle index policy has an $O(\exp(-cN))$ optimality gap for some constant $c>0$. In addition to indexability and the aperiodic-unichain condition, \citep{GasGauYan_23_whittles} also requires a stronger version of the global attractor property named Uniform Global Attractor Property (UGAP), and a condition called non-singularity.
Subsequently, \citet{GasGauYan_23_exponential} show that LP-Priority policies achieve $O(\exp(-cN))$ optimality gaps assuming the aperiodic-unichain condition, UGAP, and a non-degenerate condition that is equivalent to non-singularity.
More recently, \citet{hong2024exponential} propose a \emph{two-set policy} that also achieves an $O(\exp(-cN))$ optimality gap while replacing UGAP of \citep{GasGauYan_23_exponential} with a much weaker condition named local stability.

Among the aforementioned work, \cite{Ver_16_verloop} and \cite{HonXieChe_23} have addressed the heterogeneous arm setting.
The setting studied in \citep{Ver_16_verloop} is the typed heterogeneous setting, where the $N$ arms are divided into a constant number of types as $N\to\infty$.  
The paper \citep{HonXieChe_23} includes an extension to the fully heterogeneous setting.
In particular, the proposed FTVA policy generalizes to the fully heterogeneous setting and leads to an optimality gap of $O(\overline{\tau}^{\text{sync}}_{\max}/\sqrt{N})$, where $\overline{\tau}^{\text{sync}}_{\max}$ is the maximum of a quantity called the synchronization time across all arms.
Therefore, for this result to yield asymptotic optimality, there needs to be a further assumption that $\overline{\tau}^{\text{sync}}_{\max}=o(\sqrt{N})$.
We re-emphasize that the FTVA policy does not generalize to WCMDPs.
The main reason is that FTVA heavily relies on the fact that an RB only constrains the number of pulls, while a WCMDP has budget constraints on cost functions each depending on both the state and action of an arm.

\paragraph{Infinite-horizon average-reward WCMDPs.}
Work on average-reward WCMDPs remains relatively scarce, and to our knowledge, fully heterogeneous WCMDPs have yet to be addressed. 
Compared to RBs, a WCMDP allows multiple actions for each arm and multiple cost constraints, where each cost function is a function of both the state and the action of an arm.
The line of research \cite{HodGla_15,Ver_16_verloop,XioWanLi_22,GolAvr_24_wcmdp_multichain} has generalized the action space and cost model of RBs to WCMDPs, and some of them allow for typed heterogeneity.
In particular, \citet{HodGla_15} generalize the Whittle index policy to homogeneous WCMDPs with a single constraint and multiple actions, where each action represents a different activation level and has a different cost.
\citet{Ver_16_verloop} extends the LP-Priority policies to typed heterogeneous WCMDPs with a single constraint and multiple actions, but requires each action to have the same cost.
Then \citet{XioWanLi_22} propose another index policy for typed heterogeneous WCMDPs with a single constraint, and allow each action to have a different cost.
In the three papers above \citep{HodGla_15,Ver_16_verloop,XioWanLi_22}, $o(1)$ optimality gaps have been proved under a similar set of assumptions as in most restless bandit papers, i.e., aperiodic-unichain (or irreducibility) condition, global attractor property, and a generalized indexability condition if the policy is Whittle index.
Finally, \citet{GolAvr_24_wcmdp_multichain} consider homogeneous WCMDPs with multiple actions and multiple cost constraints with general cost functions,
and propose a class of policies with $o(1)$ optimality gaps under a weaker-than-standard aperiodic-unichain condition.

\paragraph{Finite-horizon total-reward RBs and WCMDPs.}
Next, we review the asymptotic optimality results for finite-horizon total-reward RBs and WCMDPs. The finite-horizon setting is better understood than the average-reward setting, partly because the analysis in the finite horizon is not hindered by the technical conditions arising in average-reward MDPs, such as the unichain condition and the global attractor property. On the other hand, computing asymptotically optimal policies for the finite-horizon setting is more complicated, requiring a careful optimization of the transient sample paths.

\citet{HuFra_17_rb_asymptotic} propose the first asymptotically optimal policy for finite-horizon homogeneous RBs, which achieves an $o(1)$ optimality gap without any assumptions.\footnote{Here, we measure the optimality gap in terms of the reward per arm, to be consistent with our convention. However, in the papers on the finite-horizon total-reward setting, it is also common to measure the optimality gap in terms of the total reward of all arms, which differs from ours by a factor of $N$. We also adopt the same convention when reviewing the papers on the infinite-horizon discounted-reward setting.}
Since then, researchers have established asymptotic optimality in more general settings \citep{ZayJasWan_19_rb,DaeChoGri_23,GhoNagJaiTam_23_finite_discount,BroSmi_19_rb,BroZha_23}.
Among these papers, the most general setting is addressed by \citet{BroZha_23}, where the authors consider fully heterogeneous WCMDPs; they obtain $O(1/\sqrt{N})$ optimality gaps for a naive fluid policy and a reoptimization-based fluid policy, among a few other results to be reviewed in the next paragraphs.
Notably, there is also a further generalization of fully heterogeneous WCMDPs, which involves an exogenous state that affects all arms' transitions, rewards, and constraints; \citet{BroZha_22} propose this setting, where they achieve an $O(1/\sqrt{N})$ optimality gap using a dynamic fluid policy.

Another line of work has improved the optimality gap beyond the order $O(1/\sqrt{N})$ by making an additional assumption called non-degeneracy. Specifically, \citet{ZhaFra_21} establish an $O(1/N)$ optimality gap in non-degenerate homogeneous RBs. \citet{GasGauYan_23_exponential} then propose a different policy for the same setting that improves the optimality gap to $O(\exp(-cN))$.
Later, \citet{GasGauYan_24_reopt} and \citet{BroZha_23} establish $O(1/N)$ optimality gaps for homogeneous and typed heterogeneous WCMDPs, respectively, assuming non-degeneracy.
More recently, \citet{Zhang24_het} proposes a policy for fully heterogeneous WCMDPs; the optimality gap bound of the policy interpolates between $O(1/\sqrt{N})$ and $O(1/N)$ as the degree of non-degeneracy varies, unifying the performance bounds in the degenerate and non-degenerate cases.

Despite the generality of the settings and the fast diminishing rate of the optimality gaps as $N \to \infty$, most of the optimality gaps in the finite-horizon setting depend \emph{super-linearly} on the time horizon, so they do not carry over to the infinite-horizon average-reward setting.
There are two exceptions, \citep{BroZha_23,GasGauYan_24_reopt}, which achieve optimality gaps that depend linearly on the time horizon under some special conditions: \citep{BroZha_23} requires all entries of the transition kernels to be bounded away from zero; \citep{GasGauYan_24_reopt} assumes an ergodicity property, which requires two arms in any different states to synchronize in a fixed number of steps under \emph{any sequence of actions} with a positive probability.
However, without these conditions, the optimality gaps in \citep{BroZha_23,GasGauYan_24_reopt} depend quadratically on the time horizon.
Apart from having distinct optimality gap bounds, all existing algorithms in the finite-horizon setting need to (sometimes repeatedly) solve LPs whose number of variables scales with the time horizon, so they cannot be directly adapted to the infinite-horizon average-reward setting.

\paragraph{Infinite-horizon discounted-reward RBs and WCMDPs.}
Asymptotic optimality has also been established for RBs and WCMDPs under the infinite-horizon discounted-reward criterion.
In particular, \citet{BroSmi_19_rb} establish an $O(N^{\log_2 (\sqrt{\gamma})})$ optimality gap for fully heterogeneous WCMDPs when $\gamma \in (1/2, 1)$.
Subsequently, \citet{ZhaFra_22_discounted_rb,GhoNagJaiTam_23_finite_discount} obtain $O(1/\sqrt{N})$ optimality gaps for homogeneous and typed heterogeneous RBs, and \citet{BroZha_23} establish the same order of optimality gap for fully heterogeneous WCMDPs.
Similar to the finite-horizon setting, most of these optimality gaps depend super-linearly on the effective time horizon $1/(1-\gamma)$ unless special conditions hold \citep{BroZha_23}, so they do not carry over to the infinite-horizon average-reward setting.
The policies here also require solving LPs whose complexities scale with the effective time horizon.

\paragraph{Restful bandits, stochastic multi-armed bandits.}
A special case of RB is the restful bandit (also referred to as nonrestless bandits, rested bandits, or Markovian bandits), where an arm's state does not change if it is not pulled.
The restful bandit problem has been widely studied, where the celebrated Gittins index policy is proven to be optimal \citep{GitJon_74,Git_79,BerNin_96,Tsi_94,Web_92,VarWalBuy_85,Whi_80}. We refer the readers to \citep{GitGlaWeb_11} for a comprehensive review of Gittins index and restful bandits. 
Another related topic is the stochastic multi-armed bandit (MAB) problem, which has been extensively studied; see the book \citep{LatSze_20} for a comprehensive overview. 
The key distinction between MABs and RBs is that arms are stateless in MABs, but stateful in RBs. Consequently, MAB becomes trivial with known model parameters, whereas RB is still non-trivial.

\section{Experimental details}
\label{app:details}
In this appendix, we provide details of the two WCMDP instances considered in \Cref{sec:experiments}, and the definition of the baseline policy, the ERC policy from \citep{XioWanLi_22}. 
The complete code for these experiments is available on GitHub \citep{ZhaHonWan_25_github}, and all results can be reproduced within 24 hours on a standard PC (e.g., 6-Core Intel Core i7).

\subsection{WCMDP instance generation}
\label{app:exp_details}

\paragraph{Details of the WCMDP instance 1 (In Figure~\ref{fig:multi-constr}).}
$|\sspa| = 10$, $|\aspa| = 4$, $K = 4$.
For each $i\in[N]$, $s\in\sspa$, and $a\in\aspa$, $r_i(s,0) = 0$, and $r_i(s,a)$ is independently sampled from the uniform distribution over $[0,1]$ for each $a\neq 0$; $P_{i}(s,a,\cdot)$ is independently sampled from the uniform distribution over the probability simplex. 
As for the cost function, for each $i\in[N]$, $s\in\sspa$, $a\in\aspa$, and $k\in[K]$, we have $c_{k,i}(s,0)=0$, and $c_{k,i}(s,a)$ is independently sampled from the uniform distribution over $[0,1]$ for each $a\neq 0$. 
For each $k\in[K]$, $\alpha_k$ is uniformly sampled from $\{0.05, 0.1, 0.15, \dots, 0.45\}$, i.e., the uniform distribution over integer multiples of $0.05$ in the interval $(0,0.5)$.

\paragraph{Details of the WCMDP instance 2 (In Figure~\ref{fig:single-constr}).}
This instance is typed heterogeneous with $10$ types, with equal fraction of arms in each type. Each arm has $|\sspa| = 10$ states, $|\aspa| = 4$ actions. 
For each type, the reward function and transition kernel are generated in the same way as each arm in ithe first instance. The budget is also sampled from the same distribution as the first instance. 
There is a single budget constraint. The cost function depends only on the action, consistent with the setting in \citep{XioWanLi_22}; the cost of each of action $a\in\aspa$ is sampled from the uniform distribution over $[0,1]$.

\subsection{Definition of ERC policy}
\label{app:erc_details}

The ERC policy \citep{XioWanLi_22} solves the same LP in \eqref{eq:lp} to define the single-armed policies $\pibs_i(a|s)$ as in \eqref{eq:optimal-single-armed-policy}. At each time step, the policy computes an index for each arm-state pair $(i,s) \in [N] \times \sspa$:
\[
    \mathcal{I}(i,s) = \sum_{a\in\aspa} \pibs_i(a|s) r_i(s,a).
\]
It then iterates through the arms in descending order of these indices. For each arm, it samples an action $a \sim \pibs_i(\cdot \mid s)$ and applies it only if the budget is sufficient. If not, it defaults to action 0. %

\section{Proving the LP relaxation}\label{app:proof-lem-lp-upper-bound}
In this section, we prove Lemma~\ref{lem:lp upper bound}, which shows that the linear program in \eqref{eq:lp} is a relaxation of the WCMDP problem. \Cref{lem:lp upper bound} is restated as follows.  

\lprelaxation*

\begin{proof}
To upper bound the optimal reward of the WCMDP, $R^*(N,\bm{S}_0)$, we observe that standard MDP theory ensures that a stationary Markovian policy achieves the optimal reward, as the WCMDP has finitely many system states and system actions \citep[Theorem 9.18]{Put_05}. 
Therefore, it suffices to show that $R(\pi,\bm{S}_0) \leq \rrel$ for any stationary policy $\pi$ and initial system state $\bm{S}_0$. 

For any stationary policy $\pi$, consider the \emph{state-action frequency under $\pi$}, given by  
\begin{align*}
    y^\pi_i(s,a) = \lim_{T\to\infty} \frac{1}{T} \sum_{t=0}^{T-1} \EE\mbracket{\mathbf{1}\sets{S_{i,t}^\pi=s,A_{i,t}^\pi=a}}\,,~~\forall s\in\sspa, a\in\aspa, i\in[N]\,.
\end{align*}
where the limit is well-defined due to the stationarity of $\pi$. 
We argue that $y^\pi\triangleq (y_i^\pi(s,a))_{i\in[N] s\in\sspa, a\in\aspa}$ is a feasible solution to the LP relaxation in \eqref{eq:lp}, with objective value being $R(\pi, \bm{S}_0)$. Then $R(\pi,\bm{S}_0) \leq \rrel$ follows from the optimality of $\rrel$. 

To show that $y^\pi$ satisfies the budget constraints of the LP relaxation \eqref{eq:lp:budget-constraint}, we compute as follows: for any $s\in\sspa$, $a\in\aspa$ and constraint $k\in [K]$, we have
\begin{align*}
    \sumN c_{k,i}(s,a) y^\pi_i(s,a) 
    &= \sumN \sum_{s\in\sspa,a\in\aspa} c_{k,i}(s,a) \lim_{T\to\infty} \frac{1}{T} \sum_{t=0}^{T-1}\EE\mbracket{\mathbf{1}\sets{S_{i,t}^\pi=s,A_{i,t}^\pi=a}}\\
    &= \lim_{T\to\infty} \frac{1}{T}\EE\mbracket{ \sumN\sum_{s\in\sspa,a\in\aspa}c_{k,i}(S_{i,t}^\pi,A_{i,t}^\pi)}\\
    &\leq \alpha_k N\,,
\end{align*}
where the inequality follows from the fact that  under a feasible $N$-armed policy $\pi$, $\sum_{i\in[N]}\sum_{s\in\sspa,a\in\aspa} c_{k,i}(S_{i,t}^\pi,A_{i,t}^\pi) \leq \alpha_k N$ for each budget constraint $k\in[K]$.

Then we verify that $y^\pi$ satisfies the stationarity constraint of the LP relaxation \eqref{eq:lp:stationarity-constraint}: for any state $s\in\sspa$ and arm $i\in[N]$, we have
\begin{align*}
    \sum_{s'\in\sspa,a'\in\aspa}y_i^\pi(s',a')\PP_i(s \mid s',a')&=\lim_{T\to\infty}\frac{1}{T} \sum_{t=0}^{T-1} \sum_{s'\in\sspa,a'\in\aspa}P(S_{i,t}^\pi=s',A_{i,t}^\pi=a')\PP_i(s\mid s',a')\\
    &=\lim_{T\to\infty} \frac{1}{T} \sum_{t=0}^{T-1}P(S_{i,t+1}^\pi=s) \\
    &= \lim_{T\to\infty} \frac{1}{T} \sum_{t=1}^T P(S_{i,t}^\pi=s)\\
    &= \sum_{a\in\aspa} y^\pi_i(s,a)\,.
\end{align*}

We then argue that for each $i\in [N]$, $(y_i^\pi(s,a))_{s\in\sspa, a\in\aspa}$ is in the probability simplex of $\sspa\times \aspa$, as required by the last constraint in \eqref{eq:lp:probability-constraint}, which is obvious: for any $i\in [N]$ and $s\in\sspa, a\in\aspa$, we have $y^\pi_i(s,a)\geq 0$; for any $i\in[N]$, we have 
\begin{align*}
    \sum_{s\in\sspa,a\in\aspa}y_i^\pi(s,a)= \lim_{T\to\infty}\sum_{t=0}^{T-1} \EE\mbracket{\sum_{s,a}\mathbf{1}\sets{S_{i,t}^\pi=s,A_{i,t}^\pi=a}}=1\,.
\end{align*} Therefore, $y^\pi$ satisfies the constraints of the LP relaxation. 

Finally, we show that the objective value of $y^\pi$ equals $R(\pi,\bm{S}_0)$:
\begin{align*}
    \frac{1}{N}\sumN \sum_{s\in\sspa,a\in\aspa} y_i^\pi(s,a)r_i(s,a)
    &=\frac{1}{N}\sumN \sum_{s\in\sspa,a\in\aspa} r_i(s,a) \lim_{T\to\infty} \frac{1}{T}\sum_{t=0}^{T-1}\EE\mbracket{\mathbf{1}\sets{S_{i,t}^\pi=s,A_{i,t}^\pi=a}}\\
    &= \lim_{T\to\infty} \frac{1}{T} \sum_{t=0}^{T-1}\frac{1}{N}\sumN \EE[r_i(S_{i,t}^\pi,A_{i,t}^\pi)]\\
    &= R(\pi,\bm{S}_0)\,.
\end{align*}
Because $\rrel$ is the optimal value of the LP relaxation, we have $\rrel \geq R(\pi,\bm{S}_0)$ for any stationary policy $\pi$. Taking $\pi$ to be the optimal policy finishes the proof. 
\end{proof}

\section{More details and properties of the ID reassignment algorithm}\label{app:more-ID-reassignment}

In this section, we provide more details and properties of the ID reassignment algorithm in Algorithm~\ref{alg:id-assign}.
We first explain the rationale of the algorithm at a high level, and then give a key property of it in Lemma~\ref{lem:positiveC}.

Recall that for each arm $i\in[N]$ and each cost type $k\in[K]$, the expected cost under the optimal single-armed policy is defined as $C_{k,i}^* =\sum_{s\in\sspa,a\in\aspa}y_i^*(s,a)c_{k,i}(s,a)$.
Based on $C_{k,i}^*$'s, we divide the cost constraints into \emph{active} constraints and \emph{inactive} constraints as follows.
For each cost type $k\in[K]$, we say the type-$k$ cost constraint is \emph{active} if
$\sum_{i\in[N]}C_{k,i}^*\ge \frac{\alpha_k}{2}N,$
and \emph{inactive} otherwise.
Let $\cA\subseteq [K]$ denote the set of cost types corresponding to active constraints.

Now consider a subset $D\subseteq [N]$ of arms.
For each $k\in[K]$, let $C_k^*(D)=\sum_{i\in D}C_{k,i}^*$,
i.e., $C_k^*(D)$ is the total expected type-$k$ cost for arms in $D$ under the optimal single-armed policies.
In our analysis, we often need to consider a notion of \emph{remaining} budget, defined as
\begin{equation}\label{eq:remain-bud}
    \rembud_k(D)=
    \begin{cases}
        \alpha_k N-C_k^*(D),\quad&\text{if }k\in\cA,\\
        \alpha_k N-C_k^*(D)-\frac{\alpha_k}{3}|D|,\quad&\text{otherwise},
    \end{cases}
\end{equation}
where the $\frac{\alpha_k}{3}|D|$ is a correction term when type-$k$ constraint is inactive. 
Note that the budget constraint \eqref{eq:lp:budget-constraint} of the LP relaxation implies that $C_k^*(D) \leq \alpha_k N$. Consequently, $\rembud_k(D)\ge 0$ and $C_k^*(D)+\rembud_k(D)\le \alpha_k N$ for all $k\in[K]$ and all $D\subseteq [N]$.

The remaining budgets play a key role in the design of the ID reassignment algorithm.
Roughly speaking, the goal of the ID reassignment is to ensure that when we expand a set of arms from $[n_1]$ to $[n_2]$ for some $n_1\le n_2$, the drop in the remaining budget of any type $k$, i.e., $\rembud_k([n_1])-\rembud_k([n_2])$, is (almost) at least linear in $n_2-n_1$.
Note that this property is automatically satisfied for $k$ if the type-$k$ constraint is inactive.
This property is formalized in Lemma~\ref{lem:positiveC}, and the need for it will become clearer in \Cref{sec:proof-main} when we introduce the so-called focus set in our analysis.

To achieve this desired property, we design our ID reassignment algorithm in the following way.
If the set of active constraints is empty, i.e., $\cA=\emptyset$, then there is no need to reassign the IDs. 
Otherwise, i.e., when $\cA\neq \emptyset$, we first carefully choose two parameters, a positive real number $\costIDreassign$ and a positive integer $\gsizeIDreassign$.
We then divide the full ID set $[N]$ into groups of size $\gsizeIDreassign$, i.e., $[\gsizeIDreassign],[\gsizeIDreassign+1:2\gsizeIDreassign],[2\gsizeIDreassign+1:3\gsizeIDreassign],\dots,[(\lfloor N/\gsizeIDreassign\rfloor-1)\gsizeIDreassign+1: \lfloor N/\gsizeIDreassign\rfloor \gsizeIDreassign]$, and the remainder.
We ensure that after the reassignment, each group contains at least one arm $i$ with $C_{k,i}^*\ge \costIDreassign$ for each active constraint type $k\in\cA$.

The key here is to choose $\costIDreassign$ and $\gsizeIDreassign$ properly so such a reassignment is feasible.
In particular, we choose $\costIDreassign=\alpha_{\min}/4$, where $\alpha_{\min}=\min_{k\in[K]}\alpha_k$, and let
$\gsizeIDreassign = \left\lceil \frac{(c_{\max}-\costIDreassign)K}{\alpha_{\min}/2-\costIDreassign}\right\rceil.$
Note that $\gsizeIDreassign\ge K$ since one can verify that $c_{\max}\ge \alpha_{\min}/2$ when $\cA\neq \emptyset$.

\begin{restatable}{lemma}{positiveC}\label{lem:positiveC}
After applying the ID reassignment algorithm (Algorithm~\ref{alg:id-assign}), for any $n_1,n_2$ with $1\le n_1\le n_2\le N$, we have
\begin{equation}\label{eq:remaining-budget-decrease}
    \rembud_k([n_1])-\rembud_k([n_2])\ge \eta_c(n_2-n_1)-M_c,
\end{equation}
for all $k\in[K]$,
where $\eta_c>0$ and $M_c>0$ are constants determined by $\alpha_{\min}$, $c_{\max}$, and $K$.

Further, let $\rembud(D)=\min_{k\in[K]}\rembud_k(D)$ for all $D\subseteq [N]$.  Then the bound \eqref{eq:remaining-budget-decrease} implies that for any $n_1,n_2$ with $1\le n_1\le n_2\le N$,
\begin{equation}
    \rembud([n_1])-\rembud([n_2])\ge \eta_c(n_2-n_1)-M_c.
\end{equation}
\end{restatable}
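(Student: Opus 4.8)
The plan is to prove the per-type bound \eqref{eq:remaining-budget-decrease} separately for inactive and active constraints, and then deduce the statement for $\rembud(D)=\min_k\rembud_k(D)$ by a short $\min$ argument. First I would dispatch the easy case. For an inactive type $k\notin\cA$, the definition \eqref{eq:remain-bud} gives
\[
\rembud_k([n_1])-\rembud_k([n_2]) = \sum_{i=n_1+1}^{n_2}C_{k,i}^* + \frac{\alpha_k}{3}(n_2-n_1) \ge \frac{\alpha_{\min}}{3}(n_2-n_1),
\]
using $C_{k,i}^*\ge 0$ and $\alpha_k\ge\alpha_{\min}$, which already beats the claim with room to spare in the constant. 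For an active type $k\in\cA$ the correction term is absent, so $\rembud_k([n_1])-\rembud_k([n_2]) = \sum_{i=n_1+1}^{n_2}C_{k,i}^* = C_k^*([n_1+1:n_2])$, and the whole content of the active case is to lower-bound this partial sum linearly in $n_2-n_1$.

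The key step, and the heart of the argument, is to show that the reassignment makes each complete block carry enough active cost: for every $\cI(\ell)$ with $\ell=0,\dots,\lfloor N/\gsizeIDreassign\rfloor-1$ and every $k\in\cA$, $\sum_{i\in\cI(\ell)}C_{k,i}^*\ge\costIDreassign$. This is exactly what the inner loop (lines \ref{alg:line:reassign-start}--\ref{alg:line:reassign-end}) enforces: whenever the type-$k$ cost accumulated in the current block is still below $\costIDreassign$, it inserts a fresh arm drawn from $\cD_k=\{i\colon C_{k,i}^*\ge\costIDreassign\}$, pushing the block's type-$k$ total to at least $\costIDreassign$; since later insertions and the final random fill only increase the nonnegative sum, the invariant survives to the end. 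The hard part will be verifying that this is actually \emph{executable}, i.e.\ that $\cD_k$ is never empty when the loop reaches into it. For this I would use the counting tied to $\costIDreassign=\alpha_{\min}/4$ and $\gsizeIDreassign=\lceil(\cmax-\costIDreassign)K/(\alpha_{\min}/2-\costIDreassign)\rceil$: from $C_{k,i}^*\le\cmax$, from splitting $\sum_iC_{k,i}^*$ over $\cD_k$ and its complement (where each term is $<\costIDreassign$), and from the activeness bound $\sum_iC_{k,i}^*\ge\frac{\alpha_k}{2}N$, one gets $|\cD_k|\ge N(\alpha_{\min}/2-\costIDreassign)/(\cmax-\costIDreassign)$; meanwhile the loop performs at most $K$ insertions per block and at most $\lfloor N/\gsizeIDreassign\rfloor$ blocks, hence at most $KN/\gsizeIDreassign$ insertions in total, and the choice of $\gsizeIDreassign$ makes $KN/\gsizeIDreassign\le N(\alpha_{\min}/2-\costIDreassign)/(\cmax-\costIDreassign)\le|\cD_k|$. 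Since each insertion removes at most one arm from each $\cD_k$, no $\cD_k$ can be exhausted before all insertions finish, so every pick succeeds.

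Granting the invariant, I would finish the active case by counting the complete blocks inside $[n_1+1:n_2]$. An interval of length $L=n_2-n_1$ contains at least $L/\gsizeIDreassign-2$ complete blocks (losing at most a partial block at each end, with the uncovered remainder region contributing nonnegatively), and each such block adds at least $\costIDreassign$ to $C_k^*([n_1+1:n_2])$; hence $C_k^*([n_1+1:n_2])\ge\costIDreassign(L/\gsizeIDreassign-2) = (\costIDreassign/\gsizeIDreassign)L-2\costIDreassign$, the bound holding trivially when its right-hand side is negative. Taking $\eta_c=\costIDreassign/\gsizeIDreassign$ and $M_c=2\costIDreassign$ then covers the active case, and the inactive case obeys the same bound since $\alpha_{\min}/3\ge\eta_c$ (as $\gsizeIDreassign\ge 1$) and its additive constant may be taken to be $0$. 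This establishes \eqref{eq:remaining-budget-decrease} uniformly over $k\in[K]$, with $\eta_c$ and $M_c$ depending only on $\alpha_{\min},\cmax,K$ through $\costIDreassign$ and $\gsizeIDreassign$.

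Finally, for the $\min$ version let $k^\star\in\argmin_k\rembud_k([n_1])$, so that $\rembud([n_1])=\rembud_{k^\star}([n_1])$ while $\rembud([n_2])=\min_k\rembud_k([n_2])\le\rembud_{k^\star}([n_2])$. Combining these with the per-type bound applied at $k^\star$ gives
\[
\rembud([n_1])-\rembud([n_2]) \ge \rembud_{k^\star}([n_1])-\rembud_{k^\star}([n_2]) \ge \eta_c(n_2-n_1)-M_c,
\]
which is the desired conclusion. Overall, the routine parts are the two case computations and the block-counting arithmetic; the single genuine obstacle is the feasibility of the reassignment loop, which is precisely what forces the particular quantitative choices of $\costIDreassign$ and $\gsizeIDreassign$.
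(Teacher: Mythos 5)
Your proposal is correct and follows essentially the same route as the paper's proof: the inactive case is dispatched via the $\frac{\alpha_k}{3}|D|$ correction term, the active case rests on the per-block invariant $\sum_{i\in\cI(\ell)}C_{k,i}^*\ge\costIDreassign$ whose feasibility is verified by the same counting of $|\cD_k|$ against the total number of insertions, the linear lower bound comes from the same ``at least $L/\gsizeIDreassign-2$ complete blocks'' count, and the $\min$ step is the same elementary inequality. The constants you obtain ($\eta_c=\costIDreassign/\gsizeIDreassign$, $M_c=2\costIDreassign$) coincide with the paper's up to the paper's cosmetic $\min$ with $\alpha_{\min}/3$.
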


\begin{proof}
Our goal is to prove that for any $n_1,n_2$ with $1\le n_1\le n_2\le N$, we have
\begin{equation}
    \rembud_k([n_1])-\rembud_k([n_2])\ge \eta_c(n_2-n_1)-M_c,
\end{equation}
for all $k\in[K]$, where
\begin{align*}
    \eta_c = \min\sets{\frac{\alpha_{\min}}{3},\costIDreassign\cdot\left(\left\lceil \frac{(c_{\max}-\costIDreassign)K}{\alpha_{\min}/2-\costIDreassign}\right\rceil\right)^{-1}}, \quad M_c = 2\costIDreassign.
\end{align*}

\paragraph{Case 1: $\cA=\emptyset$.}
For any $k\in[K]$, by the definition of the remaining budget in \eqref{eq:remain-bud},
\begin{align*}
    \rembud_k([n_1])-\rembud_k([n_2])&=\left(\alpha_k N - C_k^*([n_1])-\frac{\alpha_k}{3}n_1\right) - \left(\alpha_k N - C_k^*([n_2])-\frac{\alpha_k}{3}n_2\right)\\
    &= C_k^*([n_2])- C_k^*([n_1]) + \frac{\alpha_k}{3}(n_2-n_1)\\
    &\ge \frac{\alpha_k}{3}(n_2-n_1)\\
    &\ge \eta_c(n_2-n_1)-M_c.
\end{align*}

\paragraph{Case 2: $\cA\neq \emptyset$.}
In this case, for any $k\notin \cA$, following the same arguments as those in the previous paragraph, we again get $\rembud_k([n_1])-\rembud_k([n_2])\ge \eta_c(n_2-n_1)-M_c.$

Now consider any $k\in\cA$.
Let $\cD_k=\{i\in[N]\colon C_{k,i}^*\ge \costIDreassign\}.$
We first show that
\begin{equation}\label{eq:sizeDk}
\abs{\cD_k}\geq \frac{(\alpha_k/2-\costIDreassign)N}{c_{\max}-\costIDreassign}.
\end{equation}
Note that 
\begin{equation}
    \sum_{i\in [N]} C_{k,i}^* \leq c_{\max}|\cD_k|+\costIDreassign (N-|\cD_k|).
\end{equation}
Also since the type-$k$ constraint is active,
\begin{equation}
    \sum_{i\in[N]} C_{k,i}^* \ge \frac{\alpha_k}{2}N.
\end{equation}
Combining these two inequalities and recalling that $\costIDreassign = \alpha_{\min}/4 < \alpha_k/2$, where $\alpha_k/2 \le c_{\max}$ since $\alpha_kN/2\le \sum_{i\in[N]} C_{k,i}^*\le c_{\max}N$,
gives \eqref{eq:sizeDk}.

We next argue that $\cD_k$ contains enough arms to ensure the ID reassignment steps from line~\ref{alg:line:reassign-start} to line~\ref{alg:line:reassign-end} in Algorithm~\ref{alg:id-assign} can be performed.
Observe that for each $\ell = 0,1,\dots,\lfloor N/ \gsizeIDreassign\rfloor -1$, these steps remove at most $K$ elements from each $\cD_k$.
To confirm $\cD_k$ contains enough arms, note that
\begin{align*}
    \abs{\cD_k}&\geq \frac{(\alpha_k/2-\costIDreassign)N}{c_{\max}-\costIDreassign}\\
    &\ge KN\cdot \frac{\alpha_{\min}/2-\costIDreassign}{(c_{\max}-\costIDreassign)K}\\
    &\ge KN\cdot\frac{1}{\gsizeIDreassign}\\
    &\ge K\lfloor N/\gsizeIDreassign\rfloor,
\end{align*}
where we used the definition $\gsizeIDreassign = \left\lceil \frac{(c_{\max}-\costIDreassign)K}{\alpha_{\min}/2-\costIDreassign}\right\rceil$.

We are now ready to prove the inequality $\rembud_k([n_1])-\rembud_k([n_2])\ge \eta_c(n_2-n_1)-M_c$ for any $n_1,n_2$ with $1\le n_1\le n_2\le N$.
Consider the arms with new IDs in $[n_1:n_2]$.
Let $g$ be the number of groups from groups of the form $\cI(\ell)=[\ell \gsizeIDreassign+1:(\ell+1)\gsizeIDreassign]$ with $\ell=0,1,\dots,\lfloor N/\gsizeIDreassign\rfloor -1$ that are completely contained in $[n_1:n_2]$.
Then it is easy to see
\begin{equation*}
    g \ge \frac{n_2-n_1}{\gsizeIDreassign}-2.
\end{equation*}
Since Algorithm~\ref{alg:id-assign} ensures that $\sum_{i\in[N]}C_{k,i}^*\mathbbm{1}\{\newID(i)\in \cI(\ell)\}\ge \costIDreassign$ for each $\ell$, we know that
\begin{equation*}
    C_k^*([n_1:n_2])\ge \left(\frac{n_2-n_1}{\gsizeIDreassign}-2\right)\costIDreassign.
\end{equation*}
Therefore,
\begin{align*}
    \rembud_k([n_1])-\rembud_k([n_2]) & = \alpha_k N-C_k^*([n_1])-(\alpha_k N-C_k^*([n_2]))\\
    &=C_k^*([n_2])-C_k^*([n_1])\\
    &=C_k^*([n_1:n_2])\\
    &\ge \left(\frac{n_2-n_1}{\gsizeIDreassign}-2\right)\costIDreassign\\
    &\ge \eta_c(n_2-n_1)-M_c.
\end{align*}

For $\rembud(D)=\min_{k\in[K]}\rembud_k(D)$, it is straightforward to verify that
\begin{align*}
    \rembud([n_1])-\rembud([n_2]) &= \min_{k\in[K]}\rembud_k([n_1]) - \min_{k\in[K]}\rembud_k([n_2])\\
    &\ge \min_{k\in[K]}\left(\rembud_k([n_1]) - \rembud_k([n_2])\right)\\
    &\ge \eta_c(n_2-n_1)-M_c,
\end{align*}
which completes the proof.
\end{proof}

\section{Proof of main result (Theorem~\ref{thm:opt-gap-bound})}
\label{sec:proof-main}

As outlined in the technical overview in Section~\ref{sec:result-tech-overview}, the core of our proof is the construction of a Lyapunov function.
The Lyapunov function we construct is the following
\begin{equation}
    \label{eq:main-lyapunov-def}
    V(\mx) = h_{\ID}(\mx,m(\mx))+L_hN\cdot(1-m(\mx)).
\end{equation}
In the rest of this section, we first define the functions $h_{\ID}(\cdot,\cdot)$ and $m(\cdot)$, along with the constant $L_h$.
We then proceed to analyze the Lyapunov function $V$ to upper bound on the optimality gap.

\paragraph{Defining $h_{\ID}(\cdot,\cdot)$ using subset Lyapunov functions.}
We first construct a Lyapunov function indexed by a subset of arms $D\subseteq [N]$, denoted as $h(\mx,D)$, which is viewed as a function of the system state $\mx$, and it is referred to as a \emph{subset Lyapunov function}.

For each cost type $k\in[K]$, let $c_{k,i}^*(s)=\sum_{a\in\aspa} \bar{\pi}_i^*(a|s)c_{k,i}(s,a)$,
and let $c_{k}^*=(c_{k,i}^*)_{i\in[N]}$ denote the vector of the functions $c_{k,i}^*$'s.
In addition, let $r_i^*(s)=\sum_{a\in\aspa}\bar{\pi}_i^*(a|s)r_i(s,a)$,
and let $r^*=(r_i^*)_{i\in[N]}$ denote the vector of the functions $r_i^*$'s.  We combine these vectors into a set $\mathcal{G}=\{c_1^*,c_2^*,\dots,c_K^*,r^*\}$.

The subset Lyapunov function is then defined as
\begin{equation}\label{eq:h-def}
    h(\mx,D) = \max_{g\in \mathcal{G}} \sup_{\ell\in\mathbb{N}} \abs{\sum_{i\in D} \innerproduct{(x_i-\mu_i^*) P_i^{\ell} / \gamma^{\ell}}{g_i}}.
\end{equation}
Here recall that $\mx$ is an $N\times\cardS$ matrix whose $i$th row, $x_i$, describes the state of arm~$i$; $P_i$ is the transition probability matrix for arm~$i$ under the optimal single-armed policy; $\gamma = \exp(-1/(2\tau))$ for $\tau$ defined in \Cref{ass:unichain}.
To build intuition for $h(\mx,D)$, consider the term corresponding to $g=r^*$ and $\ell = 0$.
In this case, the term $\sum_{i\in D}\innerproduct{x_i-\mu_i^*}{g_i}$ measures the difference between the reward obtained by applying the optimal single-armed policies to arms in $D$ and the reward upper bound given by the LP relaxation.
A similar interpretation holds for the differences in costs.

In Lemma~\ref{lem:drift} below, we show that $h(\mx,D)$ is well-defined and establish its two key properties, which play a critical role in our analysis.  The proof of \Cref{lem:drift} is given in \Cref{app:proof-lem-drift}.
\begin{restatable}{lemma}{hproperties}\label{lem:drift}
The Lyapunov function $h(\mx, D)$ defined in \eqref{eq:h-def} is finite for all system state $\mx$ and subset $D\subseteq[N]$. 
Moreover, $h(\mx, D)$ has the following properties.
\begin{enumerate}
    \item \textbf{(Lipschitz continuity)} There exists a Lipschitz constant $L_h$ such that for each system state $\mx$ and $D'\subseteq D\subseteq [N]$, we have
    \begin{equation}
        \label{eq:h-lipschitz}
        \abs{h(\mx,D)-h(\mx,D')}\leq L_h\abs{D/D'}\,.
    \end{equation}
    \item \textbf{(Drift condition)} If each arm in $D$ takes the action sampled from the optimal single-armed policy, i.e., ${A}_{i,t}\sim\bar{\pi}_i^*(\cdot\mid S_{i,t})$, then there exists a constant $C_h > 0$ such that
    \begin{equation}
        \label{eq:h-drift-condition}
        \EE\mbracket{\bracket{h(\mX_{t+1},D)-\gamma h(\mX_t,D)}^+\middle|\mX_t,{A}_{i,t}\sim\bar{\pi}_i^*(\cdot\mid S_{i,t}),\forall i\in D}\leq C_h\sqrt{N}.
    \end{equation}
\end{enumerate}
\end{restatable}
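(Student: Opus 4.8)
The crux of all three claims is a single uniform geometric-mixing estimate implied by \Cref{ass:unichain}: since the mixing time of every $P_i$ is at most $\tau$, the standard submultiplicativity of the coupling (Dobrushin) distance gives, for all $i$, $s\in\sspa$, and $\ell\in\mathbb{N}$,
\begin{equation*}
    \norm{P_i^\ell(s,\cdot)-\mu_i^*}_1 \le 2e\cdot e^{-\ell/\tau}.
\end{equation*}
The whole point of the weighting $1/\gamma^\ell$ with $\gamma=\exp(-1/(2\tau))$ is that $e^{-\ell/\tau}/\gamma^\ell=e^{-\ell/(2\tau)}=\gamma^\ell$, so each weighted term still decays geometrically. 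Writing $g_{\max}=\max\{\rmax,\cmax\}$ (which bounds $\norm{g_i}_\infty$ for every $g\in\mathcal{G}$, as $r_i^*,c_{k,i}^*$ are convex combinations of $r_i,c_{k,i}$) and using $x_i=\delta_{s_i}$ with $\mu_i^*P_i^\ell=\mu_i^*$, each summand of \eqref{eq:h-def} is at most $\norm{P_i^\ell(s_i,\cdot)-\mu_i^*}_1\,g_{\max}/\gamma^\ell\le 2eg_{\max}\gamma^\ell$; summing over $i\in D$ and taking $\sup_\ell$ yields $h(\mx,D)\le 2eg_{\max}N<\infty$, giving finiteness. For Lipschitz continuity, abbreviate $f_{g,\ell}(\mx,D)=\sum_{i\in D}\innerproduct{(x_i-\mu_i^*)P_i^\ell/\gamma^\ell}{g_i}$ so that $h(\mx,D)=\max_{g\in\mathcal{G}}\sup_\ell\abs{f_{g,\ell}(\mx,D)}$. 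Since the $\max$/$\sup$ runs over a common index set, the reverse triangle inequality for suprema gives $\abs{h(\mx,D)-h(\mx,D')}\le\sup_{g,\ell}\abs{f_{g,\ell}(\mx,D)-f_{g,\ell}(\mx,D')}$; for $D'\subseteq D$ the difference sums only over $i\in D\setminus D'$ with per-term bound $2eg_{\max}\gamma^\ell\le 2eg_{\max}$, so the claim follows with $L_h=2eg_{\max}$.

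For the drift condition, fix $\mx=\mX_t$ and condition on every arm in $D$ transitioning under its optimal single-armed policy, so $\E{X_{i,t+1}\mid\mx}=x_iP_i$ and the transitions are independent across $i$ (note $h(\cdot,D)$ reads off only the rows $i\in D$). The key step I would carry out is the algebraic \emph{shift identity}, using $\mu_i^*P_i=\mu_i^*$: taking conditional expectation advances the exponent,
\begin{equation*}
    \E{f_{g,\ell}(\mX_{t+1},D)\mid\mx}=\sum_{i\in D}\innerproduct{(x_iP_i-\mu_i^*)P_i^\ell/\gamma^\ell}{g_i}=\gamma f_{g,\ell+1}(\mx,D).
\end{equation*}
This yields the exact decomposition $f_{g,\ell}(\mX_{t+1},D)=\gamma f_{g,\ell+1}(\mx,D)+Z_{g,\ell}$, where $Z_{g,\ell}=\sum_{i\in D}\innerproduct{(X_{i,t+1}-x_iP_i)P_i^\ell/\gamma^\ell}{g_i}$ is a sum of independent, mean-zero terms. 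Since $\abs{\gamma f_{g,\ell+1}(\mx,D)}\le\gamma h(\mx,D)$ for every $\ell$, taking $\max_g\sup_\ell$ of $\abs{f_{g,\ell}(\mX_{t+1},D)}$ gives $h(\mX_{t+1},D)\le\gamma h(\mx,D)+\max_g\sup_\ell\abs{Z_{g,\ell}}$, and hence the pointwise reduction
\begin{equation*}
    \bracket{h(\mX_{t+1},D)-\gamma h(\mx,D)}^+\le\max_{g\in\mathcal{G}}\sup_{\ell\in\mathbb{N}}\abs{Z_{g,\ell}}.
\end{equation*}
Everything then reduces to bounding $\E{\max_g\sup_\ell\abs{Z_{g,\ell}}}$ by $C_h\sqrt N$.

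The main obstacle is precisely this supremum over the \emph{infinite} family $\ell\in\mathbb{N}$, where a naive union bound over $\ell$ diverges; the resolution is again the geometric weighting. Applying the mixing estimate to each increment bounds $\norm{(X_{i,t+1}-x_iP_i)P_i^\ell}_1\le 4e\,e^{-\ell/\tau}$, so each term of $Z_{g,\ell}$ is deterministically at most $4eg_{\max}\gamma^\ell$ in magnitude; being a sum of $\abs{D}\le N$ independent mean-zero terms, a variance bound gives $\E{\abs{Z_{g,\ell}}}\le\sqrt{\Var{Z_{g,\ell}}}\le 4eg_{\max}\gamma^\ell\sqrt N$ for each fixed $(g,\ell)$. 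Rather than controlling the supremum directly, I would dominate it by the sum and interchange expectation and summation,
\begin{equation*}
    \E{\max_{g}\sup_\ell\abs{Z_{g,\ell}}}\le\sum_{g\in\mathcal{G}}\sum_{\ell=0}^\infty\E{\abs{Z_{g,\ell}}}\le 4e(K+1)g_{\max}\sqrt N\sum_{\ell=0}^\infty\gamma^\ell=\frac{4e(K+1)g_{\max}}{1-\gamma}\sqrt N,
\end{equation*}
so the drift bound holds with $C_h=4e(K+1)g_{\max}/(1-\gamma)$; since $1-\gamma=1-e^{-1/(2\tau)}=\Theta(1/\tau)$, this is $O(Kg_{\max}\tau)$, a constant independent of $N$. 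The only genuinely delicate points to verify carefully are the mean-zero/independence structure of the $Z_{g,\ell}$ (inherited from the independence of arm transitions given the sampled ideal actions) and the uniformity of the mixing bound in $i$, which is exactly what the uniform mixing-time bound $\tau$ in \Cref{ass:unichain} provides.
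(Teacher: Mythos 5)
Your proposal is correct and follows essentially the same route as the paper's proof: the same uniform geometric-mixing bound $\norm{(P_i-\Xi_i)^\ell}_\infty \le 2e\,e^{-\ell/\tau}$ (the paper's Lemma~\ref{lem:cgamma}), the same contraction-plus-mean-zero-noise decomposition (the paper phrases it via the intermediate state $\mX_t P$ and the inequality $h(\mX_t P) \le \gamma h(\mX_t)$, which is your shift identity), the same conditional-variance bound on the noise, and the same device of dominating $\sup_\ell$ by $\sum_\ell$ using the geometric weights. Your constants are marginally sharper in places (e.g., taking $\sup_\ell$ of the per-term bound for Lipschitz continuity rather than summing), but the argument is the same.
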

Note that \eqref{eq:h-drift-condition} implies the following more typical form of drift condition
\begin{equation}
    \EE\mbracket{h(\mX_{t+1},D)\middle |\mX_t,{A}_{i,t}\sim\bar{\pi}_i^*(S_{i,t}),\forall i\in D}-h(\mX_{t},D)\leq -(1-\gamma)h(\mX_{t},D)+C_h\sqrt{N}.
\end{equation}

We are now ready to define the function $h_{\ID}(\cdot,\cdot)$ used to construct the Lyapunov function $V$.
For any system state $\mx$ and $m\in[0,1]_N$ (where recall that $[0,1]_N$ is the set of integer multiples of $1/N$ within the interval $[0,1]$), $h_{\ID}(\mx,m)$ is defined as
\begin{equation}
    \label{eq:hid-def}
    h_{\ID}(\mx,m)=\maxmp h(\mx,[Nm']).
\end{equation}
That is, $h_{\ID}(\mx,m)$ is an upper envelope of the subset Lyapunov functions $h(\mx,[Nm'])$'s.
The function $h_{\ID}(\mx,m)$ has properties similar to those in Lemma~\ref{lem:drift}, which we state as Lemma~\ref{lem:hID drift} and prove in Appendix~\ref{sec:properties-hID}.

\paragraph{Focus set.}
We next introduce the concept of the focus set, which is directly tied to the function $m(\cdot)$ in the Lyapunov function $V$.
The focus set is a dynamic subset of arms based on the current system state.
Specifically, for any system state $\mx$, the focus set is defined as the set $[Nm(\mx)]$, where $m(\mx)$ is given by
\begin{align}
    m(\mx)= \max \sets{ m\in[0,1]_N: h_{\ID}(\mx,m)\leq \min_{k\in[K]}\rembud_k([Nm])}.\label{eq:focus set def}
\end{align}

The focus set is introduced because it has several desirable properties that are useful for the analysis.
First, under the ID policy, almost all the arms in the focus set, except for $O(\sqrt{N})$ arms, can follow the optimal single-armed policies.
Additionally, as the focus set evolves over time, it is almost non-shrinking, and its size is closely related to the value of the function $h_{\ID}(\cdot,\cdot)$.
These properties are formalized as Lemmas~\ref{lem:majority_conformal}, \ref{lem:non-shrinking} and \ref{lem:coverage}, which are presented in Appendix~\ref{sec:lem-proof-focus-set}.

\paragraph{Bounding the optimality gap via analyzing the Lyapunov function $V$.}

With the Lyapunov function $V$ fully defined, we now proceed to bound the optimality gap $R^*(N,\bm{S}_0)-R(\pi,\bm{S}_0)$, where the policy $\pi$ is the ID policy.
As outlined in the technical overview in Section~\ref{sec:result-tech-overview}, an upper bound on the optimality gap is established via the following two lemmas.

\begin{restatable}{lemma}{optgapbyV}\label{lem:optimality gap}
Consider any $N$-armed WCMDP with initial system state $\bm{S}_0$ and assume that it satisfies Assumption \ref{ass:unichain}. Let policy $\pi$ be the ID policy.
Consider the Lyapunov function $V$ defined in \eqref{eq:main-lyapunov-def}.
Then the optimality gap of $\pi$ is bounded as
\begin{align*}
    R^*(N,\bm{S}_0)-R(\pi,\bm{S}_0)\leq \frac{2r_{\max}+L_h}{L_h N}\lim_{T\to\infty}\frac{1}{T}\sum_{t=0}^{T-1}\EE\mbracket{V(\mX_{t})} + \frac{2 r_{\max} K_{\conf}}{\sqrt{N}}\,,
\end{align*}
where $L_h$ is the Lipschitz constant in Lemma~\ref{lem:drift} and $K_{\conf}$ is the positive constant in \Cref{lem:majority_conformal}. 
\end{restatable}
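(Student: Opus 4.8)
The plan is to bound $R_N^{\rel}-R(\pi,\bm{S}_0)$ and then invoke \Cref{lem:lp upper bound}, which gives $R^*(N,\bm{S}_0)\le R_N^{\rel}$, to conclude. First I would rewrite the LP optimum in a form matching the $\ell=0$ feature of $h$: since $y_i^*(s,a)=\mu_i^*(s)\bar\pi_i^*(a\mid s)$ and $r_i^*(s)=\sum_a\bar\pi_i^*(a\mid s)r_i(s,a)$, we have $R_N^{\rel}=\frac{1}{N}\sum_{i\in[N]}\innerproduct{\mu_i^*}{r_i^*}$. It therefore suffices to compare the per-step expected reward $\frac1N\sum_i\E{r_i(S_{i,t}^\pi,A_{i,t}^\pi)}$ against $\frac1N\sum_i\innerproduct{\mu_i^*}{r_i^*}$ for each $t$, and then average over $t$ and send $T\to\infty$.

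Fix $t$ and condition on $\mX_t$, under which the focus set $D_t=[Nm(\mX_t)]$ is deterministic. I would split $[N]$ into $D_t$ and its complement. For the $N(1-m(\mX_t))$ arms outside $D_t$, each reward deficit $\innerproduct{\mu_i^*}{r_i^*}-\E{r_i(S_{i,t},A_{i,t})\mid\mX_t}$ has magnitude at most $2\rmax$, contributing at most $2\rmax(1-m(\mX_t))$ after dividing by $N$. For arms inside $D_t$, the key step is to benchmark the realized reward against the ``all-conform'' reward: writing $\chi_i$ for the event that arm $i$ follows its ideal action (so $A_{i,t}=\widehat{A}_{i,t}$ on $\chi_i$ and $A_{i,t}=0$ otherwise), one gets $\innerproduct{X_{i,t}}{r_i^*}-\E{r_i(S_{i,t},A_{i,t})\mid\mX_t}=\E{(1-\chi_i)\big(r_i(S_{i,t},\widehat{A}_{i,t})-r_i(S_{i,t},0)\big)\mid\mX_t}$, which is bounded by $2\rmax\E{1-\chi_i\mid\mX_t}$. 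Summing over $i\in D_t$, the total is at most $2\rmax$ times the expected number of non-conforming focus-set arms, which is at most $K_{\conf}\sqrt{N}$ by \Cref{lem:majority_conformal}.

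It then remains to control $\sum_{i\in D_t}\innerproduct{\mu_i^*-X_{i,t}}{r_i^*}$, which is exactly the $\ell=0$, $g=r^*$ term in the definition \eqref{eq:h-def} of $h$ (here $P_i^0=I$ and $\gamma^0=1$); hence its magnitude is bounded by $h(\mX_t,D_t)=h(\mX_t,[Nm(\mX_t)])\le h_{\ID}(\mX_t,m(\mX_t))$, where the last inequality takes $m'=m(\mX_t)$ in \eqref{eq:hid-def}. Collecting the three contributions gives, conditionally on $\mX_t$, the inequality $R_N^{\rel}-\frac1N\sum_i\E{r_i(S_{i,t},A_{i,t})\mid\mX_t}\le \frac1N h_{\ID}(\mX_t,m(\mX_t))+2\rmax(1-m(\mX_t))+\frac{2\rmax K_{\conf}}{\sqrt N}$. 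Finally I would substitute the two bounds $h_{\ID}(\mX_t,m(\mX_t))\le V(\mX_t)$ and $L_hN(1-m(\mX_t))\le V(\mX_t)$, both immediate from the definition \eqref{eq:main-lyapunov-def} of $V$; the second yields $2\rmax(1-m(\mX_t))\le \frac{2\rmax}{L_hN}V(\mX_t)$, and combining the two produces the coefficient $\frac{2\rmax+L_h}{L_hN}$. Taking expectations over $\mX_t$ by the tower property, averaging over $t=0,\dots,T-1$, and letting $T\to\infty$ yields the claimed bound.

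I expect the main obstacle to be the treatment of focus-set arms that fail to conform: the conform indicator $\chi_i$ is correlated with the sampled ideal action $\widehat{A}_{i,t}$, because arm $i$ conforms only when the cumulative realized cost of arms $1,\dots,i$ stays within budget, so one cannot simply factor the expectation as $\E{\chi_i}\E{r_i(S_{i,t},\widehat{A}_{i,t})}$. The resolution is precisely to benchmark against $\innerproduct{X_{i,t}}{r_i^*}$ rather than against $\innerproduct{\mu_i^*}{r_i^*}$ directly, so that the entire discrepancy collapses into a count of non-conforming arms that \Cref{lem:majority_conformal} controls. A secondary point worth stating carefully is that the $h$-bound must be applied to the \emph{full deterministic prefix} $D_t=[Nm(\mX_t)]$ rather than to the random conforming subset, which is exactly why the non-conforming arms are accounted for in a separate error term.
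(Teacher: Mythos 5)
Your proof is correct and follows essentially the same route as the paper's: reduce to $R_N^{\rel}-R(\pi,\bm{S}_0)$ via Lemma~\ref{lem:lp upper bound}, benchmark the realized reward against the ideal-action reward $\innerproduct{X_{i,t}}{r_i^*}$ so that the discrepancy reduces to a count of non-conforming arms controlled by Lemma~\ref{lem:majority_conformal}, absorb the remaining deviation $\sum_i\innerproduct{\mu_i^*-X_{i,t}}{r_i^*}$ into the subset Lyapunov function, and convert to $V$ via the two bounds $h_{\ID}\le V$ and $1-m(\mX_t)\le V/(L_hN)$. The only (cosmetic) difference is that the paper bounds the full-population deviation by $h_{\ID}(\mX_t,1)$ and then invokes the Lipschitz continuity of $h_{\ID}$ to write $h_{\ID}(\mX_t,1)\le h_{\ID}(\mX_t,m(\mX_t))+L_hN(1-m(\mX_t))=V(\mX_t)$, whereas you split off the complement of the focus set up front and bound those arms' reward deficit directly by $2r_{\max}$ per arm; both yield exactly the same constants.
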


\begin{restatable}{lemma}{driftV}\label{lem:drift-V}
Consider any $N$-armed WCMDP with initial system state $\bm{S}_0$ and assume that it satisfies Assumption \ref{ass:unichain}.
Let $\mX_t$ be the system state at time $t$ under the ID policy.
Consider the Lyapunov function $V$ defined in \eqref{eq:main-lyapunov-def}.
Then its drift satisfies
\begin{equation}
    \label{eq:main-drift-inequality}
    \EE\mbracket{V(\mX_{t+1}) \givenplain \mX_t}-V(\mX_t) \leq -\rhov V(\mX_t) + \Kv \sqrt{N}, 
\end{equation}
which further implies that
\begin{equation}
    \label{eq:main-final-EV-bound}
    \lim_{T\to\infty} \frac{1}{T} \sum_{t=0}^{T-1} \E{V(\mX_{t})} \leq \frac{\Kv\sqrt{N}}{\rhov},
\end{equation}
where $\rhov$ and $\Kv$ are constants whose values are given in the proof. 
\end{restatable}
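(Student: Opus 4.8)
The plan is to assemble the drift of $V$ from three facts about the focus set and the envelope $h_{\ID}$: a \emph{coverage} estimate relating $V$ to $h_{\ID}$, the \emph{non-shrinking} of the focus set, and the \emph{drift} of $h_{\ID}$ along the trajectory. Throughout write $m_t\triangleq m(\mX_t)$. The first step is to record the two-sided comparison
\[
    h_{\ID}(\mX_t,m_t)\le V(\mX_t)\le C_V\, h_{\ID}(\mX_t,m_t)+C',
\qquad C_V=1+L_h/\eta_c,
\]
where the lower bound is immediate from $L_h N(1-m_t)\ge 0$. For the upper bound it suffices to control $N(1-m_t)$ by $h_{\ID}(\mX_t,m_t)$ up to a constant, which is exactly \Cref{lem:coverage}: if $m_t<1$ then maximality in \eqref{eq:focus set def} forces $m_t+1/N$ to violate the budget constraint, so $h_{\ID}(\mX_t,m_t)+L_h \ge h_{\ID}(\mX_t,m_t+1/N)>\rembud([Nm_t+1])$, and the linear budget decay $\rembud([Nm_t])\ge\eta_c N(1-m_t)-M_c$ from \Cref{lem:positiveC} turns this into $N(1-m_t)\le \eta_c^{-1}h_{\ID}(\mX_t,m_t)+O(1)$.

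Next I would reduce the drift of $V$ to a drift of $h_{\ID}$ at the \emph{fixed} fraction $m_t$. Define $\phi(m)=h_{\ID}(\mX_{t+1},m)+L_hN(1-m)$, so $V(\mX_{t+1})=\phi(m_{t+1})$. The Lipschitz bound for $h_{\ID}$ (the $h_{\ID}$-analogue of \eqref{eq:h-lipschitz} in \Cref{lem:hID drift}) gives $h_{\ID}(\mX_{t+1},m')-h_{\ID}(\mX_{t+1},m)\le L_hN(m'-m)$ for $m'\ge m$, which exactly cancels the decrease of $L_hN(1-m)$; hence $\phi$ is non-increasing in $m$. Combining this with the non-shrinking property $m_{t+1}\ge m_t$ (\Cref{lem:non-shrinking}) yields $V(\mX_{t+1})=\phi(m_{t+1})\le\phi(m_t)=h_{\ID}(\mX_{t+1},m_t)+L_hN(1-m_t)$. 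It then remains to bound $\EE\mbracket{h_{\ID}(\mX_{t+1},m_t)\givenplain\mX_t}\le\gamma\, h_{\ID}(\mX_t,m_t)+C\sqrt{N}$, which is the drift statement of \Cref{lem:hID drift} along the ID-policy trajectory: \Cref{lem:majority_conformal} guarantees all but $K_{\conf}\sqrt{N}$ arms of $[Nm_t]$ follow their ideal actions, so the conforming arms contract $h$ by the factor $\gamma$ through \eqref{eq:h-drift-condition}, while the $O(\sqrt{N})$ non-conforming arms perturb each subset value by at most $L_hK_{\conf}\sqrt{N}$ via \eqref{eq:h-lipschitz}.

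Assembling these, taking conditional expectations and writing $L_hN(1-m_t)=V(\mX_t)-h_{\ID}(\mX_t,m_t)$ gives
\[
    \EE\mbracket{V(\mX_{t+1})\givenplain\mX_t}\le V(\mX_t)-(1-\gamma)\,h_{\ID}(\mX_t,m_t)+C\sqrt{N}.
\]
I then convert the $h_{\ID}$ decrease into a decrease of $V$ using the coverage bound $h_{\ID}(\mX_t,m_t)\ge (V(\mX_t)-C')/C_V$ from Step~1, which produces \eqref{eq:main-drift-inequality} with $\rho_V=(1-\gamma)/C_V$ and $K_V=C+(1-\gamma)C'/C_V$ (absorbing the additive constant since $\sqrt{N}\ge1$). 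For the averaged bound \eqref{eq:main-final-EV-bound} I telescope \eqref{eq:main-drift-inequality} over $t=0,\dots,T-1$, use $V\ge0$ together with the boundedness of $V$ on the finite system-state space (so the Cesàro average converges and $\EE\mbracket{V(\mX_0)}/T\to0$), and let $T\to\infty$ to obtain $\lim_{T\to\infty}\tfrac1T\sum_{t=0}^{T-1}\E{V(\mX_t)}\le K_V\sqrt{N}/\rho_V$.

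The main obstacle is the envelope drift in the middle step. Since $h_{\ID}$ is a \emph{maximum} of $O(N)$ subset Lyapunov functions, a naive union bound over fractions would inflate the $C_h\sqrt{N}$ drift of \eqref{eq:h-drift-condition} to $O(N^{3/2})$; the resolution (carried out in \Cref{lem:hID drift}) exploits the $(\cdot)^+$ form of \eqref{eq:h-drift-condition} and the monotone envelope structure so that only the single random maximizing subset, rather than all of them at once, needs to be controlled. A secondary subtlety is the interaction with the non-shrinking step: one must ensure the sampling randomness in the ideal actions cannot shrink the focus set by more than a lower-order amount, which is precisely what \Cref{lem:non-shrinking} secures and what makes the clean inequality $\phi(m_{t+1})\le\phi(m_t)$ legitimate.
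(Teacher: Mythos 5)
Your overall architecture is the same as the paper's: bound $V(\mX_{t+1})$ by $h_{\ID}(\mX_{t+1},m(\mX_t))+L_hN(1-m(\mX_t))$ plus an error from the change in the focus set, apply the envelope drift at the frozen fraction $m(\mX_t)$ (the paper's Lemma~\ref{lem:focus set drift}, assembled from Lemmas~\ref{lem:hID drift} and~\ref{lem:majority_conformal}), and convert the resulting decrease in $h_{\ID}$ into a decrease in $V$ via the sufficient-coverage bound (Lemma~\ref{lem:coverage}), which yields exactly $\rhov=(1-\gamma)/(1+L_h/\eta_c)$. The final telescoping and Ces\`aro limit are also identical.

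The one genuine flaw is your treatment of the focus-set change. You assert ``$m_{t+1}\ge m_t$ (Lemma~\ref{lem:non-shrinking})'' and deduce the pointwise inequality $V(\mX_{t+1})=\phi(m_{t+1})\le\phi(m_t)$ from the monotonicity of $\phi$. But Lemma~\ref{lem:non-shrinking} is only \emph{almost} non-shrinking: it bounds $\EE\mbracket{(m(\mX_t)-m(\mX_{t+1}))^+\givenplain\mX_t}$ by $K_{\mono}/\sqrt{N}$ and does not rule out $m_{t+1}<m_t$ on a given sample path, so the ``clean inequality'' $\phi(m_{t+1})\le\phi(m_t)$ that you call legitimate does not hold pointwise. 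The repair is what the paper does: when $m_{t+1}<m_t$, the Lipschitz continuity of $h_{\ID}$ gives $V(\mX_{t+1})\le\phi(m_t)+2L_hN\,(m(\mX_t)-m(\mX_{t+1}))^+$, and taking conditional expectations converts the extra term into $2L_hK_{\mono}\sqrt{N}$, which must be carried into $\Kv$ (your stated $\Kv$ omits this contribution). A secondary imprecision: for the envelope drift you attribute the $O(\sqrt{N})$ perturbation from non-conforming arms to \eqref{eq:h-lipschitz}, but that inequality controls removal of arms from $D$, not a change in their states; the paper instead uses a coupling with the all-conforming trajectory and a per-arm bound of the same flavor derived inside Lemma~\ref{lem:focus set drift}. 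With these two points corrected the argument goes through and matches the paper's.
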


The proofs of Lemmas~\ref{lem:optimality gap} and \ref{lem:drift-V} are provided in Appendix~\ref{sec:proof-lem-main-theorem}.
It is then straightforward to combine these two lemmas to get Theorem~\ref{thm:opt-gap-bound}.

\section{Lemmas and proofs for subset Lyapunov functions}
\label{sec:lem-proof-subset}
In this section, we prove lemmas for the subset Lyapunov functions $h(\mx,D)$ and $h_{\ID}(\mx,m)$, defined in \eqref{eq:h-def} and \eqref{eq:hid-def}. 
In Section~\ref{sec:preliminary-transition}, we prove a preliminary lemma that helps us utilize \Cref{ass:unichain}. 
In Section~\ref{app:proof-lem-drift}, we prove Lemma~\ref{lem:drift}, which addresses the properties of the subset Lyapunov functions $h(\mx,D)$.
Finally, in Section~\ref{sec:properties-hID}, we present and prove Lemma~\ref{lem:hID drift}, which establishes properties of the function $h_{\ID}(\mx,m)$.

\subsection{Lemma for utilizing mixing time bound of transition matrices}
\label{sec:preliminary-transition}
Recall that in \Cref{ass:unichain}, we assume that for each arm $i$, the mixing times of the transition matrix $P_i$ under the optimal single-armed policy $\pibs$ is uniformly bounded by a constant $\tau$. 
In this section, we prove a lemma that allows us to utilize this assumption. 

Let $\Xi_i$ denote a matrix whose rows are the same vector $\mu_i^*$, where recall that $\mu_i^*$ is the stationary distribution of $P_i$. 
\Cref{lem:cgamma} stated below bounds the infinite series $\sum_{\ell=0}^{\infty} \max_{i\in[N]}\norm{(P_i-\Xi_i)^\ell}_\infty \gamma^{-\ell}$ by a constant independent of $N$.

\begin{lemma}\label{lem:cgamma}
    Suppose \Cref{ass:unichain} holds, and let
    \[
        \gamma = \exp\Big(-\frac{1}{2\tau}\Big) \qquad \Ctau = \frac{4e}{1-1/\sqrt{e}} \tau.
    \]
    Then we have
    \begin{align*}
        \sum_{\ell=0}^{\infty} \max_{i\in[N]}\frac{\norm{(P_i-\Xi_i)^\ell}_\infty}{\gamma^\ell} \leq \Ctau. 
\end{align*}
 Here, recall that $\norm{A}_\infty \triangleq \max_{s\in\sspa} \sum_{s'\in\sspa}\abs{A(s,s')}$ for any matrix $A\in \R^{\sspa\times\sspa}$. 
\end{lemma}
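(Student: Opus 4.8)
The plan is to reduce the matrix series to a scalar geometric series governed by the $\ell_1$ mixing of each chain, and then sum in blocks of length $\tau$ so that the exponential decay coming from mixing dominates the exponential growth of the $\gamma^{-\ell}$ factor.

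First I would record the algebraic structure of $\Xi_i$. Since every row of $\Xi_i$ equals the stationary distribution $\mu_i^*$ and $\mu_i^* P_i = \mu_i^*$, one checks directly that $P_i \Xi_i = \Xi_i$, $\Xi_i P_i = \Xi_i$, and $\Xi_i^2 = \Xi_i$. A one-line induction then yields the identity $(P_i - \Xi_i)^\ell = P_i^\ell - \Xi_i$ for every $\ell \ge 1$, while the $\ell = 0$ term is $I$ with $\norm{I}_\infty = 1$. Consequently $\norm{(P_i-\Xi_i)^\ell}_\infty = \max_{s}\norm{P_i^\ell(s,\cdot)-\mu_i^*}_1 =: \delta_i(\ell)$, which reduces the claim to bounding the scalars $\delta_i(\ell)$ uniformly in $i$.

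Second, I would establish geometric decay of $\delta_i$ at rate $1/e$ per $\tau$ steps. Each $\delta_i$ is non-increasing in $\ell$ because row-stochastic matrices are $\ell_1$-contractions, and Assumption~\ref{ass:unichain} gives $\delta_i(\tau)\le\delta_i(\tmix_i)\le 1/e$. To upgrade this single inequality to genuine exponential decay, I would use the contraction of zero-mass signed measures: for any $v$ with $\sum_s v(s)=0$, splitting $v$ into its positive and negative parts and invoking convexity shows $\norm{vP_i^\tau}_1 \le \tfrac12\bar\delta_i(\tau)\norm{v}_1 \le \delta_i(\tau)\norm{v}_1$, where $\bar\delta_i(\tau)=\max_{s,s'}\norm{P_i^\tau(s,\cdot)-P_i^\tau(s',\cdot)}_1 \le 2\delta_i(\tau)$. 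Applying this with $v = P_i^{k\tau}(s,\cdot)-\mu_i^*$ gives $\delta_i((k+1)\tau)\le \tfrac1e\,\delta_i(k\tau)$, hence $\delta_i(k\tau)\le (1/e)^k$ by induction. Combined with monotonicity and the trivial bound $\delta_i(\ell)\le 2$, this produces the uniform estimate $\max_{i\in[N]}\norm{(P_i-\Xi_i)^\ell}_\infty \le 2\,e^{-\lfloor \ell/\tau\rfloor}$ for all $\ell\ge 0$.

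Finally, I would sum the series in blocks $\ell = k\tau + r$ with $0\le r < \tau$. On each block, $\gamma^{-\ell} = e^{\ell/(2\tau)} \le e^{k/2}\sqrt e$, and the block has $\tau$ terms, so block $k$ contributes at most $2\tau\sqrt e\, e^{-k/2}$; summing the resulting geometric series in $e^{-1/2}=1/\sqrt e$ gives $\sum_{\ell=0}^\infty \max_{i\in[N]}\norm{(P_i-\Xi_i)^\ell}_\infty\,\gamma^{-\ell} \le \frac{2\sqrt e}{1-1/\sqrt e}\,\tau \le \Ctau$. The main obstacle is the second step: converting the lone mixing bound $\delta_i(\tau)\le 1/e$ into the geometric decay $\delta_i(k\tau)\le e^{-k}$, where the ergodic-coefficient contraction for zero-mass measures is essential and must hold uniformly across all arms; the identity of the first step and the block summation of the last step are routine.
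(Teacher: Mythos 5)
Your proof is correct and follows essentially the same route as the paper: the identity $(P_i-\Xi_i)^\ell = P_i^\ell - \Xi_i$, a decay factor of $1/e$ every $\tau$ steps extracted from the mixing-time assumption, and a geometric summation that dominates the $\gamma^{-\ell}$ growth. The only real difference is in the middle step, where the paper gets $\norm{(P_i-\Xi_i)^{k\tau}}_\infty \le e^{-k}$ in one line from submultiplicativity of the induced $\norm{\cdot}_\infty$ norm, whereas you re-derive the same contraction via the Dobrushin ergodic-coefficient argument on zero-mass signed measures --- correct, but more work than necessary.
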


Note that \Cref{lem:cgamma} is the only place in our proofs where \Cref{ass:unichain} is directly invoked; all of our later proofs utilize \Cref{ass:unichain} indirectly by invoking \Cref{lem:cgamma}.

\begin{proof}[Proof of \Cref{lem:cgamma}]
    For each $\ell \in \mathbb{N}$, let $\ell_0 = \ell - \tau \lfloor \ell/\tau\rfloor$. 
    By properties of the operator norm $\norm{\cdot}_\infty$, 
    \begin{equation}
        \label{eq:pf-cgamma-bound:int-divide}
        \norm{(P_i - \Xi_i)^\ell}_\infty \leq \norm{(P_i - \Xi_i)^{\ell_0}}_\infty \norm{(P_i - \Xi_i)^\tau}_\infty^{\lfloor \ell/\tau \rfloor}. 
    \end{equation}
    Because each row of $\Xi_i$ is $\mu_i^*$, it is easy to verify that $(P_i - \Xi_i)^{\ell'} = P_i^{\ell'} - \Xi_i$ for any $\ell' \in \mathbb{N}$. Consequently, 
    \begin{align*}
        \norm{(P_i - \Xi_i)^{\ell_0}}_\infty
        &= \normbig{P_i^{\ell_0} - \Xi_i}_\infty \\
        &= \max_{s\in\sspa} \sum_{s'\in\sspa} \abs{P_i^{\ell_0}(s,s') - \mu_i^*(s')}  \\
        &\leq \max_{s\in\sspa} \sum_{s'\in\sspa} \big(P_i^{\ell_0}(s,s') + \mu_i^*(s')\big) \\
        &= 2.
    \end{align*}
    Moreover, by \Cref{ass:unichain}, it is not hard to see that 
    \[
        \max_{s\in\sspa} \norm{P_i^\tau(s,\cdot) - \mu_i^*(\cdot)}_1 \leq \frac{1}{e},
    \]
    so we have
    \begin{align}
        \nonumber
        \norm{(P_i - \Xi_i)^\tau}_\infty 
        &= \max_{s\in\sspa} \sum_{s'\in\sspa} \abs{P_i^\tau(s,s') - \mu_i^*(s')} \\
        \nonumber
        &= \max_{s\in\sspa} \norm{P_i^\tau(s,\cdot) - \mu_i^*(\cdot)}_1 \\
        \nonumber
        &\leq \frac{1}{e}.
    \end{align}
    Substituting the bounds on $\norm{(P_i - \Xi_i)^\tau}_\infty$ and $\norm{(P_i - \Xi_i)^{\ell_0}}_\infty$ back to \eqref{eq:pf-cgamma-bound:int-divide}, we get 
    \begin{align}
        \nonumber
        \norm{(P_i - \Xi_i)^\ell}_\infty
        &\leq 2\exp\Big(-\Big\lfloor \frac{\ell}{\tau} \Big\rfloor\Big) \\
        &\leq 2\exp\Big(-\frac{\ell}{\tau} + 1\Big).
    \end{align}

    Therefore, 
    \begin{align}
        \nonumber
        \sum_{\ell=0}^{\infty} \max_{i\in[N]}\frac{\norm{(P_i-\Xi_i)^\ell}_\infty}{\gamma^\ell}
        &\leq \sum_{\ell=0}^{\infty} 2\exp\Big(-\frac{\ell}{\tau} + 1\Big) \gamma^{-\ell} \\
        \nonumber
        &= \sum_{\ell=0}^{\infty} 2\exp\Big(-\frac{\ell}{2\tau} + 1\Big) \\
        &= \frac{2e}{1 - \exp(-1/(2\tau))}. 
    \end{align}
    Observe that $\tau \geq 1$, $0 < 1/(2\tau) \leq 1/2$, and $\exp(-a) \leq 1 - (1-1/\sqrt{e}) a$ for any $0 < a \leq 1/2$, so
    \[
        \frac{2e}{1 - \exp(-1/(2\tau))} \leq \frac{4e}{1-1/\sqrt{e}} \tau = \Ctau. \qedhere
    \]
\end{proof}

\subsection{Proof of Lemma~\ref{lem:drift}}
\label{app:proof-lem-drift}

In this subsection, we prove \Cref{lem:drift}, which is about properties of the Lyapunov function $h(\mx,D)$:
\begin{equation}\tag{\ref{eq:h-def}}
    h(\mx,D) \triangleq \max_{g\in \mathcal{G}} \sup_{\ell\in\mathbb{N}} \abs{\sum_{i\in D} \innerproduct{(x_i-\mu_i^*) P_i^{\ell} / \gamma^{\ell}}{g_i}}.
\end{equation}
\Cref{lem:drift} is restated as follows. 
\hproperties*

In the proof of \Cref{lem:drift}, we will frequently use the following equivalent form of $h(\mX, D)$:  
\begin{equation}\label{eq:h-equiv-form}
    h(\mx,D) = \max_{g\in \mathcal{G}} \sup_{\ell\in\mathbb{N}} \abs{\sum_{i\in D} \innerproduct{(x_i-\mu_i^*) (P_i - \Xi_i)^{\ell} / \gamma^{\ell}}{g_i}},
\end{equation}
where $\Xi_i$ is the matrix whose each row is the optimal state distribution of the $i$-th arm, $\mu^*_i$. 
The equation \eqref{eq:h-equiv-form} is equivalent to \eqref{eq:h-def} because $(v_1 - v_2) P_i^\ell = (v_1-v_2) (P_i - \Xi_i)^\ell$ for any $i\in [N]$, $\ell \in \mathbb{N}$, and row vectors $v_1, v_2 \in \Delta(\sspa)$.

\begin{proof}
We organize the proof into three parts: we first show the finiteness of the subset Lyapunov function $h(\mx, D)$; then, we prove the Lipschitz continuity of $h(\mx, D)$ with respect to $D$ \eqref{eq:h-lipschitz}; finally, we prove the drift condition for $h(\mx, D)$ stated in \eqref{eq:h-drift-condition}. 

\paragraph{Finiteness of $h(\mx, D)$.}
To show that $h(\mx,D)$ is finite for any system state $\mx$ and subset $D\subseteq [N]$, we have for any $g\in \mathcal{G}$: 
\begin{align}
    \nonumber
    \supell \abs{\sum_{i\in D} \innerproduct{(x_i-\mu_i^*) P_i^{\ell} / \gamma^{\ell}}{g_i}}
    &\leq \sum_{i\in D} \sumell \abs{\innerproduct{(x_i-\mu_i^*) P_i^{\ell} / \gamma^{\ell}}{g_i}}\\
    \label{eq:pf-subset-lyapunov:finite-1}
    &= \sum_{i\in D}\sumell \abs{\innerproduct{(x_i-\mu_i^*) (P_i-\Xi_i)^{\ell} / \gamma^{\ell}}{g_i}} \\
    \label{eq:pf-subset-lyapunov:finite-2}
    &\leq \sum_{i\in D} \sumell \norm{x_i-\mu_i^*}_1\frac{\norm{(P_i-\Xi_i)^{\ell}}_\infty}{\gamma^{\ell}}\norm{g_i}_\infty,
\end{align}
By \Cref{lem:cgamma}, $\sumell \norm{(P_i-\Xi_i)^{\ell}}_\infty / \gamma^{\ell}$ is finite, so the expression in \eqref{eq:pf-subset-lyapunov:finite-2} is also finite. Taking maximum over $g\in \mathcal{G}$, because $\mathcal{G}$ is a finite set, we have 
\[
    h(\mx, D) = \max_{g\in \mathcal{G}} \supell \abs{\sum_{i\in D} \innerproduct{(x_i-\mu_i^*) P_i^{\ell} / \gamma^{\ell}}{g_i}} < \infty.
\]

\paragraph{Lipschitz continuity.}
For any system state $\mx$ and subsets $D,D'$ such that $D'\subseteq D \subseteq [N]$, we have
\begin{align}
    \nonumber
    \abs{h(\mx,D)-h(\mx,D')}
    &=  \abs{\max_{g\in \mathcal{G}}  \supell 
 \abs{\sum_{i\in D} \innerproduct{(x_i-\mu_i^*) P_i^{\ell} / \gamma^{\ell}}{g_i}} -  
    \max_{g\in \mathcal{G}}  \supell \abs{\sum_{i\in D'} \innerproduct{(x_i-\mu_i^*) P_i^{\ell} / \gamma^{\ell}}{g_i}}} \\
    \label{eq:pf-subset-lyapunov:lipschitz-1}
    &\leq \max_{g\in \mathcal{G}} \supell \abs{\sum_{i\in D/D'}\innerproduct{(x_i-\mu_i^*)P_i^{\ell}/\gamma^{\ell}}{g_i}}. 
\end{align}
Following similar arguments used to proving finiteness of $h(\mx, D)$, we further bound the last expression as: 
\begin{align}
    \nonumber
    \max_{g\in \mathcal{G}} \supell \abs{\sum_{i\in D/D'}\innerproduct{(x_i-\mu_i^*)P_i^{\ell}/\gamma^{\ell}}{g_i}} 
    &\leq  \max_{g\in \mathcal{G}}  \sum_{i\in D/D'} \norm{x_i-\mu_i^*}_1 \norm{g_i}_\infty \sumell \frac{\norm{(P_i-\Xi_i)^{\ell}}_\infty}{\gamma^{\ell}}\\
    \label{eq:pf-subset-lyapunov:lipschitz-2}
    &\leq 2\abs{D/D'} \max\sets{c_{\max},r_{\max}}\Ctau,
\end{align}
where in the last inequality, we utilize the facts that for any $g\in \mathcal{G}$ and $i\in[N]$, $\norm{g_i}_\infty \leq \max\sets{c_{\max},r_{\max}}$,  $\norm{x_i - \mu_i^*}_1 \leq 2$, and $\sumell \norm{(P_i-\Xi_i)^{\ell}}_\infty / \gamma^{\ell} \leq \Ctau$. 
Therefore, $h(\mx, D)$ is Lipschitz continuous in $D$ with the Lipschitz constant 
$L_h = 2\max\sets{c_{\max},r_{\max}}\Ctau$.

\paragraph{Drift condition.}
Next, we prove the drift condition in \eqref{eq:h-drift-condition}, which requires showing 
\[
    \E{(h(\mX_{t+1}, D) - \gamma h(\mX_t, D))^+ \givenplain \mX_t} = O(1/\sqrt{N}),
\]
when the $i$-th arm follows the action generated by $\pibs_i$ for each $i\in D$. 
Because $D$ is fixed in the rest of the proof, for simplicity, we use $h(\mx)$ as shorthand for $h(\mx, D)$. 

We first perform the following decomposition: 
\begin{equation}
    \label{eq:pf-subset-lyapunov:drift-decompose}
    \EE\mbracket{\bracket{h(\mX_{t+1})-\gamma h(\mX_t)}^+\givenmiddle\mX_t} \leq \EE\mbracket{\abs{h(\mX_{t+1})-h(\mX_tP)}\givenmiddle\mX_t} + \bracket{h(\mX_tP)-\gamma h(\mX_t)}^+,
\end{equation}
where $\mX_t P \in \R^{N\times \cardS}$ denotes the matrix whose $i$-th row is given by $(\mX_t P)_i \triangleq X_{i,t}P_i$, which is the state distribution of arm $i$ after one-step of transition from $X_{i,t}$ under the policy $\pibs_i$. 
Next, we bound the two terms on the right-hand side of \eqref{eq:pf-subset-lyapunov:drift-decompose} separately.

We first bound the term $\bracket{h(\mX_tP)-\gamma h(\mX_t)}^+$. Substituting $\mX_t P$ into the definition of $h$, we have
\begin{align}
    \nonumber
    h(\mX_tP) &= \max_{g\in \mathcal{G}}  \supell \abs{\sum_{i\in D} \innerproduct{(X_{i,t}-\mu_i^*)P_i^{\ell+1}/\gamma^{\ell}}{g_i}}\\
    \nonumber
    &=\gamma \max_{g\in \mathcal{G}}  \sup_{\ell\in\mathbb{N}\colon \ell \geq 1}  \abs{\sum_{i\in D} \innerproduct{(X_{i,t}-\mu_i^*)P_i^{\ell}/\gamma^{\ell}}{g_i}}\\
    \label{eq:h-contraction}
    &\leq \gamma h(\mX_t),
\end{align}
so $\bracket{h(\mX_tP)-\gamma h(\mX_t)}^+ = 0$ with probability $1$. 

Next, we bound the term $\EE\mbracket{\abs{h(\mX_{t+1})-h(\mX_tP)}\givenmiddle\mX_t}$. 
Let $\epsilon_{i,t} \in \R^{\sspa}$ be the random vector given by $\epsilon_{i,t} = X_{i,t+1}-X_{i,t}P_i$ for $i\in D$. Then for each state $s\in\sspa$, $\epsilon_{i,t}(s)$ conditioned on $\mX_t$ is a Bernoulli distribution with mean $0$. Consequently, 
\begin{align}
    \nonumber
    &\mspace{23mu} \abs{h(\mX_{t+1})-h(\mX_tP)}\\
    \nonumber
    &\leq \max_{g\in \mathcal{G}} \supell \abs{\absbigg{\sum_{i\in D}\innerproduct{(X_{i,t+1}-\mu_i^*)P_i^{\ell}/\gamma^{\ell}}{g_i}}-\absbigg{\sum_{i\in D} \innerproduct{(X_{i,t}P_i-\mu_i^*)P_i^{\ell}/\gamma^{\ell}}{g_i}}}\\ 
    \nonumber 
    &\leq \max_{g\in \mathcal{G}} \supell \abs{\sum_{i\in D}\innerproduct{\epsilon_{i,t}P_i^{\ell}/\gamma^{\ell}}{g_i}} \\
    \nonumber
    &= \max_{g\in \mathcal{G}} \supell \abs{\sum_{i\in D}\innerproduct{\epsilon_{i,t}(P_i-\Xi_i)^{\ell}/\gamma^{\ell}}{g_i}}\\
    \label{eq:pf-subset-lyapunov:drift-1}
    &= \max_{g\in \mathcal{G}} \supell \gamma^{-\ell} \abs{\sum_{i\in D}\innerproduct{\epsilon_{i,t}}{(P_i-\Xi_i)^{\ell}g_i}}. 
\end{align}
Taking the conditional expectation of $\abs{h(\mX_{t+1})-h(\mX_tP)}$ given $\mX_t$, then \eqref{eq:pf-subset-lyapunov:drift-1} implies that 
\begin{align}
    \nonumber \EE\mbracket{\abs{h(\mX_{t+1})-h(\mX_tP)}\givenmiddle\mX_t}
    &\leq \E{\max_{g\in \mathcal{G}} \supell \gamma^{-\ell} \abs{\sum_{i\in D}\innerproduct{\epsilon_{i,t}}{(P_i-\Xi_i)^{\ell}g_i}} \givenmiddle \mX_t} \\
     \label{eq:pf-subset-lyapunov:drift-1-5}
    &\leq \sum_{g\in \mathcal{G}} \sumell  \gamma^{-\ell} \E{ \abs{\sum_{i\in D}\innerproduct{\epsilon_{i,t}}{(P_i-\Xi_i)^{\ell}g_i}} \givenmiddle \mX_t}.
\end{align}

We bound the conditional expectation in \eqref{eq:pf-subset-lyapunov:drift-1-5} as follows: for any $\ell \in \mathbb{N}$ and $g\in \mathcal{G}$, we have
\begin{align}
    \E{\abs{\sum_{i\in D}\innerproduct{\epsilon_{i,t}}{(P_i-\Xi_i)^{\ell}g_i}} \givenmiddle \mX_t}
    \nonumber
    &\leq \sqrt{\E{\bigg(\sum_{i\in D}\innerproduct{\epsilon_{i,t}}{(P_i-\Xi_i)^{\ell}g_i}\bigg)^2 \givenmiddle \mX_t}}  \\
    \label{eq:pf-subset-lyapunov:drift-2}
    &= \sqrt{\sum_{i\in D}\E{\innerproduct{\epsilon_{i,t}}{(P_i-\Xi_i)^{\ell}g_i}^2 \givenmiddle \mX_t}} \\
    \nonumber
    &\leq  \sqrt{\sum_{i\in D}\E{\Big(\norm{\epsilon_{i,t}}_1 \norm{(P_i-\Xi_i)^{\ell}}_\infty \norm{g_i}_\infty\Big)^2 \givenmiddle \mX_t}} \\
    \label{eq:pf-subset-lyapunov:drift-3}
    &\leq 2 \sqrt{\abs{D}} \max_{i\in D} \norm{g_i}_{\infty}\norm{(P_i-\Xi_i)^{\ell}}_{\infty},
\end{align}
where \eqref{eq:pf-subset-lyapunov:drift-2} uses the facts that $\epsilon_{i,t}$ are independent across $i\in D$ and $\innerproduct{\epsilon_{i,t}}{(P_i-\Xi_i)^{\ell}g_i}$ has zero mean conditioned on $\mX_t$; \eqref{eq:pf-subset-lyapunov:drift-3} is because $\norm{\epsilon_{i,t}}_1 = \norm{X_{i,t+1}-X_{i,t}P_i}_1 \leq 2$ for each $i\in D$. 

Substituting the bound in \eqref{eq:pf-subset-lyapunov:drift-3} back to  \eqref{eq:pf-subset-lyapunov:drift-1-5}, we get 
\begin{align}
    \nonumber \EE\mbracket{\abs{h(\mX_{t+1})-h(\mX_tP)}\givenmiddle\mX_t}
    &\leq 2\sqrt{\abs{D}} \sumell \sum_{g\in \mathcal{G}}  \gamma^{-\ell} \max_{i\in D}  \norm{g_i}_{\infty} \norm{(P_i-\Xi_i)^{\ell}}_{\infty} \\
    \nonumber
    &= 2\sqrt{\abs{D}} \sumell \sum_{g\in \mathcal{G}}   \max_{i\in D}  \norm{g_i}_{\infty} \cdot \max_{i\in D} \gamma^{-\ell}\norm{(P_i-\Xi_i)^{\ell}}_{\infty} \\
    \nonumber
    &\leq 2\sqrt{\abs{D}}  (Kc_{\max} + r_{\max}) \sumell \max_{i\in D}\frac{\norm{(P_i-\Xi_i)^{\ell}}_{\infty}}{\gamma^{\ell}} \\
     \label{eq:pf-subset-lyapunov:drift-5}
    &\leq 2\sqrt{N} (Kc_{\max} + r_{\max}) \Ctau,
\end{align}
where the inequality in \eqref{eq:pf-subset-lyapunov:drift-5} is due to $\sumell \max_{i\in D} \norm{(P_i-\Xi_i)^{\ell}}_{\infty} / \gamma^{\ell} \leq \Ctau$ (\Cref{lem:cgamma}).

Combining the above calculations, we get:
\begin{align*}
    \EE\mbracket{\bracket{h(\mX_{t+1})-\gamma h(\mX_t)}^+\givenmiddle\mX_t} &\leq \EE\mbracket{\abs{h(\mX_{t+1})-h(\mX_tP)}\givenmiddle\mX_t}+ \bracket{h(\mX_tP)-\gamma h(\mX_t)}^+\\
    &\leq 2\sqrt{N} (Kc_{\max} + r_{\max}) \Ctau. 
\end{align*}
Therefore, $\EE\mbracket{\bracket{h(\mX_{t+1})-\gamma h(\mX_t)}^+\givenmiddle\mX_t}\leq C_h \sqrt{N}$ with $C_h = 2(Kc_{\max} + r_{\max}) \Ctau$. 
\end{proof}

\subsection{Properties of $h_{\ID}(\cdot,\cdot)$}
\label{sec:properties-hID}

\begin{lemma}\label{lem:hID drift}
The Lyapunov function $\hid(\mx, m)$ defined in \eqref{eq:hid-def} has the following properties: 
\begin{enumerate}
    \item \textbf{(Lipschitz continuity)} For each system state $\mx$ and $m, m'\in[0,1]_N$, we have
    \begin{equation}
        \label{eq:hid-drift-condition}
        \abs{h_{\ID}(\mx,m)-h_{\ID}(\mx,m')}\leq NL_h\abs{m-m'}\,,
    \end{equation}
    where $L_h > 0$ is the Lipschitz constant given in \Cref{lem:drift}. 
    \item \textbf{(Drift condition)} 
        For each $m\in[0,1]_N$, if all arms in $[Nm]$ follow the optimal single-armed policies, we have: 
        \begin{align*}
            \EE\mbracket{(h_{\ID}(\mX_{t+1},m)-\gamma h_{\ID}(\mX_t,m))^+\mid\mX_t, A_{i,t}\sim\Bar{\pi}_i^*(\cdot\mid S_{i,t}),\forall i \in [Nm]}\leq 2C_h\sqrt{N},
        \end{align*}
        where $C_h > 0$ is the constant given in \Cref{lem:drift}.  
\end{enumerate}
\end{lemma}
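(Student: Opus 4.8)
The plan is to reduce both properties of $h_{\ID}$ to the corresponding properties of the subset Lyapunov function $h$ established in \Cref{lem:drift}, exploiting that $h_{\ID}(\mx,m)=\maxmp h(\mx,[Nm'])$ is an upper envelope of the $h(\mx,[Nm'])$'s. For the Lipschitz bound, I would first observe that $h_{\ID}(\mx,\cdot)$ is nondecreasing in $m$, since enlarging $m$ only enlarges the index set $\{m'\le m\}$ over which we maximize. Hence, taking WLOG $m'\le m$, it suffices to bound $h_{\ID}(\mx,m)-h_{\ID}(\mx,m')$. Let $m^*$ attain the max defining $h_{\ID}(\mx,m)$. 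If $m^*\le m'$ the difference is zero; otherwise $m'<m^*\le m$, and using $h_{\ID}(\mx,m')\ge h(\mx,[Nm'])$ together with the subset-Lipschitz property \eqref{eq:h-lipschitz} of $h$ gives
\[
    h_{\ID}(\mx,m)-h_{\ID}(\mx,m') \le h(\mx,[Nm^*])-h(\mx,[Nm']) \le L_h(Nm^*-Nm') \le NL_h\abs{m-m'},
\]
because $[Nm']\subseteq[Nm^*]$ and these sets differ by $N(m^*-m')$ arms.

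For the drift condition I would mirror the proof of \Cref{lem:drift}. Let $\mX_tP$ be the matrix with $i$-th row $X_{i,t}P_i$. The one-step contraction \eqref{eq:h-contraction} gives $h(\mX_tP,[Nm'])\le\gamma h(\mX_t,[Nm'])\le\gamma h_{\ID}(\mX_t,m)$ for every $m'\le m$; taking the max over $m'$ yields the envelope contraction $h_{\ID}(\mX_tP,m)\le\gamma h_{\ID}(\mX_t,m)$. Combining this with the elementary inequality $\abs{\maxmp a_{m'}-\maxmp b_{m'}}\le\maxmp\abs{a_{m'}-b_{m'}}$, I get
\[
    \big(h_{\ID}(\mX_{t+1},m)-\gamma h_{\ID}(\mX_t,m)\big)^+ \le \maxmp\abs{h(\mX_{t+1},[Nm'])-h(\mX_tP,[Nm'])}.
\]
I then substitute the noise representation from the proof of \Cref{lem:drift} (the estimate leading to \eqref{eq:pf-subset-lyapunov:drift-1}) with $\epsilon_{i,t}=X_{i,t+1}-X_{i,t}P_i$, and swap the order of the suprema so that the max over $m'$ sits innermost:
\[
    \maxmp\abs{h(\mX_{t+1},[Nm'])-h(\mX_tP,[Nm'])} \le \max_{g\in\mathcal{G}}\supell\gamma^{-\ell}\,\maxmp\absbigg{\sum_{i\in[Nm']}\innerproduct{\epsilon_{i,t}}{(P_i-\Xi_i)^\ell g_i}}.
\]

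The crux is bounding the innermost maximum over the subset size $m'$. For fixed $g$ and $\ell$, the partial sums $S_n=\sum_{i\le n}\innerproduct{\epsilon_{i,t}}{(P_i-\Xi_i)^\ell g_i}$ form a martingale conditioned on $\mX_t$, since the $\epsilon_{i,t}$ are conditionally independent with mean zero. By Doob's $L^2$ maximal inequality and Cauchy--Schwarz, $\EE\big[\max_{n\le Nm}\abs{S_n}\bigm|\mX_t\big]\le 2\sqrt{\EE[S_{Nm}^2\mid\mX_t]}$, and $\EE[S_{Nm}^2\mid\mX_t]=\sum_{i\in[Nm]}\EE[\innerproduct{\epsilon_{i,t}}{(P_i-\Xi_i)^\ell g_i}^2\mid\mX_t]$ is controlled exactly as in \eqref{eq:pf-subset-lyapunov:drift-3}, giving $\le 4N\max_i\big(\norm{g_i}_\infty^2\norm{(P_i-\Xi_i)^\ell}_\infty^2\big)$. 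Taking the square root and bounding the outer $\max_{g}\supell\gamma^{-\ell}$ by a sum over $g\in\mathcal{G}$ and $\ell\in\mathbb{N}$, then using $\sum_{g}\norm{g_i}_\infty\le Kc_{\max}+r_{\max}$ and $\sumell\gamma^{-\ell}\max_i\norm{(P_i-\Xi_i)^\ell}_\infty\le\Ctau$ (\Cref{lem:cgamma}), collects exactly one extra factor of $2$ relative to the $h$-drift, yielding $2\cdot 2(Kc_{\max}+r_{\max})\Ctau\sqrt{N}=2C_h\sqrt{N}$, as claimed.

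The main obstacle is precisely this maximum over $m'$: a naive union bound over the $\Theta(N)$ choices of $m'$ would blow the estimate up to $\Theta(N^{3/2})$. The key observation that rescues the argument is that the family $\{h(\cdot,[Nm'])\}_{m'}$ is indexed by nested sets, so after passing to the noise representation the inner quantity becomes the running maximum of a martingale; Doob's inequality then replaces the union-bound factor of $N$ by the universal constant $2$, which is exactly the origin of the factor $2C_h$ rather than $C_h$.
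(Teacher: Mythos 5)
Your proposal is correct and follows essentially the same route as the paper's proof: the Lipschitz part via the maximizer $m^*$ and the subset-Lipschitz property of $h$, and the drift part via the envelope contraction, the martingale noise representation, and Doob's $L^2$ maximal inequality to absorb the maximum over the nested sets $[Nm']$ at the cost of a factor of $2$. The accounting of constants leading to $2C_h\sqrt{N}$ also matches the paper's.
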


\begin{proof}\label{proof:hID drift}
    We first prove the Lipschitz continuity of $\hid(\mx, m)$ with respect to $m$. 
    Because $\hid(\mx, m)$ is non-decreasing in $m$, it suffices to demonstrate that for any $m, m'\in[0,1]_N$ such that $m > m'$, 
    \begin{equation}
        \label{pf:hid-lipschitz:goal}
        \hid(\mx, m) - \hid(\mx, m') \leq NL_h (m-m').
    \end{equation}
    Denote $m_1=\argmax_{m_1\in[0,1]_N\colon m_1\leq m}h(\mx,[Nm_1])$. Then, by the definition of $h_{\ID}$, we have $h_{\ID}(\mx,m)=h(\mx,[Nm_1])$ and 
    \begin{equation}
        \label{pf:hid-lipschitz:eq-1}
        \hid(\mx, m) - \hid(\mx, m') =  h(\mx, [Nm_1]) - \hid(\mx, m'). 
    \end{equation}
    If $m_1\leq m'$, the right-hand side of \eqref{pf:hid-lipschitz:eq-1} is non-positive, so \eqref{pf:hid-lipschitz:goal} follows. 
    If $m'< m_1 \leq m$, because $\hid(\mx, m')\geq h(\mx, [Nm'])$,  \eqref{pf:hid-lipschitz:eq-1} implies that
    \begin{align}
        \nonumber
        h_{\ID}(\mx,m)-h_{\ID}(\mx,m')
        &\leq h(\mx, [Nm_1])  - h(\mx, [Nm']) \\
        \label{eq:apply-h-lipschitz}
        &\leq L_h (Nm_1-Nm')\\
        \nonumber
        &\leq NL_h(m-m')\,,
    \end{align}
    where \eqref{eq:apply-h-lipschitz} is due to the Lipschitz continuity of $h(x, D)$ with respect to $D$, as established in \Cref{lem:drift}. 
    We have thus proved \eqref{pf:hid-lipschitz:goal}. 

    Next, we prove the drift condition. We will assume $A_{i,t}\sim \pibs_i(\cdot\givenplain S_{i,t})$ for all $i\in [Nm]$ in the rest of the proof, without explicitly writing it in the conditional probabilities each time. 
    We start by bounding the following expression: 
    \begin{align}
        \nonumber
        &\mspace{23mu} h_{\ID}(\mX_{t+1},m)-\gamma h_{\ID}(\mX_{t},m)\\
        \nonumber
        &= \maxmp h(\mX_{t+1},[Nm']) - \maxmp \gamma h(\mX_t,[Nm'])\\
        \nonumber
        &\leq \maxmp \bracket{h(\mX_{t+1},[Nm'])-\gamma h(\mX_t,[Nm'])}\\ 
        \label{eq:pf-id-drift:apply-h-contraction}
        &\leq \maxmp \bracket{h(\mX_{t+1},[Nm']) - h(\mX_tP,[Nm'])}\\
        \label{eq:pf-id-drift:eq-1}
        &\leq \maxmp \max_{g\in \mathcal{G}} \supell \gamma^{-\ell} \abs{\sum_{i\in[Nm']} \innerproduct{\epsilon_{i,t}}{(P_i-\Xi_i)^{\ell}g_i}}\,,
    \end{align}
    where \eqref{eq:pf-id-drift:apply-h-contraction} follows from \eqref{eq:h-contraction}; $\epsilon_{i,t}$ is a $\cardS$-dimensional random vector given by $\epsilon_{i,t} \triangleq X_{i,t+1}-X_{i,t}P_i$, 
    which satisfies $\EE\mbracket{\epsilon_{i,t}(s)\givenmiddle\mX_t}=0$ for any $s\in\sspa$ and $\norm{\epsilon_{i,t}}_1 \leq 2$; \eqref{eq:pf-id-drift:eq-1} follows \eqref{eq:pf-subset-lyapunov:drift-1}. 
    Now, we take the expectations of the positive parts of the inequality \eqref{eq:pf-id-drift:eq-1} conditioned on $X_t$, which yields
    \begin{align}
        \nonumber
        &\mspace{23mu} \EE\mbracket{(h_{\ID}(\mX_{t+1},m)-\gamma h_{\ID}(\mX_{t},m))^+\givenmiddle\mX_t}\\
        \nonumber
        &\leq \E{\maxmp \max_{g\in \mathcal{G}} \supell \gamma^{-\ell} \abs{\sum_{i\in[Nm']} \innerproduct{\epsilon_{i,t}}{(P_i-\Xi_i)^{\ell}g_i}} \givenmiddle \mX_t} \\
        \nonumber
        &\leq  \sumell \gamma^{-\ell} \sum_{g\in \mathcal{G}} \E{ \maxmp \abs{\sum_{i\in[Nm']} \innerproduct{\epsilon_{i,t}}{(P_i-\Xi_i)^{\ell}g_i}} \givenmiddle \mX_t} \\
        \label{eq:pf-id-drift:eq-2}
        &\leq \sumell \gamma^{-\ell} \sum_{g\in \mathcal{G}} \E{ \maxmp \abs{\sum_{i\in[Nm']} \innerproduct{\epsilon_{i,t}}{(P_i-\Xi_i)^{\ell}g_i}}^2 \givenmiddle \mX_t}^{1/2}.
    \end{align}
    Because $\epsilon_{i,t}$ are independent across $i\in [Nm]$ and   $\innerproduct{\epsilon_{i,t}}{(P_i-\Xi_i)^{\ell}g_i}$ has zero mean conditioned on $\mX_t$, the partial sum $\sum_{i\in[n]} \innerproduct{\epsilon_{i,t}}{(P_i-\Xi_i)^{\ell}g_i}$ is a martingale in $n$ and $\abs{\sum_{i\in[n]} \innerproduct{\epsilon_{i,t}}{(P_i-\Xi_i)^{\ell}g_i}}$ is a submartingale in $n$ for $n\in [Nm]$. Then it follows from Doob's $L_2$ inequality (\citealp[Theorem 5.4.3]{Dur_19_prob_book}) that
    \begin{align}
        \nonumber
        &\mspace{23mu} \E{ \maxmp \abs{\sum_{i\in[Nm']} \innerproduct{\epsilon_{i,t}}{(P_i-\Xi_i)^{\ell}g_i}}^2 \givenmiddle \mX_t}^{1/2}  \\
        \nonumber
        &\leq 2 \E{ \bigg( \sum_{i\in[Nm]} \innerproduct{\epsilon_{i,t}}{(P_i-\Xi_i)^{\ell}g_i} \bigg)^2 \givenmiddle \mX_t}^{1/2} \\
        \nonumber
        &= 2  \left( \sum_{i\in[Nm]} \E{ \innerproduct{\epsilon_{i,t}}{(P_i-\Xi_i)^{\ell}g_i}^2 \givenmiddle \mX_t}\right)^{1/2} \\
        \nonumber
        &\leq  2 \left(\sum_{i\in [Nm]}\E{\Big(\norm{\epsilon_{i,t}}_1 \norm{(P_i-\Xi_i)^{\ell}}_\infty \norm{g_i}_\infty\Big)^2 \givenmiddle \mX_t}\right)^{1/2} \\
        \label{eq:pf-id-drift:eq-3}
        &\leq 4 \sqrt{Nm} \max_{i\in [Nm]} \norm{g_i}_{\infty}\norm{(P_i-\Xi_i)^{\ell}}_{\infty},
    \end{align}
    where \eqref{eq:pf-id-drift:eq-3} uses the fact that $\norm{\epsilon_{i,t}}_1 \leq 2$. 
    Substituting the bound in \eqref{eq:pf-id-drift:eq-3} back to \eqref{eq:pf-id-drift:eq-2}, we get 
    \begin{align*}
        \EE\mbracket{(h_{\ID}(\mX_{t+1},m)-\gamma h_{\ID}(\mX_{t},m))^+\givenmiddle\mX_t}
        &\leq  4 \sqrt{Nm}  \sumell \gamma^{-\ell} \sum_{g\in \mathcal{G}}\max_{i\in [Nm]} \norm{g_i}_{\infty}\norm{(P_i-\Xi_i)^{\ell}}_{\infty} \\
        &= 4 \sqrt{N}   (K c_{\max} + r_{\max})\sumell \gamma^{-\ell}\max_{i\in [N]}\norm{(P_i-\Xi_i)^{\ell}}_{\infty} \\
        \nonumber
        &\leq 4\sqrt{N} (Kc_{\max} + r_{\max}) \Ctau \\
        &= 2C_h\sqrt{N}.  \qedhere
    \end{align*}
\end{proof}

\section{Lemmas for focus set}
\label{sec:lem-proof-focus-set}

In this section, we present and prove three lemmas about properties of the focus set.
Recall that for any system state $\mx$, the focus set is defined as the set $[Nm(\mx)]$, where $m(\mx)$ is given by
\begin{align}
    m(\mx)= \max \sets{ m\in[0,1]_N: h_{\ID}(\mx,m)\leq \min_{k\in[K]}\rembud_k([Nm])}.
\end{align}
Consider the system state process under the ID policy, $(\bm{S}_t,t\in\mathbb{N})$ and its equivalent representation $(\mX_t,t\in\mathbb{N})$.
We often consider the focus set corresponding to the current system state, i.e., $m(\mX_t)$.
A closely related quantity is the number of arms that follow their optimal single-armed policies under the ID policy, which we refer to as the \emph{conforming number}.
With the system state $\bm{S}_t$, the conforming number is denoted as $N^*_t$, and it can be written as
\begin{equation}
    N_t^* = \max\sets{n\in[N]\colon \sum_{i=1}^n c_{k,i}(S_{i,t},\widehat{A}_{i,t})\leq \alpha_k N, \forall k\in[K]},
\end{equation}
where $\widehat{A}_{i,t}$'s are the ideal actions sampled from the optimal single-armed policies by the ID policy.

Below we state the three lemmas,  and we prove them in the subsections.

\begin{lemma}[Majority conformity]\label{lem:majority_conformal}
Let $(\mX_t,t\in\mathbb{N})$ be the system state process under the ID policy.
The size of the focus set, $Nm(\mX_t)$, satisfies
\begin{align*}
    \frac{1}{N}\EE[\bracket{Nm(\mX_t) - N_t^* }^+ \givenplain \mX_t]\leq \frac{K_{\conf}}{\sqrt{N}}, \quad\text{with probability }1,
\end{align*}
for some constant $K_{\conf}>0$ independent of $N$.
\end{lemma}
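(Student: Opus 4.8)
The plan is to compare the \emph{realized} cumulative costs that determine the conforming number $N_t^*$ against their conditional means, and to show that the focus-set condition together with Lemma~\ref{lem:positiveC} forces a budget slack that grows \emph{linearly} as one moves into the interior of the focus set; this linear slack is what makes a martingale tail sum collapse from order $N$ to order $\sqrt{N}$. First I would fix the system state $\mX_t$, write $m=m(\mX_t)$ and $N_m=Nm$, and for each cost type $k$ and prefix length $n\le N_m$ define the centered sum
\[
    M_{k,n}=\sum_{i=1}^n\bracket{c_{k,i}(S_{i,t},\widehat{A}_{i,t})-c_{k,i}^*(S_{i,t})}.
\]
Given $\mX_t$, the ideal actions $\widehat{A}_{i,t}\sim\bar{\pi}_i^*(\cdot\mid S_{i,t})$ are sampled independently across arms, so $M_{k,n}$ is a sum of independent, conditionally zero-mean terms with range at most $2c_{\max}$. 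The conditional mean of the realized cumulative cost is $\sum_{i=1}^n\innerproduct{X_{i,t}}{c_{k,i}^*}$, and taking $g=c_k^*$, $\ell=0$ in the definition \eqref{eq:h-def} of the subset Lyapunov function gives $\abs{\sum_{i=1}^n\innerproduct{X_{i,t}-\mu_i^*}{c_{k,i}^*}}\le h(\mX_t,[n])\le h_{\ID}(\mX_t,m)$, where the last step uses $n\le N_m$ and the definition \eqref{eq:hid-def} of $h_{\ID}$. Since $\innerproduct{\mu_i^*}{c_{k,i}^*}=C_{k,i}^*$, this yields $\sum_{i=1}^n\innerproduct{X_{i,t}}{c_{k,i}^*}\le C_k^*([n])+h_{\ID}(\mX_t,m)$.

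Next I would convert this into a lower bound on the budget slack. Because costs are nonnegative, $n\mapsto\sum_{i=1}^n c_{k,i}(S_{i,t},\widehat{A}_{i,t})$ is nondecreasing, so $N_t^*$ is a threshold: prefixes $[1],\dots,[N_t^*]$ are feasible and the rest are not. Hence $\{N_m-N_t^*>j\}$ is exactly the event that prefix $[N_m-j]$ violates some constraint $k$, which forces
\[
    M_{k,N_m-j}>\alpha_kN-\sum_{i=1}^{N_m-j}\innerproduct{X_{i,t}}{c_{k,i}^*}\ge\rembud_k([N_m-j])-h_{\ID}(\mX_t,m),
\]
using the previous display and the definition \eqref{eq:remain-bud} of $\rembud_k$. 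The focus-set condition \eqref{eq:focus set def} guarantees $h_{\ID}(\mX_t,m)\le\rembud_k([N_m])$ for every $k$, while Lemma~\ref{lem:positiveC} gives $\rembud_k([N_m-j])-\rembud_k([N_m])\ge\eta_c j-M_c$. Combining these, the slack at prefix $[N_m-j]$ is at least $\eta_c j-M_c$ \emph{uniformly over $k$}, so $\{N_m-N_t^*>j\}\subseteq\bigcup_{k\in[K]}\sets{M_{k,N_m-j}>\eta_c j-M_c}$.

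Finally I would apply a Hoeffding bound to each $M_{k,N_m-j}$, viewed as a fixed sum of at most $N$ independent terms of range $2c_{\max}$, giving $\Prob{M_{k,N_m-j}>\eta_c j-M_c\mid\mX_t}\le\exp\bracket{-(\eta_c j-M_c)^2/(2Nc_{\max}^2)}$ once $\eta_c j>M_c$. Writing $\E{(N_m-N_t^*)^+\mid\mX_t}=\sum_{j\ge0}\Prob{N_m-N_t^*>j\mid\mX_t}$ and a union bound over $k$, the finitely many terms with $\eta_c j\le M_c$ contribute $O(1)$, while the Gaussian-type tail sum $\sum_j K\exp\bracket{-(\eta_c j-M_c)^2/(2Nc_{\max}^2)}$ is dominated by $\int_0^\infty\exp\bracket{-\eta_c^2x^2/(2Nc_{\max}^2)}\,dx=O(\sqrt{N})$. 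Dividing by $N$ proves the claim with $K_{\conf}=O\bracket{c_{\max}\sqrt{K}/\eta_c}$.

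The main obstacle is the slack estimate of the second step: the entire bound hinges on the slack being \emph{linear} in $j$, since a merely nonnegative slack would yield only a tail sum of order $N$ and an $O(1)$ (not $o(1)$) per-arm error. This linear growth is exactly what the ID reassignment supplies through Lemma~\ref{lem:positiveC}, and the $\ell=0$ coordinate of the projection-based Lyapunov function is what transfers the focus-set inequality — stated in terms of $h_{\ID}$ — into a control on the conditional expected costs. A secondary point to handle carefully is that $M_{k,n}$ must be treated as a fixed sum for each value of $j$ (so that Hoeffding applies directly), rather than as a process maximized over $j$; the union bound over $j$ in the tail sum, rather than a maximal inequality, is what keeps the argument both correct and simple.
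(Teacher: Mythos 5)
Your proof is correct, and it reaches the $O(\sqrt{N})$ bound by a route that differs from the paper's in the final concentration step. Both arguments share the same skeleton, and you correctly identify the crux: the $\ell=0$, $g=c_k^*$ coordinate of $h$ transfers the focus-set condition $h_{\ID}(\mX_t,m)\le\min_{k}\rembud_k([Nm])$ into control of the conditional expected prefix costs, and Lemma~\ref{lem:positiveC} supplies the slack $\eta_c j-M_c$ at prefix $[Nm(\mX_t)-j]$, which is linear in $j$ and is what collapses the bound from order $N$ to order $\sqrt{N}$. Where you diverge: the paper establishes the single pathwise inequality $N_t^*\ge Nm(\mX_t)-(\Delta_t+M_c)/\eta_c$, where $\Delta_t$ is the maximal centered cost deviation over all prefixes and all cost types, and then bounds $\EE[\Delta_t\givenplain\mX_t]\le 2Kc_{\max}\sqrt{N}$ by applying Doob's $L_2$ maximal inequality to the submartingale $\absplain{\sum_{i\in[n]}\xi_{k,i}}$. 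You instead expand $\EE[(Nm(\mX_t)-N_t^*)^+\givenplain\mX_t]=\sum_{j\ge 0}\mathbb{P}[Nm(\mX_t)-N_t^*>j\givenplain\mX_t]$, use the threshold structure of $N_t^*$ (valid precisely because the costs are nonnegative, so cumulative prefix cost is nondecreasing in $n$) to identify each tail event with infeasibility of the \emph{fixed} prefix $[Nm(\mX_t)-j]$, and then apply Hoeffding with a union bound over $k$. Your route avoids maximal inequalities entirely and yields exponential tail bounds on $Nm(\mX_t)-N_t^*$ as a byproduct; the paper's route is shorter once $\Delta_t$ is introduced and reuses the same Doob-type argument elsewhere (e.g., in the proofs of Lemmas~\ref{lem:drift} and~\ref{lem:hID drift}). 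One immaterial slip: the union bound over the $K$ cost types gives $K_{\conf}=O(Kc_{\max}/\eta_c)$ rather than the $O(c_{\max}\sqrt{K}/\eta_c)$ you state (and you should also account for the additive $O(M_c/\eta_c)$ term from the small-$j$ range), but the lemma only requires a constant independent of $N$, so this does not affect the conclusion.
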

Lemma~\ref{lem:majority_conformal} implies that almost all the arms in the focus set, except for $O(\sqrt{N})$ arms, can follow the optimal single-armed policies.

\begin{lemma}[Almost non-shrinking]\label{lem:non-shrinking}
Let $(\mX_t,t\in\mathbb{N})$ be the system state process under the ID policy.
Then the change in the size of the focus set over time satisfies
    \begin{align*}
    \EE\mbracket{(m(\mX_t)-m(\mX_{t+1}))^+ \givenmiddle \mX_t} \leq  \frac{K_{\mono}}{\sqrt{N}},\quad\text{with probability }1,
\end{align*}
for some constant $K_{\mono}>0$ independent of $N$.
\end{lemma}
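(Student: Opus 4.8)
The plan is to condition on $\mX_t$ throughout and write $m_t \triangleq m(\mX_t)$ and $m_{t+1} \triangleq m(\mX_{t+1})$; the only randomness is in the ideal-action sampling and the one-step transitions. The heart of the argument is to show that the one-step drop of the focus level is controlled by a \emph{single} maximal drift quantity
\[
  W^* \triangleq \sup_{m'\in[0,1]_N\colon m'\le m_t}\bracket{h_{\ID}(\mX_{t+1},m')-\gamma h_{\ID}(\mX_t,m')}^+,
\]
rather than by a union bound over all levels $m'\le m_t$ (the latter would lose a spurious $\log N$ factor). To set this up, I first note that for every $m'\le m_t$ the definition of $W^*$ together with $\gamma<1$, the monotonicity of $h_{\ID}(\mX_t,\cdot)$, and the defining inequality of the focus set give $h_{\ID}(\mX_{t+1},m')\le \gamma h_{\ID}(\mX_t,m')+W^*\le h_{\ID}(\mX_t,m_t)+W^*\le \rembud([Nm_t])+W^*$. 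On the other hand, \Cref{lem:positiveC} shows the remaining budget grows at least linearly as the level decreases, $\rembud([Nm'])\ge \rembud([Nm_t])+\eta_c(Nm_t-Nm')-M_c$. Combining the two estimates, any level $m'\le m_t$ with $Nm_t-Nm'\ge (W^*+M_c)/\eta_c$ satisfies the focus-set condition $h_{\ID}(\mX_{t+1},m')\le\rembud([Nm'])$ at time $t+1$, hence $m_{t+1}\ge m'$. Choosing $m'$ as the largest grid point in $[0,1]_N$ with this gap yields the deterministic bound $N(m_t-m_{t+1})^+\le (W^*+M_c)/\eta_c+1$.

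It then remains to show $\EE[W^*\mid\mX_t]=O(\sqrt N)$, after which taking expectations gives $\EE[(m_t-m_{t+1})^+\mid\mX_t]\le \frac{1}{N}\bracket{(\EE[W^*\mid\mX_t]+M_c)/\eta_c+1}=O(1/\sqrt N)$, as desired. To bound $\EE[W^*\mid\mX_t]$ I would reuse the machinery from the proof of \Cref{lem:hID drift}, but upgraded from a fixed level to a supremum over levels. First I couple $\mX_{t+1}$ with a \emph{virtual} system $\tilde\mX_{t+1}$ in which every arm $i\in[Nm_t]$ follows its sampled ideal action $\widehat A_{i,t}\sim\bar\pi_i^*(\cdot\mid S_{i,t})$; the two systems agree on arms $i\le N_t^*$ and so differ on at most $(Nm_t-N_t^*)^+$ arms within any prefix $[Nm']$. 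Since, by the same estimate underlying the Lipschitz bound in \Cref{lem:drift}, altering a single arm's state changes each $h(\cdot,[Nm''])$, and hence $h_{\ID}(\cdot,m')$, by at most $L_h$, we get $W^*\le \tilde W^*+L_h(Nm_t-N_t^*)^+$, where $\tilde W^*$ is the analogue of $W^*$ for the virtual system. Taking expectations and invoking the majority-conformity bound $\EE[(Nm_t-N_t^*)^+\mid\mX_t]\le K_{\conf}\sqrt N$ from \Cref{lem:majority_conformal} reduces the task to bounding $\EE[\tilde W^*\mid\mX_t]$.

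For the virtual term, I unfold $h_{\ID}(\tilde\mX_{t+1},m')-\gamma h_{\ID}(\mX_t,m')$ exactly as in \eqref{eq:pf-id-drift:eq-1} to obtain $\tilde W^*\le \sum_{g\in\mathcal G}\sumell\gamma^{-\ell}\max_{n\le Nm_t}\absbig{M_n^{(g,\ell)}}$, where $M_n^{(g,\ell)}\triangleq\sum_{i\in[n]}\innerproduct{\tilde\epsilon_{i,t}}{(P_i-\Xi_i)^\ell g_i}$ and $\tilde\epsilon_{i,t}\triangleq\tilde X_{i,t+1}-X_{i,t}P_i$ is mean-zero conditioned on $\mX_t$ for every $i\le Nm_t$ (this is exactly where all $Nm_t$ arms must conform, justifying the virtual coupling). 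For fixed $g$ and $\ell$ the sequence $(M_n^{(g,\ell)})_n$ is a martingale, so Doob's $L^2$ maximal inequality (already used in \Cref{lem:hID drift}) bounds $\EE[\max_{n\le Nm_t}\absbig{M_n^{(g,\ell)}}\mid\mX_t]$ by $2\,\EE[(M_{Nm_t}^{(g,\ell)})^2\mid\mX_t]^{1/2}$, which is $O(\sqrt N\,\max_i\norm{(P_i-\Xi_i)^\ell}_\infty)$ by the conditional independence of the $\tilde\epsilon_{i,t}$ and $\norm{\tilde\epsilon_{i,t}}_1\le2$. Summing over $\ell$ with the geometric weights $\gamma^{-\ell}$ and over the finite set $\mathcal G$, and using $\sumell\gamma^{-\ell}\max_i\norm{(P_i-\Xi_i)^\ell}_\infty\le \Ctau$ from \Cref{lem:cgamma}, gives $\EE[\tilde W^*\mid\mX_t]\le 2C_h\sqrt N$, and hence $\EE[W^*\mid\mX_t]\le (2C_h+L_hK_{\conf})\sqrt N$, completing the chain. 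I expect the main obstacle to be precisely this step: controlling the drift excess \emph{uniformly} over all focus levels with a single $O(\sqrt N)$ bound (via the running-maximum martingale inequality) rather than level-by-level, which is what keeps the final estimate at $O(1/\sqrt N)$ instead of $O(\log N/\sqrt N)$; the bookkeeping of the virtual coupling and of the grid rounding in $[0,1]_N$ are secondary but must be handled with care.
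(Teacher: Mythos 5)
Your proposal is correct and follows essentially the same route as the paper: convert the linear growth of the remaining budget (Lemma~\ref{lem:positiveC}) into the deterministic bound $N(m(\mX_t)-m(\mX_{t+1}))^+ \lesssim (\text{drift excess}+M_c)/\eta_c$, then bound the expected drift excess by coupling with a fully-conforming virtual system, correcting for the $(Nm(\mX_t)-N_t^*)^+$ non-conforming arms via Lemma~\ref{lem:majority_conformal}, and controlling the prefix maximum with Doob's $L^2$ inequality (this is exactly the paper's Lemma~\ref{lem:focus set drift} combined with Lemma~\ref{lem:hID drift}). The only cosmetic difference is that you take a supremum of the drift excess over all levels $m'\le m(\mX_t)$, whereas the paper uses monotonicity of $h_{\ID}(\mX_{t+1},\cdot)$ to work with the single level $m(\mX_t)$; since $h_{\ID}$ is itself a running maximum over prefixes, both reduce to the same martingale maximal inequality and yield the same $O(\sqrt{N})$ bound.
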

Lemma~\ref{lem:non-shrinking} implies that the size of the focus set is almost non-shrinking on average over time, or more specifically, it shrinks by at most $O(\sqrt{N})$ on average over time.

\begin{lemma}[Sufficient coverage]\label{lem:coverage}
Let $(\mX_t,t\in\mathbb{N})$ be the system state process under the ID policy.
Then
\begin{align*}
    1-m(\mX_t)\leq \frac{1}{\eta_c N}h_{\ID}(\mX_t,m(\mX_t)) + \frac{K_{\cov}}{N},\quad\text{with probability }1,
\end{align*}
for some constant $K_{\cov}>0$ independent of $N$.
\end{lemma}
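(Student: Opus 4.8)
The plan is to exploit the \emph{maximality} built into the definition \eqref{eq:focus set def} of the focus set, converting the fact that the focus set cannot be enlarged into a quantitative lower bound on $\eta_c N(1-m(\mX_t))$. Since $m(\mX_t)$ and $h_{\ID}$ are deterministic functions of the system state, it suffices to prove the inequality pointwise for a fixed state; this is what ``with probability $1$'' amounts to. Write $m_0 = m(\mX_t)$. If $m_0 = 1$ the bound is immediate because its right-hand side is nonnegative (as $h_{\ID}\ge 0$ and $K_{\cov}>0$), so I would assume $m_0 < 1$, whence $m_0 \le 1 - 1/N$ and the next grid point $m_0 + 1/N$ lies in $[0,1]_N$.

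By the maximality of $m_0$, the defining constraint in \eqref{eq:focus set def} must \emph{fail} at $m_0 + 1/N$, that is,
\[
    h_{\ID}(\mX_t,\, m_0 + 1/N) > \min_{k\in[K]}\rembud_k([Nm_0 + 1]) = \rembud([Nm_0+1]).
\]
I would bound the two sides in opposite directions. On the left, the Lipschitz property of $h_{\ID}$ from \Cref{lem:hID drift} gives $h_{\ID}(\mX_t, m_0+1/N) \le h_{\ID}(\mX_t, m_0) + N L_h \cdot (1/N) = h_{\ID}(\mX_t, m_0) + L_h$, so advancing one grid step costs only the constant $L_h$. On the right, \Cref{lem:positiveC} applied with $n_1 = Nm_0 + 1$ and $n_2 = N$ gives $\rembud([Nm_0+1]) - \rembud([N]) \ge \eta_c(N - Nm_0 - 1) - M_c$; together with the nonnegativity $\rembud([N]) \ge 0$ noted after \eqref{eq:remain-bud}, this yields $\rembud([Nm_0+1]) \ge \eta_c N(1 - m_0) - \eta_c - M_c$.

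Chaining these three inequalities gives $h_{\ID}(\mX_t, m_0) + L_h > \eta_c N(1 - m_0) - \eta_c - M_c$, and dividing by $\eta_c N$ and rearranging produces
\[
    1 - m_0 < \frac{1}{\eta_c N}\, h_{\ID}(\mX_t, m_0) + \frac{L_h + \eta_c + M_c}{\eta_c N},
\]
so the claim holds with $K_{\cov} = (L_h + \eta_c + M_c)/\eta_c$, which is $N$-independent because $L_h$, $\eta_c$, and $M_c$ are. The crux — and the one step I would be careful to get right — is the \emph{choice of comparison point}. Comparing $m_0$ directly against $m=1$ is useless: the Lipschitz bound then controls $h_{\ID}$ only up to $N L_h(1-m_0)$, which dominates the budget drop and closes no gap. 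The argument works precisely because I compare against the \emph{immediately larger} grid point $m_0 + 1/N$, where the increment of $h_{\ID}$ is a mere $L_h$ while the remaining budget has already dropped by order $\eta_c N(1-m_0)$ over the block $[Nm_0+1:N]$. This is exactly where \Cref{lem:positiveC} — the anti-plateau guarantee engineered by the ID reassignment algorithm — supplies the essential linear-in-$N$ consumption.
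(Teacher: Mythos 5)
Your proof is correct and follows essentially the same route as the paper's: compare against the next grid point $m(\mX_t)+1/N$ where the defining constraint must fail by maximality, control the left side by the Lipschitz constant $L_h$ of $h_{\ID}$ (Lemma~\ref{lem:hID drift}) and the right side by the linear budget-consumption bound of Lemma~\ref{lem:positiveC} together with $\rembud([N])\ge 0$, then rearrange. You even arrive at the identical constant $K_{\cov}=(\eta_c+M_c+L_h)/\eta_c$, and your explicit treatment of the $m(\mX_t)=1$ edge case is a harmless addition to what the paper leaves implicit.
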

Lemma~\ref{lem:coverage} relates the size of the complement of the focus set to the value of the function $h_{\ID}(\mX_t,m(\mX_t))$.

\subsection{Proof of Lemma~\ref{lem:majority_conformal} (Majority conformity)}
\begin{proof}
First, we claim that the conforming number $N_t^*$ can be lower bounded using our slack budget function $\rembud$ and Lyapunov function as follows: for any time step $t\geq 0$, 
\begin{align}\label{eq:nt eq1}
    N_t^* \geq N \max\sets{m\in[0,1]_N\colon   \rembud([Nm])-h_{\ID}(\mX_t,m)\geq  \Delta_t}\,,
\end{align}
where $\Delta_t$ is the random variable given by 
\begin{align}\label{eq:deltat}
    \Delta_t = \maxk \max_{m\in[0,1]_N} \abs{\sum_{i\in[Nm]}\bracket{c_{k,i}(S_{i,t},\widehat{A}_{i,t})- \innerproduct{X_{i,t}}{c_{k,i}^*}}}\,,
\end{align}
which captures the difference between the expected cost and the actual cost.  

To prove the claim, we invoke the definition of $N_t^*$ and the fact that $\rembud([n])\leq \alpha_k N - C_k([n])$ for any $n\in [N]$ and $k\in[K]$: 
    \begin{align}
        \nonumber
        N_t^* 
        &= \max\sets{n\in[N]\colon  \sum_{i\in[n]} c_{k,i}(S_{i,t},\widehat{A}_{i,t})  \leq \alpha_k N \quad \forall k\in[K]}     \\
        \label{eq:pf-conformity:applied-nt-def}
        &\geq \max\sets{n\in[N]\colon \maxk  \sum_{i\in[n]} \left(c_{k,i}(S_{i,t},\widehat{A}_{i,t}) - C_k([n]) \right) \leq \rembud([n])}.
    \end{align}
    To further lower bound \eqref{eq:pf-conformity:applied-nt-def}, we identify a subset of the set in \eqref{eq:pf-conformity:applied-nt-def} and reduce the task to upper bounding the following expression: 
    \begin{align}
        \nonumber
        &\mspace{23mu} \maxk  \sum_{i\in[n]} \left( c_{k,i}(S_{i,t},\widehat{A}_{i,t}) - C_k([n]) \right) \\
        \nonumber
        &\leq\maxk \abs{\sum_{i\in[n]}\innerproduct{X_{i,t}}{c_{k,i}^*}- C_k([n])}+ \maxk \abs{\sum_{i\in[n]}c_{k,i}(S_{i,t},\widehat{A}_{i,t})- \sum_{i\in[n]}\innerproduct{X_{i,t}}{c_{k,i}^*}}\\
        \label{eq:pf-conformity:eq-1}
        &\leq h_{\ID}(\mX_t,n/N) + \maxk \abs{\sum_{i\in[n]}\bracket{c_{k,i}(S_{i,t},\widehat{A}_{i,t})- \innerproduct{X_{i,t}}{c_{k,i}^*}}} \\
        \label{eq:pf-conformity:eq-2}
        &\leq h_{\ID}(\mX_t,n/N) + \Delta_t, 
    \end{align}
    where the inequality \eqref{eq:pf-conformity:eq-1} follows from the argument that
    \begin{align*}
        h_{\ID}(\mX_t,n/N)&=\max_{n'\in[n]}\max_{g\in \mathcal{G}} \supell \abs{\sum_{i\in[n']}\innerproduct{(X_{i,t}-\mu_i^*)P_i^{\ell}/\gamma^{\ell}}{g_i}}\\
        &\geq \maxk\abs{\sum_{i\in[n]}\innerproduct{X_{i,t}-\mu_i^*}{c_{k,i}^*}}\\
        &=\maxk\abs{\sum_{i\in[n]}\innerproduct{X_{i,t}}{c_{k,i}^*}- C_k([n])},
    \end{align*} 
    and \eqref{eq:pf-conformity:eq-2} follows from the definition of $\Delta_t$ in \eqref{eq:deltat} by taking $m=n/N$ in the maximum. 
    Substituting the bound in \eqref{eq:pf-conformity:eq-2} into \eqref{eq:pf-conformity:applied-nt-def}, we get
    \[
        N^*_t \geq \max\sets{n\in[N]\colon h_{\ID}(\mX_t,n/N) + \Delta_t \leq \rembud([n])},
    \]
    which is equivalent to the claim in \eqref{eq:nt eq1}.

We now use \eqref{eq:nt eq1} to prove the lemma. 
Observe that by the definition of $m(\mX_t)$, 
\[
    \rembud([Nm(\mX_t)])-h_{\ID}(\mX_t,m(\mX_t)) \geq 0.
\]
and $h_{\ID}(\mX_t,m)$ is non-decreasing in $m$. Consequently, for any $m\in [0,1]_N$ such that $m\leq m(\mX_t)$, 
\begin{align}
    \nonumber
    \rembud([Nm])-h_{\ID}(\mX_t,m) 
    &\geq \rembud([Nm])-h_{\ID}(\mX_t,m(\mX_t)) \\
    \nonumber
    &\geq  \rembud([Nm]) - \rembud([Nm(\mX_t)]) \\\nonumber
    &=\min_{k\in[K]} \rembud_k([Nm])-\min_{k\in[K]} \rembud_k([Nm(\mX_t)])\\\nonumber
    &\geq \min_{k\in[K]} \bracket{\rembud_k([Nm]) - \rembud_k([Nm(\mX_t)])}
    \label{eq:pf-conform:apply-strict-slope}\\
    &\geq \eta_cN(m(\mX_t)-m)-M_c,
\end{align}
where \eqref{eq:pf-conform:apply-strict-slope} follows from the strict slope of the remaining budget $\rembud([Nm])$ (\Cref{lem:positiveC}). 
Therefore, choosing $m = m(\mX_t) - (\Delta_t+M_c)/(\eta_c N)$, we obtain
\begin{align*}
   \rembud([Nm])-h_{\ID}(\mX_t,m)\geq \Delta_t.
\end{align*}
Recalling the lower bound of $N_t^*$ established in \eqref{eq:nt eq1}, we arrive at
\begin{align*}
    N_t^*\geq N\max\sets{m\in[0,1]_N: \rembud([Nm])-h_{\ID}(\mX_t,m)\geq\Delta_t} \geq Nm(\mX_t)-\frac{\Delta_t +M_c}{\eta_c}\,.
\end{align*}
Rearranging the terms and taking the conditional expectations, we establish that
\begin{equation}
    \label{eq:pf-conform:bound-in-terms-of-E-delta}
    \E{\big(Nm(\mX_t) - N_t^*\big)^+ \givenmiddle \mX_t} \leq \EE\mbracket{\frac{\Delta_t+M_c}{\eta_c} \givenmiddle \mX_t}. 
\end{equation}

It remains to upper bound the conditional expectation of $\Delta_t$ given $\mX_t$. 
We define the random variable $\xi_{k,i} \triangleq c_{k,i}(S_{i,t},\widehat{A}_{i,t})- \langle X_{i,t}, c_{k,i}^*\rangle$ for each arm $i\in[N]$ and cost type $k\in[K]$. 
Subsequently, $\EE\mbracket{\Delta_t\givenmiddle\mX_t}$ can be rewritten and bounded as 
\begin{align}
    \nonumber
     \EE\mbracket{\Delta_t\givenmiddle\mX_t} 
     &= \E{\maxk \max_{m\in[0,1]_N} \abs{\sum_{i\in[Nm]}\xi_{k,i}} \givenmiddle \mX_t} \\
     \nonumber
     &\leq  \sumk \EE\mbracket{\max_{n\in[N]}\abs{\sum_{i\in[n]}\xi_{k,i}} \givenmiddle \mX_t} \\
    \label{eq:pf-conformity:before-martingale}
     &\leq  \sumk \EE\mbracket{\bracket{\max_{n\in[N]} \abs{\sum_{i\in[n]}\xi_{k,i}}}^2 \givenmiddle \mX_t}^{1/2},
\end{align}
where \eqref{eq:pf-conformity:before-martingale} follows from the Cauchy–Schwarz inequality. 
Next, we argue that the sequence $\sets{\abs{\sum_{i\in[n]}\xi_{k,i}}}_{n\in[N]}$ is a submartingale, which allows us to apply Doob's $L_2$ inequality (\citealp[Theorem 5.4.3]{Dur_19_prob_book}) to bound the expression in \eqref{eq:pf-conformity:before-martingale}. Observe that, for each cost type $k$, conditioned on $\mX_t$, the sequence of random variables $\sets{\xi_{k,i}}_{i\in[N]}$ are independent and have zero conditional means. 
Consequently, the sequence of partial sums $\left\{\sum_{i\in[n]}\xi_{k,i}\right\}_{n\in[N]}$ forms a martingale, which becomes a submartingale upon taking the absolute value. 
Thus, by applying Doob's $L_2$ inequality and utilizing the bound $\xi_{k,i}\leq c_{\max}$, we obtain 
\begin{align}    
    \nonumber
    \EE\mbracket{\Delta_t\givenmiddle\mX_t} 
    &\leq \sumk 2\EE\mbracket{ \bigg(\sum_{i\in[N]}\xi_{k,i}\bigg)^2 \givenmiddle \mX_t}^{1/2} \\
    \nonumber
    &=  \sumk 2\bigg(\sum_{i\in[N]} \EE\mbracket{\xi_{k,i}^2 \givenmiddle \mX_t}\bigg)^{1/2}\\ 
    \label{eq:pf-conform:E-delta-bound}
    &\leq 2Kc_{\max}\sqrt{N}\,.
\end{align}
Combining \eqref{eq:pf-conform:E-delta-bound} with our earlier calculations, we get 
\begin{align*}
    \E{\big(Nm(\mX_t) - N_t^*\big)^+ \givenmiddle \mX_t} 
    \leq \EE\mbracket{\frac{\Delta_t+M_c}{\eta_c} \givenmiddle \mX_t} 
    \leq \frac{2Kc_{\max}}{\eta_c}\sqrt{N}+\frac{M_c}{\eta_c} \leq K_{\conf}\sqrt{N}. 
\end{align*}
where $K_{\conf} \triangleq (2Kc_{\max} + M_c) / \eta_c$. 
\end{proof}

\subsection{Proof of Lemma~\ref{lem:non-shrinking} (Almost non-shrinking)}

We first state and prove a supporting lemma below, which will be used in the proof of Lemma~\ref{lem:non-shrinking}.

\begin{lemma}\label{lem:focus set drift}
    Under the ID policy, we have 
    \begin{align*}
        \EE\mbracket{\big(h_{\ID}(\mX_{t+1},m(\mX_t))- \gamma h_{\ID}(\mX_{t},m(\mX_t)\big)^+\givenmiddle\mX_t}\leq \big(2 + K_{\conf}\big) C_h\sqrt{N}\,,
    \end{align*}
    where $C_h > 0$ is the positive constant given in \Cref{lem:drift}. 
\end{lemma}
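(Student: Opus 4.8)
The plan is to reduce to \Cref{lem:hID drift} by comparing the true one-step evolution under the ID policy with a hypothetical evolution in which \emph{every} arm in the focus set follows its optimal single-armed policy. Fix $\mX_t$; then $m(\mX_t)$ is determined by $\mX_t$ and may be treated as a fixed index $m$. Couple the true next state $\mX_{t+1}$ with a hypothetical next state $\tilde{\mX}_{t+1}$ by drawing one common set of ideal actions $\widehat{A}_{i,t}\sim\pibs_i(\cdot\mid S_{i,t})$ together with common transition randomness: in the hypothetical system all arms $i\in[Nm(\mX_t)]$ execute $\widehat{A}_{i,t}$, whereas in the true system arms $i\le N_t^*$ execute $\widehat{A}_{i,t}$ and arms $i>N_t^*$ take action $0$. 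Under this coupling, $X_{i,t+1}=\tilde{X}_{i,t+1}$ for every $i\le\min\{N_t^*,Nm(\mX_t)\}$, so the two states differ only on the arm set $B\triangleq[N_t^*+1:Nm(\mX_t)]$, whose size equals $(Nm(\mX_t)-N_t^*)^+$.

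Next I would apply the elementary inequality $a^+\le b^+ +\abs{a-b}$ with $a=h_{\ID}(\mX_{t+1},m(\mX_t))-\gamma h_{\ID}(\mX_t,m(\mX_t))$ and $b=h_{\ID}(\tilde{\mX}_{t+1},m(\mX_t))-\gamma h_{\ID}(\mX_t,m(\mX_t))$, splitting the target into two contributions. The first, $(h_{\ID}(\tilde{\mX}_{t+1},m(\mX_t))-\gamma h_{\ID}(\mX_t,m(\mX_t)))^+$, is controlled directly by the drift condition in \Cref{lem:hID drift} applied with $m=m(\mX_t)$; this is legitimate precisely because in the hypothetical system all arms of the focus set follow their optimal single-armed policies, giving conditional expectation at most $2C_h\sqrt{N}$.

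The second contribution is $\abs{h_{\ID}(\mX_{t+1},m(\mX_t))-h_{\ID}(\tilde{\mX}_{t+1},m(\mX_t))}$, which I would control by a Lipschitz-in-arm-states estimate mirroring the Lipschitz-in-$D$ computation in \Cref{lem:drift}. Since $\mX_{t+1}$ and $\tilde{\mX}_{t+1}$ coincide outside $B$, for any $D=[Nm']$ the difference $\abs{h(\mX_{t+1},D)-h(\tilde{\mX}_{t+1},D)}$ telescopes to a sum over $i\in D\cap B$; bounding $\sup_\ell$ by $\sum_\ell$ (all terms nonnegative) and using $\norm{X_{i,t+1}-\tilde{X}_{i,t+1}}_1\le 2$, $\norm{g_i}_\infty\le\max\{c_{\max},r_{\max}\}$, and $\sum_\ell\norm{(P_i-\Xi_i)^\ell}_\infty/\gamma^\ell\le\Ctau$ (\Cref{lem:cgamma}) yields $\abs{h(\mX_{t+1},D)-h(\tilde{\mX}_{t+1},D)}\le L_h\abs{D\cap B}\le L_h\abs{B}$. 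Since this holds uniformly in $m'$, the envelope inequality $\abs{\max_j a_j-\max_j b_j}\le\max_j\abs{a_j-b_j}$ passes the bound to $h_{\ID}$, so $\abs{h_{\ID}(\mX_{t+1},m(\mX_t))-h_{\ID}(\tilde{\mX}_{t+1},m(\mX_t))}\le L_h(Nm(\mX_t)-N_t^*)^+$. Taking conditional expectations and invoking \Cref{lem:majority_conformal} bounds this by $L_h K_{\conf}\sqrt{N}$.

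Finally I would combine the two pieces and use $L_h\le C_h$, which follows from $L_h=2\max\{c_{\max},r_{\max}\}\Ctau$, $C_h=2(Kc_{\max}+r_{\max})\Ctau$, and $\max\{c_{\max},r_{\max}\}\le Kc_{\max}+r_{\max}$ for $K\ge 1$; this gives $2C_h\sqrt{N}+L_hK_{\conf}\sqrt{N}\le(2+K_{\conf})C_h\sqrt{N}$, as claimed. I expect the main obstacle to be setting up the coupling so that the true and hypothetical systems provably differ only on the non-conforming arms of the focus set, and confirming that $h_{\ID}(\cdot,m(\mX_t))$ depends only on arms in $[Nm(\mX_t)]$; once this is established, the two error terms are governed by the already-proven drift (\Cref{lem:hID drift}) and conformity (\Cref{lem:majority_conformal}) estimates.
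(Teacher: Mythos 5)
Your proposal is correct and follows essentially the same route as the paper: couple $\mX_{t+1}$ with a hypothetical next state in which all focus-set arms execute their ideal actions, bound that state's drift by Lemma~\ref{lem:hID drift}, and control the discrepancy (supported on the at most $(Nm(\mX_t)-N_t^*)^+$ non-conforming arms) via the per-arm Lipschitz estimate and Lemma~\ref{lem:majority_conformal}. The only cosmetic difference is that you keep $\max_{g\in\mathcal{G}}$ in the discrepancy term (giving $L_h K_{\conf}\sqrt{N}$) where the paper bounds it by $\sum_{g\in\mathcal{G}}$ (giving $C_h K_{\conf}\sqrt{N}$); since $L_h\le C_h$, both yield the stated $(2+K_{\conf})C_h\sqrt{N}$.
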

\begin{proof}
    We upper bound $\EE\mbracket{\bracket{h_{\ID}(\mX_{t+1},m(\mX_t))-\gamma h_{\ID}(\mX_t,m(\mX_t))}^+\givenmiddle\mX_t}$ by coupling $\mX_{t+1}$ with a random element $\mX_{t+1}'$ constructed as follows: Let $\mX_{t+1}'$ be the system state at step $t+1$ if we were able to set $A_{i,t}=\widehat{A}_{i,t}$ for all $i\in[N]$. From the drift condition of the Lyapunov function $h_{\ID}(\cdot,D)$, as established in \Cref{lem:hID drift}, we obtain: 
    \begin{equation}
        \label{eq:pf-hid-x-mx-shrink:apply-drift-of-hid}
        \EE\mbracket{(h_{\ID}(\mX_{t+1}',m(\mX_t))-\gamma h_{\ID}(\mX_t,m(\mX_t)))^+\givenmiddle\mX_t} \leq 2C_h\sqrt{N}\,.
    \end{equation}
    We couple $\mX_{t+1}'$ and $\mX_{t+1}$ such that $\mX_{i,t+1}'=\mX_{i,t+1}$ for all $i\leq \min\sets{N_t^*,Nm(\mX_t)}$. Then we have
    \begin{align}
        \nonumber
        &\mspace{23mu} \EE\mbracket{\big(h_{\ID}(\mX_{t+1},m(\mX_t))-\gamma h_{\ID}(\mX_t,m(\mX_t))\big)^+-\big(h_{\ID}(\mX_{t+1}',m(\mX_t))-\gamma h_{\ID}(\mX_t,m(\mX_t))\big)^+\givenmiddle\mX_t}\\
        \nonumber
        &\leq \EE\mbracket{\big(h_{\ID}(\mX_{t+1},m(\mX_t))-h_{\ID}(\mX_{t+1}',m(\mX_t))\big)^+\givenmiddle\mX_t}\\
        \nonumber
        &\leq \EE\mbracket{\max_{m'\in[0,1]_N\colon m'\leq m(\mX_t)}\big(h(\mX_{t+1},[Nm'])-h(\mX_{t+1}',[Nm'])\big)^+\givenmiddle\mX_t}\\
        \nonumber
        &\leq \EE\mbracket{\max_{m'\in[0,1]_N\colon m'\leq m(\mX_t)} \max_{g\in \mathcal{G}}\supell \abs{\sum_{i\in[Nm']}\innerproduct{(X_{i,t+1}'-X_{i,t+1})P_i^{\ell} \gamma^{-\ell}}{g_i}}\givenmiddle\mX_t}\\
        \nonumber
        &= \EE\mbracket{\max_{m'\in[0,1]_N\colon m'\leq m(\mX_t)}\max_{g\in \mathcal{G}}\supell \abs{\sum_{i\in[Nm']}\innerproduct{(X_{i,t+1}'-X_{i,t+1})(P_i-\Xi_i)^{\ell} \gamma^{-\ell}}{g_i}}\givenmiddle\mX_t}\\
        \nonumber
        &= \EE\mbracket{\sum_{i\in[Nm(\mX_t)]\backslash[N_t^*]} \sum_{g\in \mathcal{G}}\sumell  \abs{\sum_{i\in[Nm']}\innerproduct{(X_{i,t+1}'-X_{i,t+1})(P_i-\Xi_i)^{\ell} \gamma^{-\ell}}{g_i}}\givenmiddle\mX_t}\\
        \nonumber
        &\leq \EE\mbracket{\sum_{i\in[Nm(\mX_t)]\backslash[N_t^*]} \sum_{g\in \mathcal{G}}\sumell \gamma^{-\ell} \norm{X_{i,t+1}-X_{i,t+1}'}_1\norm{(P_i-\Xi_i)^{\ell}}_\infty\norm{g_i}_\infty\givenmiddle\mX_t}\\
        \nonumber
        &\leq \EE\mbracket{\sum_{i\in[Nm(\mX_t)]\backslash[N_t^*]} 2 \sumell \gamma^{-\ell}\norm{(P_i-\Xi_i)^{\ell}}_\infty \sum_{g\in \mathcal{G}}\norm{g_i}_\infty \givenmiddle\mX_t}\\
        \nonumber
        &\leq \EE\mbracket{\sum_{i\in[Nm(\mX_t)]\backslash[N_t^*]}2\Ctau(Kc_{\max}+r_{\max})\givenmiddle\mX_t}\\
        \label{eq:pf-hid-x-mx-shrink:before-apply-conformity}
        &\leq 2\Ctau(Kc_{\max}+r_{\max}) \EE\mbracket{(Nm(\mX_t)-N_t^*)^+\givenmiddle\mX_t}\\
        \label{eq:pf-hid-x-mx-shrink:apply-conformity}
        &\leq 2 \Ctau(Kc_{\max}+r_{\max})K_{\conf}\sqrt{N}\,,
    \end{align}
    where we have applied \Cref{lem:majority_conformal} to bound the expression in \eqref{eq:pf-hid-x-mx-shrink:before-apply-conformity}.  
    
    By combining \eqref{eq:pf-hid-x-mx-shrink:apply-drift-of-hid} and \eqref{eq:pf-hid-x-mx-shrink:apply-conformity}, we obtain:  
    \begin{align*}
        \EE\mbracket{\bracket{h_{\ID}(\mX_{t+1},m(\mX_t))-\gamma h_{\ID}(\mX_t,m(\mX_t))}^+\givenmiddle\mX_t}&\leq \left(2C_h+ 2\Ctau(Kc_{\max}+r_{\max})K_{\conf}\right)\sqrt{N} \\
        &= \left(2C_h+ C_h K_{\conf}\right)\sqrt{N},
    \end{align*}
    where the last equality is because $C_h = 2 \Ctau(Kc_{\max} + r_{\max})$. 
\end{proof}

We now give the proof of Lemma~\ref{lem:non-shrinking}.
\begin{proof}[Proof of Lemma~\ref{lem:non-shrinking}]
    First, we claim that 
    \begin{align}\label{eq:lem8eq1}
        m(\mX_{t+1})\geq m(\mX_t) - \frac{1}{\eta_c N} \bracket{h_{\ID}(\mX_{t+1},m(\mX_{t}))-h_{\ID}(\mX_t,m(\mX_t))}^+ - \frac{M_c}{\eta_cN}.
    \end{align}
    To prove the claim, by the maximality of $m(\mX_{t+1})$ it suffices to show that for any $\bar{m}\in [0,1]_N$ s.t.
    \begin{equation}
        \label{eq:pf-non-shrink:mbar-claim}
        \Bar{m}\leq m(\mX_t) - \frac{1}{\eta_cN} \bracket{h_{\ID}(\mX_{t+1},m(\mX_t))-h_{\ID}(\mX_t,m(\mX_t))}^+ -\frac{M_c}{\eta_cN},
    \end{equation}
    we have $h_{\ID}(\mX_{t+1},\Bar{m})\leq \rembud([N\Bar{m}])$.
    For any $\Bar{m}$ satisfying \eqref{eq:pf-non-shrink:mbar-claim}, Lemma \ref{lem:positiveC} implies that
    \begin{align*}
        \rembud([N\Bar{m}])-\rembud([Nm(\mX_t)])
        &\geq \eta_c N \bracket{m(\mX_t) - \bar{m}} - M_c  \\
        &\geq\eta_c N\bracket{\frac{1}{\eta_c N}\bracket{h_{\ID}(\mX_{t+1},\Bar{m})-h_{\ID}(\mX_t,m(\mX_t))}^++\frac{M_c}{\eta_cN} } - M_c\\
        &\geq\bracket{h_{\ID}(\mX_{t+1},\Bar{m})-h_{\ID}(\mX_t,m(\mX_t))}^+ \,.
    \end{align*}
    Since $\rembud(m([N\mX_t)])\geq h_{\ID}(\mX_t,m(\mX_t))$ by the definition of $m(\mX_t)$, we thus have
    \begin{align*}
        \rembud([N\Bar{m}])&\geq \rembud([Nm(\mX_t)]) + \bracket{h_{\ID}(\mX_{t+1},\Bar{m})-h_{\ID}(\mX_t,m(\mX_t))}^+ \\
        &\geq h_{\ID}(\mX_t,m(\mX_t)) + \bracket{h_{\ID}(\mX_{t+1},\Bar{m})-h_{\ID}(\mX_t,m(\mX_t))}^+ \\
        &\geq h_{\ID}(\mX_{t+1},\Bar{m})\,,
    \end{align*}
    which proves the claim in \eqref{eq:lem8eq1}. 

    Taking the conditional expectations in \eqref{eq:lem8eq1} and rearranging the terms, we get
    \begin{align*}
       \EE\mbracket{(m(\mX_t)-m(\mX_{t+1}))^+\givenmiddle\mX_t}\leq \frac{1}{\eta_c N}\EE\mbracket{\bracket{h_{\ID}(\mX_{t+1},m(\mX_t))-h_{\ID}(\mX_t,m(\mX_t))}^+\givenmiddle\mX_t} + \frac{M_c}{N\eta_c}, 
    \end{align*}
    where the right-hand side can be further bounded using \Cref{lem:focus set drift}, which states that
    \begin{align*}
        \EE\mbracket{\bracket{h_{\ID}(\mX_{t+1},m(\mX_t))-h_{\ID}(\mX_t,m(\mX_t))}^+\givenmiddle\mX_t}\leq \big(2 + K_{\conf}\big) C_h\sqrt{N}\,.
    \end{align*}
    Therefore, we have
    \begin{align*}
        \EE\mbracket{(m(\mX_t)-m(\mX_{t+1}))^+\givenmiddle\mX_t}\leq  \frac{\big(2 + K_{\conf}\big) C_h}{\eta_c \sqrt{N}}+\frac{M_c}{\eta_c N},
    \end{align*}
    which implies 
    \[
        \EE\mbracket{(m(\mX_t)-m(\mX_{t+1}))^+\givenmiddle\mX_t} \leq \frac{K_{\mono}}{\sqrt{N}}, 
    \]
    with $K_{\mono} \triangleq \Big(\big(2 + K_{\conf}\big) C_h  + M_c \Big) / \eta_c$. 
\end{proof}

\subsection{Proof of Lemma~\ref{lem:coverage} (Sufficient coverage)}
\begin{proof}
    Observe that it suffices to focus on the case when $m(\mX_t) \neq 1$. 
    Recall that for any system state $\mx$, $m(\mx)$ is defined as 
    \begin{align}\tag{\ref{eq:focus set def}}
        m(\mx)= \max \sets{ m\in[0,1]_N: h_{\ID}(\mx,m)\leq \rembud([Nm])}. 
    \end{align}
    Because $m(\mX_t) \neq 1$, we have $m(\mX_t)+1/N  \in [0,1]_N$. Then the maximality of $m(\mX_t)$ implies that
    \begin{equation}
    \label{eq:lem9eq1}
        h_{\ID}(\mX_t,m(\mX_t)+1/N) > \rembud([Nm(\mX_t)+1]). 
    \end{equation}
    We can upper bound the left-hand side of \eqref{eq:lem9eq1} using the Lipschitz continuity of $h$:
    \begin{align}\label{eq:lem9eq2}
        h_{\ID}(\mX_t,m(\mX_t)+1/N)\leq h_{\ID}(\mX_t,m(\mX_t)) + L_h.
    \end{align}
    We then lower bound the right-hand side of \eqref{eq:lem9eq1} using Lemma \ref{lem:positiveC}: 
    \begin{align}
        \nonumber
        \rembud([Nm(\mX_t)+1])
        &\geq  \rembud([Nm(\mX_t)+1]) - \rembud([N])\\ 
        \nonumber
        &\geq \eta_c(N-Nm(\mX_t)-1)-M_c \\
        \label{eq:lem9eq3}
        &=\eta_c N(1-m(\mX_t))-\eta_c - M_c. 
    \end{align}
    Comparing \eqref{eq:lem9eq1}, \eqref{eq:lem9eq2} and \eqref{eq:lem9eq3}, we have:
    \begin{align*}
        h_{\ID}(\mX_t,m(\mX_t))\geq \eta_c N(1-m(\mX_t))-\eta_c - M_c - L_h\,,
    \end{align*}
    which, after rearranging the terms, implies
    \begin{align*}
        1-m(\mX_t)\leq \frac{1}{\eta_c N}h_{\ID}(\mX_t,m(\mX_t))+\frac{K_{\cov}}{N}, 
    \end{align*}
    with $K_{\cov} \triangleq (\eta_c+M_c+L_h)/\eta_c$.
\end{proof}

\section{Proofs of Lemma~\ref{lem:optimality gap} and Lemma~\ref{lem:drift-V}}
\label{sec:proof-lem-main-theorem}

In this section, we provide two final lemmas, \Cref{lem:optimality gap} and \Cref{lem:drift-V}, which together imply \Cref{thm:opt-gap-bound}. We prove these two lemmas in Sections~\ref{app:pf-lem-opt-gap} and \ref{app:pf-lem:drift-V}, respectively. 

\subsection{Proof of Lemma~\ref{lem:optimality gap}}
\label{app:pf-lem-opt-gap}

\optgapbyV*

\begin{proof}
    We can bound the optimality gap as the following long-run average: 
    \begin{align}
        \nonumber
        R^*(N,\bm{S}_0) - R(\pi,\bm{S}_0) 
        &\leq R^{\rel}(N,\bm{S}_0)-R(\pi,\bm{S}_0)\\
        \nonumber
        &= R^{\rel}(N,\bm{S}_0) - \lim_{T\to\infty} \frac{1}{T} \sum_{t=0}^{T-1} \frac{1}{N}\sumN \EE[r_i(S_{i,t},A_{i,t})]  \\
        \nonumber
        &= \lim_{T\to\infty} \frac{1}{T} \sum_{t=0}^{T-1} \Big(R^{\rel}(N,\bm{S}_0) - \frac{1}{N}\sumN \EE[r_i(S_{i,t},A_{i,t})]\Big).
    \end{align}
    To bound $R^{\rel}(N,\bm{S}_0) - \sumN \EE[r_i(S_{i,t},A_{i,t})] / N$, we calculate that
    \begin{align}
        \nonumber
        &\mspace{23mu} R^{\rel}(N,\bm{S}_0) - \frac{1}{N}\sumN \EE[r_i(S_{i,t},A_{i,t})]  \\
        \nonumber
        &= \frac{1}{N}\sumN\sum_{s\in\sspa,a\in\aspa}r_i(s,a)y_i^*(s,a) - \frac{1}{N}\sumN \EE[r_i(S_{i,t}, A_{i,t})] \\
        \nonumber
        &\leq \frac{1}{N}\sumN\sum_{s\in\sspa,a\in\aspa}r_i(s,a)y_i^*(s,a) - \frac{1}{N}\sumN \EE[r_i(S_{i,t},\widehat{A}_{i,t})] + \frac{2r_{\max}}{N}\sumN\EE\mbracket{\mathbf{1}\sets{A_{i,t}\neq\widehat{A}_{i,t}}}\\
        \nonumber
        &= \frac{1}{N}\sumN\innerproduct{\mu^*_i-\EE[X_{i,t}]}{r_i^*}  + \frac{2r_{\max}}{N}\sumN\EE\mbracket{\mathbf{1}\sets{A_{i,t}\neq\widehat{A}_{i,t}}} \\
        \label{eq:pf-opt-gap-bd:eq-0}
        &\leq \frac{1}{N}\sumN\innerproduct{\mu^*_i-\EE[X_{i,t}]}{r_i^*} + 2r_{\max} \EE\mbracket{1-\frac{N_{t}^*}{N}}\\
        \label{eq:pf-opt-gap-bd:eq-1}
        &\leq
        \frac{1}{N}\sumN\innerproduct{\mu^*_i-\EE[X_{i,t}]}{r_i^*} + 2r_{\max} \EE\mbracket{1-m(\mX_{t})} + \frac{2r_{\max} K_{\conf}}{\sqrt{N}} \\
        \label{eq:pf-opt-gap-bd:eq-2}
        &\leq \frac{1}{N}\EE\mbracket{h_{\ID}(\mX_{t},1)} + 2r_{\max} \EE\mbracket{1-m(\mX_{t})} + \frac{2r_{\max} K_{\conf}}{\sqrt{N}},
    \end{align}
    where \eqref{eq:pf-opt-gap-bd:eq-0} is due to the definition of $N_t^*$; \eqref{eq:pf-opt-gap-bd:eq-1} is due to the bound on $\EE[\bracket{Nm(\mX_t) - N_t^* }^+]$ in \Cref{lem:majority_conformal}, and \eqref{eq:pf-opt-gap-bd:eq-2} directly follows from the definition of $\hid(\mx, 1)$. 
    We further bound the first two expressions in \eqref{eq:pf-opt-gap-bd:eq-2} in terms of $V(\mX_t)$ as follows: 
    \begin{align}
        \label{eq:pf-opt-gap-bd:eq-4}
        h_{\ID}(\mX_{t},1)&\leq h_{\ID}(\mX_{t},m(\mX_{t}))+L_h N (1-m(\mX_{t}))=V(\mX_{t}) \\
        \label{eq:pf-opt-gap-bd:eq-3}
        1-m(\mX_{t}) &\leq \frac{1}{L_h N}V(\mX_{t}),
    \end{align}
    where \eqref{eq:pf-opt-gap-bd:eq-4} is due to the Lipschitz continuity of $\hid(\mx, m)$ with respect to the parameter $m$ (Lemma~\ref{lem:hID drift}), and \eqref{eq:pf-opt-gap-bd:eq-3} is due to the definition of $V(\mx)$.  

    Combining the above calculations, we get
    \begin{align*}
       R^*(N,\bm{S}_0) - R(\pi,\bm{S}_0)
       &\leq  \lim_{T\to\infty} \frac{1}{T} \sum_{t=0}^{T-1} \Big(\frac{1}{N}\EE\mbracket{h_{\ID}(\mX_{t},1)} + 2r_{\max} \EE\mbracket{1-m(\mX_{t})} + \frac{2 r_{\max} K_{\conf}}{\sqrt{N}}   \Big) \\
       &\leq \frac{2r_{\max} + L_h}{L_h N}\lim_{T\to\infty}\frac{1}{T}\sum_{t=0}^{T-1}\EE\mbracket{V(\mX_{t})} + \frac{2 r_{\max} K_{\conf}}{\sqrt{N}}\,. \qedhere
    \end{align*}
\end{proof}

\subsection{Proof of Lemma~\ref{lem:drift-V}}
\label{app:pf-lem:drift-V}

\driftV*

\begin{proof}
We derive a recurrence relation between $\E{V(\mX_{t+1})}$ and $\E{V(\mX_t)}$, by bounding $\E{V(\mX_{t+1}) \givenplain \mX_t} - V(\mX_t)$. Specifically, observe that by the Lipschitz continuity of $\hid(\mX, D)$ with respect to $D$, we have
    \begin{align*}
        V(\mX_{t+1})
        &= h_{\ID}(\mX_{t+1},m(\mX_{t+1})) + L_hN(1-m(\mX_{t+1})) \\
        &\leq h_{\ID}(\mX_{t+1},m(\mX_t)) + L_hN(1-m(\mX_t)) + 2L_hN(m(\mX_t)-m(\mX_{t+1}))^+\,.
    \end{align*}
    Consequently, we have
    \begin{align}
        \nonumber
        &\EE\mbracket{V(\mX_{t+1}) \mid \mX_t}-V(\mX_t)\\
        \nonumber
        &\leq\EE\mbracket{h_{\ID}(\mX_{t+1},m(\mX_t)) \mid \mX_t}- h_{\ID}(\mX_t,m(\mX_t)) + 2L_hN\EE\mbracket{(m(\mX_t)-m(\mX_{t+1}))^+ \mid \mX_t}\\
        \label{eq:pf-thm:eq-1}
        &\leq \EE\mbracket{h_{\ID}(\mX_{t+1},m(\mX_t)) \mid \mX_t}- h_{\ID}(\mX_t,m(\mX_t)) + 2L_hK_{\mono}\sqrt{N}\,,
    \end{align}
    where the last inequality follows from \Cref{lem:non-shrinking}. To bound $\EE\mbracket{h_{\ID}(\mX_{t+1},m(\mX_t)) \mid \mX_t}$, observe that by \Cref{lem:majority_conformal}, all but $O(\sqrt{N})$ arms in $[Nm(\mX_t)]$ follow the optimal single-armed policies, so the drift condition of $\hid$ applies to this set of arms. As formalized in \Cref{lem:focus set drift}, we can thus show that
    \begin{align}
        \EE[h_{\ID}(\mX_{t+1},m(\mX_t))\givenplain \mX_t ]\leq \gamma h_{\ID}(\mX_t,m(\mX_t)) + \big(2 + K_{\conf}\big) C_h\sqrt{N}\,.\label{eq:pf-thm:invoke-focus-set-drift}
    \end{align}
    Plugging \eqref{eq:pf-thm:invoke-focus-set-drift} back to \eqref{eq:pf-thm:eq-1}, we get 
    \begin{equation}
    \label{eq:pf-thm:eq-2}
    \begin{aligned}
        \EE\mbracket{V(\mX_{t+1}) \mid \mX_t}-V(\mX_t)
        &\leq  -(1-\gamma) h_{\ID}(\mX_t,m(\mX_t))  \\
        &\mspace{23mu} + \left(2C_h+C_h K_{\conf} + L_hK_{\mono}\right)\sqrt{N}. 
    \end{aligned}
    \end{equation}
    To further bound $h_{\ID}(\mX_t,m(\mX_t))$ in \eqref{eq:pf-thm:eq-2} in terms of $V(\mX_t)$, we apply \Cref{lem:coverage} to get: 
    \begin{align}
        \nonumber
        V(\mX_t) &= h_{\ID}(\mX_t,m(\mX_t)) + L_hN(1-m(\mX_t))\\
        &\leq \Big(1+\frac{L_h}{\eta_c}\Big) h_{\ID}(\mX_t,m(\mX_t)) + L_hK_{\cov}\,. \label{eq:pf-thm:invoke-sufficient-coverage}
    \end{align}
    Substituting \eqref{eq:pf-thm:invoke-sufficient-coverage} into \eqref{eq:pf-thm:eq-2} and rearranging the terms, we get: 
    \begin{equation}
        \label{eq:pf-thm:drift-inequality}
        \EE\mbracket{V(\mX_{t+1}) \givenplain \mX_t}-V(\mX_t) \leq -\rhov V(\mX_t) + \Kv \sqrt{N}, 
    \end{equation}
    where $\rhov = (1-\gamma) / (1+\frac{L_h}{\eta_c})$, and 
    \[
        \Kv = 2C_h +C_h K_{\conf}+2L_hK_{\mono}+ \frac{\rhov L_h K_{\cov}}{\sqrt{N}}. 
    \]
    
    Taking the expectation in \eqref{eq:pf-thm:drift-inequality} and unrolling the recursion of $\E{V(\mX_t)}$, we get
    \begin{equation*}
        \E{V(\mX_{t})} \leq (1-\rhov)^t \E{V(\mX_0)} + \frac{\Kv\sqrt{N}}{\rhov}. 
    \end{equation*}
    Averaging over $T$ time steps, we have 
    \begin{equation}
        \label{eq:pf-thm:final-EV-bound-finite}
        \frac{1}{T} \sum_{t=0}^{T-1} \E{V(\mX_{t})} \leq \frac{\E{V(\mX_0)}}{\rhov T} + \frac{\Kv\sqrt{N}}{\rhov}.
    \end{equation}
    Taking the limit of $T\to\infty$, we can bound the long-run-averaged expectation of $\E{V(\mx)}$ as 
    \begin{equation}
        \label{eq:pf-thm:final-EV-bound}
        \lim_{T\to\infty} \frac{1}{T} \sum_{t=0}^{T-1} \E{V(\mX_{t})} \leq \frac{\Kv\sqrt{N}}{\rhov},
    \end{equation}
    which completes the proof. \qedhere
    \end{proof}

    By examining the proof, we can get a more explicit form of the constant $C_{\ID}$ in Theorem~\ref{thm:opt-gap-bound}: 
    \begin{itemize}%
        \item If $\cmax < \alpha_{\min}$, the budget constraints are vacuous and all arms will follow the ideal actions under the ID policy. Consequently, $R^*(N,\bm{S}_0) = R(\pi,\bm{S}_0)$ and we can simply take $C_{\ID} = 0$. 
        \item If $\cmax \geq \alpha_{\min}$, we combine the optimality gap bound in \Cref{lem:optimality gap} with \eqref{eq:pf-thm:final-EV-bound} to get
        \begin{equation*}
            R^*(N,\bm{S}_0)-R(\pi,\bm{S}_0)\leq \frac{(2r_{\max}+L_h)\Kv}{L_h \rhov \sqrt{N}} + \frac{2r_{\max} K_{\conf}}{\sqrt{N}}. 
        \end{equation*}
        We therefore obtain $R^*(N,\bm{S}_0)-R(\pi,\bm{S}_0) \leq C_{\ID} / \sqrt{N}$, where $C_{\ID}$ is given by
        \begin{align*}
             C_{\ID}&=\frac{\bracket{2\rmax + L_h}\bracket{1/L_h + 1/\eta_c}}{1-\gamma} \bracket{2C_h+ C_h K_{\conf}+2L_hK_{\mono}} \\
             &\mspace{23mu} + (2\rmax+L_h)K_{\cov} + 2 r_{\max} K_{\conf} \\
             &= O\left(\frac{K^5 \max\{r_{\max}, c_{\max}\}^7 \tau^4}{\alpha_{\min}^6}\right). 
        \end{align*} 
        In the last step above, we substituted in all intermediate constants from the proofs; one can verify that each of these constants depends on the problem parameters as follows: 
        \begin{align*}
            L_h &= O\big(\max\{\rmax, \cmax\}\tau\big) \\
            C_h &= O\big(K \max\{\rmax, \cmax\}\tau\big) \\
            \frac{1}{1-\gamma} &= O(\tau) \\
            \frac{1}{\eta_c} &= O\left(\frac{K \cmax}{\alpha_{\min}^2}\right) \\
            K_{\conf} &= O\left(\frac{K^2 \cmax^2}{\alpha_{\min}^2}\right) \\
            K_{\mono} &= O\left(\frac{K^4 \cmax^3 \max\{\rmax, \cmax\}}{\alpha_{\min}^4}\tau \right) \\
            K_{\cov} &= O\left(\frac{K \cmax \max\{\rmax, \cmax\}}{\alpha_{\min}^2} \tau\right). 
        \end{align*}
    \end{itemize}

\section{Proof of \Cref{prop:finite-time-bound}}
\label{app:finite-time-bound-pf}
    Now we briefly outline the proof of \Cref{prop:finite-time-bound}, which states that under the same conditions as \Cref{thm:opt-gap-bound}, for any $T\geq 1$, we have
    \begin{equation}
        \label{eq:prop-finite-time-bound-restate}
        R^*(N, \bm{S}_0) - \frac{1}{TN}\sum_{t=0}^{T-1}\sum_{i\in[N]}\E{r_i(S_{i,t}^\pi, A_{i,t}^\pi)} \leq \frac{C_\ID}{\sqrt{N}} + \frac{C_\finite}{T},
    \end{equation}
    where $C_\finite$ is another positive constant independent of $N$. 
    This follows from two steps:
    \begin{itemize}
    \item First, it is not hard to adapt the proof of \Cref{lem:optimality gap} in \Cref{app:pf-lem-opt-gap} to show that
    \begin{equation}
        \label{eq:pf-finite-time-bound-step-1}
        \begin{aligned}
            &R^*(N, \bm{S}_0) - \frac{1}{TN}\sum_{t=0}^{T-1}\sum_{i\in[N]}\E{r_i(S_{i,t}^\pi, A_{i,t}^\pi)} \\ 
            &\qquad \leq \left(1 + \frac{2r_{\max}}{L_h}\right) \frac{1}{T N}\sum_{t=0}^{T-1} \mathbb{E}[V(\bm{X}_t)] + \frac{2r_{\max}K_{\text{conf}}}{\sqrt{N}}.
        \end{aligned}
    \end{equation}
    Specifically, substituting \eqref{eq:pf-opt-gap-bd:eq-3} and 
    \eqref{eq:pf-opt-gap-bd:eq-4} into \eqref{eq:pf-opt-gap-bd:eq-2} implies that for any $t\geq 0$, we have
    \begin{equation}
        \rrel - \frac{1}{N}\sum_{i\in[N]}\E{r_i(S_{i,t}^\pi, A_{i,t}^\pi)}  \leq \left(1 + \frac{2r_{\max}}{L_h}\right) \frac{1}{N} \mathbb{E}[V(\bm{X}_t)] + \frac{2r_{\max}K_{\text{conf}}}{\sqrt{N}}.
    \end{equation}
    Averaging the above inequality over $t=0,1,2,\dots T-1$ and applying $R^*(N, \bm{S}_0) \leq \rrel$, we have \eqref{eq:pf-finite-time-bound-step-1}. 
    \item Second, we apply \eqref{eq:pf-thm:final-EV-bound-finite} in the proof of \Cref{lem:drift-V} to get
    \begin{align}
        \nonumber
            &R^*(N, \bm{S}_0) - \frac{1}{TN}\sum_{t=0}^{T-1}\sum_{i\in[N]}\E{r_i(S_{i,t}^\pi, A_{i,t}^\pi)}\\
        \nonumber
            &\qquad \leq \left(1 + \frac{2r_{\max}}{L_h}\right) \left(\frac{\E{V(\mX_0)}}{\rhov TN} + \frac{\Kv}{\rhov\sqrt{N}}\right) + \frac{2r_{\max}K_{\text{conf}}}{\sqrt{N}} \\
        \nonumber
            &\qquad = \left(1 + \frac{2r_{\max}}{L_h}\right) \frac{\E{V(\mX_0)}}{\rhov TN} + \frac{C_{\ID}}{\sqrt{N}} \\
            &\qquad \leq  \left(1 + \frac{2r_{\max}}{L_h}\right) \frac{L_h}{\rhov T} + \frac{C_{\ID}}{\sqrt{N}}.
    \end{align}
    where the last inequality is because of $V(\mX_0) \leq N L_h$ for any system state $\mX_0$. Letting $C_\finite =  \left(1 + \frac{2r_{\max}}{L_h}\right) \frac{L_h}{\rhov}$, we get \eqref{eq:prop-finite-time-bound-restate}. 
    \end{itemize}

\end{document}